\documentclass{article}

\usepackage{blindtext}
\usepackage{amsmath,amsthm,amssymb}
\newtheorem{theorem}{Theorem}[section]
\newtheorem{lemma}[theorem]{Lemma}

\newtheorem{definition}[theorem]{Definition}
\newtheorem{proposition}[theorem]{Proposition}
\newtheorem{remark}[theorem]{Remark}
\newtheorem{assumption}{Assumption}
\newtheorem{example}{Example}

\providecommand{\norm}[1]{\left\lVert#1\right\rVert}

\providecommand{\R}{\mathbb{R}} %

\providecommand{\E}{{\mathbb E}}
\providecommand{\E}[1]{{\mathbb E}\left.#1\right. }        %
\providecommand{\Eb}[1]{{\mathbb E}\left[#1\right] }       %
\providecommand{\EEb}[2]{{\mathbb E}_{#1}\left[#2\right] } %

\DeclareMathOperator*{\argmin}{arg\,min}

\providecommand{\vv}{\mathbf{v}}

\providecommand{\xx}{\mathbf{x}}
\providecommand{\yy}{\mathbf{y}}
\providecommand{\zz}{\mathbf{z}}

\providecommand{\mA}{\mathbf{A}}
\providecommand{\mB}{\mathbf{B}}
\providecommand{\mC}{\mathbf{C}}

\providecommand{\mH}{\mathbf{H}}
\providecommand{\mI}{\mathbf{I}}

\providecommand{\mU}{\mathbf{U}}

\providecommand{\mX}{\mathbf{X}}

\providecommand{\mxi}{\boldsymbol{\xi}}

\providecommand{\momega}{\boldsymbol{\omega}}
\providecommand{\mdelta}{\boldsymbol{\delta}}
\providecommand{\mvarsigma}{\boldsymbol{\varsigma}}
\providecommand{\mnu}{\boldsymbol{\nu}}

\providecommand{\cD}{\mathcal{D}}

\providecommand{\cN}{\mathcal{N}}
\providecommand{\cO}{\mathcal{O}}

\providecommand{\cR}{\mathcal{R}}
\providecommand{\cS}{\mathcal{S}}

\providecommand{\cX}{\mathcal{X}}

\newenvironment{talign}
{\align}
{\endalign}
\newenvironment{talign*}
{\csname align*\endcsname}
{\endalign}
\usepackage{algorithm}
\usepackage[noend]{algpseudocode}

\newcommand{\mycaptionof}[2]{\captionof{#1}{#2}}

\errorcontextlines\maxdimen

\makeatletter
\newcommand*{\algrule}[1][\algorithmicindent]{\makebox[#1][l]{\hspace*{.5em}\thealgruleextra\vrule height \thealgruleheight depth \thealgruledepth}}%
\newcommand*{\thealgruleextra}{}
\newcommand*{\thealgruleheight}{.75\baselineskip}
\newcommand*{\thealgruledepth}{.25\baselineskip}

\newcount\ALG@printindent@tempcnta
\def\ALG@printindent{%
	\ifnum \theALG@nested>0
	\ifx\ALG@text\ALG@x@notext
	\else
		\unskip
		\addvspace{-1pt}
		\ALG@printindent@tempcnta=1
		\loop
		\algrule[\csname ALG@ind@\the\ALG@printindent@tempcnta\endcsname]%
		\advance \ALG@printindent@tempcnta 1
		\ifnum \ALG@printindent@tempcnta<\numexpr\theALG@nested+1\relax
			\repeat
		\fi
	\fi
}%
\usepackage{etoolbox}
\patchcmd{\ALG@doentity}{\noindent\hskip\ALG@tlm}{\ALG@printindent}{}{\errmessage{failed to patch}}
\makeatother

\newbox\statebox
\newcommand{\myState}[1]{%
	\setbox\statebox=\vbox{#1}%
	\edef\thealgruleheight{\dimexpr \the\ht\statebox+1pt\relax}%
	\edef\thealgruledepth{\dimexpr \the\dp\statebox+1pt\relax}%
	\ifdim\thealgruleheight<.75\baselineskip
		\def\thealgruleheight{\dimexpr .75\baselineskip+1pt\relax}%
	\fi
	\ifdim\thealgruledepth<.25\baselineskip
		\def\thealgruledepth{\dimexpr .25\baselineskip+1pt\relax}%
	\fi
	\State #1%
	\def\thealgruleheight{\dimexpr .75\baselineskip+1pt\relax}%
	\def\thealgruledepth{\dimexpr .25\baselineskip+1pt\relax}%
}
\usepackage{PRIMEarxiv}

\usepackage{url}
\usepackage{hyperref}
\usepackage[utf8]{inputenc}
\usepackage[T1]{fontenc}
\usepackage{booktabs}
\usepackage{amsfonts}       %
\usepackage{nicefrac}       %
\usepackage{microtype}      %
\usepackage{xcolor}
\usepackage[flushleft]{threeparttable}
\usepackage{color}
\usepackage{mathtools}

\usepackage{url}
\usepackage{float}
\usepackage{subfigure}
\usepackage{graphicx}
\usepackage{multirow}
\usepackage{xspace}
\usepackage{natbib}
\usepackage{enumitem}
\usepackage[font=small]{caption}
\usepackage{diagbox}
\usepackage{wrapfig}
\usepackage{cases}
\usepackage[toc, page, header]{appendix}
\setcounter{tocdepth}{0} 

\definecolor{myblue}{rgb}{0,0.45,0.74}
\definecolor{myred}{rgb}{0.85,0.33,0.1}

\usepackage{xspace}  
\usepackage{bm}

\newcommand{\algopt}{CFL\xspace} 
\newcommand{\fedavg}{FedAvg\xspace}

\usepackage{lastpage}

\title{Towards Federated Learning on Time-Evolving Heterogeneous Data}

\author{YongXin Guo~$^*$ \\
CUHK (SZ), P.R. China \\
\texttt{yongxinguo@link.cuhk.edu.cn}
\And
Tao Lin \thanks{Equal contribution.}\\
Westlake University, P.R. China \\
\texttt{lintao@westlake.edu.cn} \\
\And
Xiaoying Tang \\
CUHK (SZ), P.R. China \\
\texttt{xiaoyingtang@cuhk.edu.cn}
}

\begin{document}


\newcommand{\fix}{\marginpar{FIX}}
\newcommand{\new}{\marginpar{NEW}}

\maketitle

\begin{abstract}
	Federated Learning (FL) is a learning paradigm that protects privacy by keeping client data on edge devices. However, optimizing FL in practice can be difficult due to the diversity and heterogeneity of the learning system. Despite recent research efforts to improve the optimization of heterogeneous data, the impact of time-evolving heterogeneous data in real-world scenarios, such as changing client data or intermittent clients joining or leaving during training, has not been studied well.

	In this work, we propose Continual Federated Learning (CFL), a flexible framework for capturing the time-evolving heterogeneity of FL. CFL can handle complex and realistic scenarios, which are difficult to evaluate in previous FL formulations, by extracting information from past local data sets and approximating local objective functions. We theoretically demonstrate that CFL methods have a faster convergence rate than \fedavg in time-evolving scenarios, with the benefit depending on approximation quality. Through experiments, we show that our numerical findings match the convergence analysis and that CFL methods significantly outperform other state-of-the-art FL baselines.
\end{abstract}


\section{Introduction}
Federated Learning (FL) has recently emerged as a critical distributed machine learning paradigm to preserve user/client privacy.
Clients engaged in the training process of FL only communicate their local model parameters, rather than their private local data, with the central server.

As the workhorse algorithm in FL, \fedavg~\citep{mcmahan2017communication} performs multiple local stochastic gradient descent (SGD) updates on the available clients before communicating with the server.
Despite its success, \fedavg suffers from the large heterogeneity (non-iid-ness) in the data presented on the different clients, causing drift in each client's updates and resulting in slow and unstable convergence~\citep{karimireddy2020scaffold}.
To address this issue, a new line of study has been suggested lately that either simulates the distribution of the whole data set using preassigned weights of clients~\citep{wang2020tackling,ReisizadehFPJ20robust,MohriSS19agnostic,LiSBS20fair} or adopts variance reduction methods~\citep{karimireddy2020scaffold,karimireddy2020mime,das2020faster,haddadpour2021federated}.

\begin{figure}[!t]
	\centering
	\includegraphics[width=.375\textwidth,]{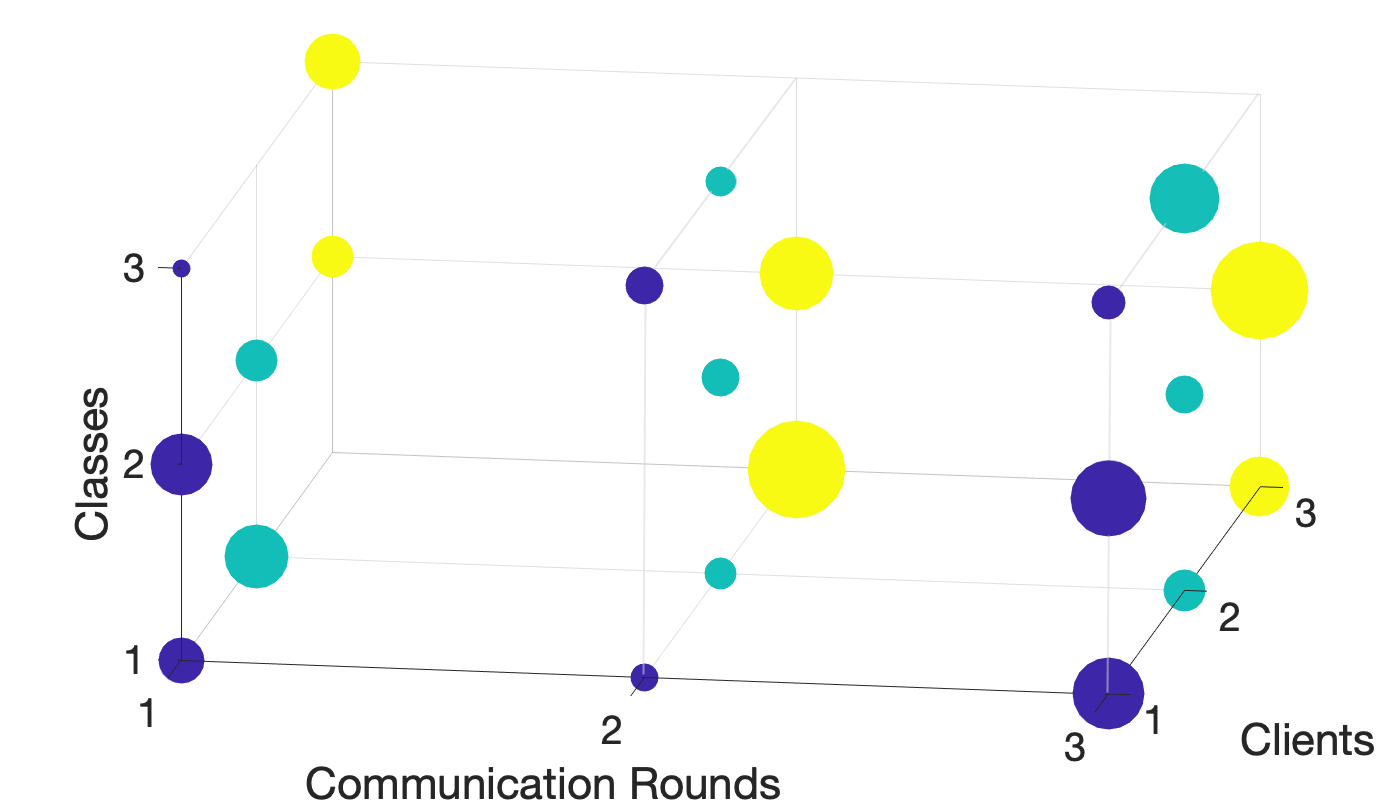}
	\caption{{Time-varying heterogeneous local data distributions.}
		The size of each ball represents the number of samples that correspond to the class.
		Different colors represent different clients.
	}
	\label{time evolving figure}
\end{figure}

However, the FL formulation in these approaches always assumes a fixed data distribution among clients throughout all training rounds,
while in practice this assumption does not always hold because of the complex, uncontrolled, and unpredictable client behaviors.
Local data sets, for example, often vary over time, and intermittent clients can join or depart during training without prior notice. As shown in Figure~\ref{time evolving figure}, the local distribution of one client can also change significantly for different rounds.

The difficulty in addressing the time-evolving heterogeneity of FL lies in the stateless nature of the local data sets, which includes the unpredictable future data sets and the impossibility of retaining all prior local data sets.
To this end, we first provide a novel Continual Federated Learning (CFL) formulation and then propose a unified CFL framework as our solution.
This framework encompasses a variety of design choices for approximating
the local objective functions in the previous training rounds,
with the difference between actual and estimated local object functions described as \emph{information loss} for further analysis.

To capture the time-evolving data heterogeneity in the CFL framework, we expand the theoretical assumption on the client drift---which has been extensively utilized recently
in previous studies~\citep{karimireddy2020scaffold,khaled2020tighter,li2020on}---to include both client drift and \emph{time drift}.
This allows us to quantify the difference between local client objective function and global objective function for the considered time-evolving heterogeneous clients.

We provide convergence rates for our unified CFL framework in conjunction with information loss and new models of both client and time drifts.
Our analysis reveals a faster convergence of CFL framework than \fedavg (under time-evolving scenario),
with the benefit dependent on approximation accuracy.
The rate of CFL can simply recover the rate of \fedavg (under traditional FL scenario) and Continual Learning: the rates of \fedavg obtained from our framework is consistent with previous work, while a similarly simplified rate on Continual Learning (CL)~\citep{french1999catastrophic,kirkpatrick2017overcoming} is novel.
Finally, in extensive empirical results, we demonstrate that CFL methods---stemmed from the CFL framework with different approximation techniques---significantly outperform the SOTA FL competitors on various simulation data sets and realistic data sets for FL settings with time-evolving heterogeneous data.
These numerical observations corroborate our theoretical findings.

We summarize our key contributions:
\begin{itemize}
	\item

	      We present a unified framework, termed Continual Federated Learning (CFL), together with a novel client and time drift modeling approach, to capture complex FL scenarios involving time-evolving heterogeneous data, including standard cross-device FL scenario, stateless scenario, and local data set overlapping.
	      This is the first theoretical study, to our knowledge, that describes the time-evolving nature of FL.

	\item We provide rigorous convergence analysis for the CFL methods.
	      Our theoretical analysis explains the faster and more stabilized optimization of the CFL methods over that of \fedavg under time-evolving scenarios, and conjecture their variance reduction effect.
	      In addition, we provide tight convergence rates for standalone CL methods: to the best of our knowledge, we are the first to provide such guarantees for CL on SGD.
	      %
	\item We thoroughly examining some conventional CL methods under the scope of CFL framework: the insights therein offer a valuable practical guideline. We believe filling such gaps is valuable to the FL community and has not been done yet.
	      We demonstrate the efficacy and necessity of CFL methods over the SOTA FL baselines across a range of time-evolving heterogeneous data scenarios and data sets.
\end{itemize}

\section{Related Work}
\subsection{Federated Learning}
\fedavg~\citep{mcmahan2017communication,lin2020dont} is the de facto standard FL algorithm, in which multiple local SGD steps are executed on the available clients to alleviate the communication bottleneck.
While communication efficient, heterogeneity, such as system heterogeneity~\citep{li2018federated,wang2020tackling,mitra2021achieving,diao2021heterofl} and statistical/objective heterogeneity~\citep{li2018federated,wang2020tackling,mitra2021achieving,lin2020ensemble,karimireddy2020scaffold,karimireddy2020mime}, results in inconsistent optimization objectives and drifted clients models, impeding federated optimization considerably.

A line of work has been proposed to address the heterogeneity in FL.
FedProx~\citep{li2018federated} adds the regularization term on the distance of local and global models when performing local training---similar formulations can be found in other recent FL works~\citep{hanzely2020federated,dinh2020personalized,li2021ditto} for various purposes.
To address the issue of objective heterogeneity e.g.\ caused by heterogeneous data, works like SCAFFOLD~\citep{karimireddy2020scaffold,mitra2021achieving} introduce the idea of variance reduction on the client local update steps.
FedNova~\citep{wang2020tackling} further proposes a general framework for unifying FedAvg and FedProx, and argues that while averaging, local updates should be normalized to minimize heterogeneity induced by different number of local update steps.
However, most of these prior works focus on the fixed heterogeneity across clients and throughout the entire optimization procedure; we instead consider the novel scenario with time-evolving data heterogeneity.

The theoretical study on the convergence of \fedavg can date back to the parallel SGD analysis on the identical functions~\citep{zinkevich2010parallelized} and recently is improved by~\cite{stich2018local,stich2020error,patel2019communication,khaled2020tighter,woodworth2020local}.
For the analysis of heterogeneous data,~\citet{li2020on} first give the convergence rate of FedAvg on non-iid data sets with random selection, assuming that the client optimum are $\epsilon$-close.
Recent studies~\citep{woodworth2020minibatch,khaled2020tighter} give tighter convergence rates under the assumption of bounded gradient drift.
All above works give a $\cO \left (1 / T \right)$ convergence rate for convex local objective functions.
More recently, a series of works~\citep{karimireddy2020scaffold,koloskova2020unified} give the convergence analysis of local SGD for non-convex objective functions under bounded gradient noise assumptions, and obtain a $\cO ( 1 / \sqrt{T} )$ convergence rate.
Our theoretical analysis framework covers more challenging time-evolving data heterogeneity in FL, which has not been considered in the community yet---our rate can be simplified to the standard FL scenario, matching the tight analysis in previous works~\citep{karimireddy2020scaffold}.

\subsection{Continual Learning}
Continual learning, also known as incremental learning or lifelong learning, aims to learn from (time-evolving) sequential data while avoiding the problem of \emph{catastrophic forgetting}~\citep{french1999catastrophic,kirkpatrick2017overcoming}.
There exists a large amount of works, from the perspectives of regularization~\citep{kirkpatrick2017overcoming,li2017learning,zenke2017continual}, experience replay~\citep{castro2018end,rebuffi2017icarl}, and dynamic architectures~\citep{maltoni2019continuous,rusu2016progressive}.
In this paper, we examine both regularization and experience replay based methods,
and compare their empirical performance in depth.  \\
Despite the empirical success, the theoretical analysis of CL is limited: only a recent preprint~\citep{yin2020optimization} provides a viewpoint of regularization-based continual learning, and only for the single worker scenario.
In this work, we provide tight convergence analysis for both CL and CFL on SGD.

\subsection{Continual Federated Learning}
To our knowledge, the scenario of CFL was originally described in~\citet{bui2018partitioned} in order to federated train Bayesian Neural Network and continually learn for Gaussian Process models---it is orthogonal to our optimization aspect in this paper.
FedCurv~\citep{shoham2019overcoming} blends EWC~\citep{kirkpatrick2017overcoming} regularization with \fedavg, and empirically shows a faster convergence.
FedWeIT~\citep{yoon2021federated} extends the regularization idea, with a focus on selective inter-client knowledge transfer for task-adaptive parameters.
In another study,~\citet{zhu2021diurnal} investigate task-specific parameters and only considers the scenario in which data undergo periodic changes.
In a very recent parallel (and empirical) work,
CDA-FedAvg~\citep{casado2021concept} uses locally maintained long-term data samples, with asynchronous communication.
FLwF-2T~\citep{usmanova2021distillationbased} employs a distillation-based approach and focuses on class incremental scenarios.
In~\citet{xu2022acceleration}, a distillation-based method is utilized, incorporating augmented data for distillation.
FedDrift~\citep{jothimurugesan2022federated} focuses on the concept drift problem and proposes a multi-model approach to address it.
Our theoretically sound CFL framework covers the regularization component of FedCurv and the core set part of CDA-FedAvg, and is orthogonal to the neural architecture manipulation idea in FedWeIT.

\section{Continual Federated Learning Framework}

\subsection{Formulation} \label{sec:Formulation}
\subsubsection{Conventional FL Formulation}
The standard FL typically considers a sum-structured distributed optimization problem as below
\begin{align}
	\textstyle
	f^{\star} = \min_{\momega \in \R^d} \left[ f(\momega) := \sum_{i=1}^{N} p_i f_i(\momega) \right] \, ,
	\label{Conventional FL Formulation}
\end{align}
where the objective function $f(\momega): \R^d \rightarrow \R$ is the weighted sum of the local objective functions $f_i(\momega) := \EEb{\cD_i}{F_i (\momega)}$ of $N$ nodes/clients, and $p_i$ is the weight of client $i$.

In practice, it may be infeasible to select all clients each round, especially for cross-device setup~\citep{mcmahan2017communication,kairouz2019advances}.
The standard \fedavg then randomly selects $S$ clients to receive the model parameters from the server ($S \leq N$) in each communication round and performs $K$ local SGD update steps in the form of $\momega_{t, i, k } = \momega_{t, i, k - 1} - \eta_l (\nabla f_i(\momega_{t, i, k-1}) + \mnu_{t, i,k-1})$, with local step-size $\eta_l$ and gradient noise $\mnu_{t, i,k-1}$.
The selected clients then communicate the updates $\Delta \momega_{t,i} = \momega_{t, i, K} - \momega_t$ with the server for the model aggregation: $\momega_{t+1} =\momega_t - \frac{\eta_g}{S} \sum_{i=1}^{S} \Delta \momega_{t,i}$.

\subsubsection{Continual FL Formulation}
Despite its wide usage, the standard FL formulation given in Equation~\eqref{Conventional FL Formulation} cannot properly reflect actual time-evolving scenarios, such as local data sets changing over time or intermittent clients joining or leaving during training.
To address this problem, we propose the Continual Federated Learning (CFL) formulation
\begin{align}
	\textstyle
	f^{\star} = \min_{\momega \in \R^d} \left[ f(\momega) := \sum_{t=1}^{T} \sum_{i \in \cS_{t}} p_{t, i} f_{t, i} (\momega) \right] \,,
	\label{CFL Formulation}
\end{align}
where $f_{t, i} (\momega)$ represents the local objective function of client $i$ at time $t$. $\cS_{t}$ is a subset of clients sampled from all clients set $\Omega$, where $|\cS_{t}| = S$.
For the time-evolving scenarios, client $i$ could have different local objective functions $f_{t, i} (\momega)$ due to the changing local data sets on different $t$.

\subsection{Approximation of CFL}

The challenge of addressing CFL formulation (Equation~\ref{CFL Formulation}) stems from the stateless nature of the local data sets,
which include unpredictable future data sets, as well as the difficulty of keeping all the previous local data sets.
The original definition of CFL formulation (Equation~\ref{CFL Formulation}) is theoretically and empirically infeasible.
To handle the second case (while ignoring the intractable former), a straightforward approach is to approximate the prior local objective functions~\citep{kirkpatrick2017overcoming,zenke2017continual,li2017learning}, which may be accomplished by retraining information from earlier rounds in accordance with privacy protection standards.
Thus, we can express the approximated CFL formulation as
\begin{align}
	\textstyle
	\tilde{f_t}^{\star} = \min_{\momega \in \R^d} \left[
		\sum_{i \in \cS_t} p_{t, i} f_{t, i} (\momega)
		+ \sum_{\tau=1}^{t-1} \sum_{i \in \cS_t} p_{\tau, i} \tilde{f}_{\tau, i}(\momega) \right] \, ,
	\label{Approximate CFL Formulation}
\end{align}
where $\tilde{f}_{t, i} (\momega)$ denotes the approximated local objective function of client $i$ at time $t$.
In practice, different approximation methods can be used to calculate $\tilde{f}_{\tau, i}(\momega)$, and we refer the detailed illustration and discussions of these approximation algorithms in Section \ref{Approximation techniques in CFL}, Section \ref{sec:results}, and Appendix \ref{sec:Approximation Methods Appendix}.

We give the formal definition of CFL framework in Algorithm \ref{CFL Algorithm Framework}.
CFL methods are a collection of methods that make use of different approximation techniques and are based on Algorithm \ref{CFL Algorithm Framework}.
We retrieve the objective function of 
Continual Learning (CL) by setting $S = 1$ in \eqref{Approximate CFL Formulation}.
\begin{algorithm}[t]

	\begin{algorithmic}[1]
		\Require{initial weights $\momega_0$, global learning rate $\eta_g$, local learning rate $\eta_l$, number of training rounds $T$.}
		\Ensure{ trained weights $\momega_T$.}
		\For{round $t = 1, \ldots, T$}
		\myState{\textit{communicate} $\momega_t$ to the chosen clients.}
		\For{\textit{client} $i \in \cS_t$ \textit{in parallel}}
		\myState{initialize local model $\momega_{t,i,0} = \momega_t$.}
		\For{$k = 1, \ldots, K$}
		\myState{Calculate stochastic gradient $\tilde{g}_{t,i,k}$ by \eqref{Approximate CFL Formulation}.}
		\myState{$\momega_{t,i,k} \gets \momega_{t,i,k-1} - \eta_l \tilde{g}_{t,i,k}$.}
		\EndFor
		\myState{\textit{communicate} $\Delta \momega_{t,i} \gets \momega_{t,i,K} - \momega_t$.}
		\EndFor
		\myState{$\Delta \momega_t \gets \frac{\eta_g}{S} \sum_{i \in \cS_t} \Delta \momega_{t,i}$.}
		\myState{$\momega_{t+1} \gets \momega_t + \Delta \momega_t$.}
		\EndFor
	\end{algorithmic}
	\mycaptionof{algorithm}{ Approximated CFL Framework}
	\label{CFL Algorithm Framework}
\end{algorithm}

Take note that in most cases, the previous local object functions cannot be properly approximated.
Due to the fact that such approximation impairs optimization, we define the information loss below.
\begin{definition}[Information Loss]
	Let $\Delta_{t, i}(\momega)$ be the information loss between the approximated local objective function $\tilde{f}_{t, i} (\momega)$ and the real local objective function $f_{t, i} (\momega)$. This information loss is defined as
	\begin{align}
		\Delta_{t, i}(\momega) = \nabla f_{t, i}(\momega) - \nabla \tilde{f}_{t, i}(\momega) \,. \label{information loss definition}
	\end{align}
\end{definition}
In practice, we use $\norm{\Delta_{t, i}(\momega)}_2$ to measure the information loss of different approximation methods.
We show large information loss can impede the convergence theoretically (c.f.\ Theorem \ref{Convergence rate of CFL}) and empirically (e.g.\ Figure~\ref{Information Loss Figure} in Section \ref{sec:results}).

\subsection{Gradient Noise Model}
To analyze Equation~\eqref{Approximate CFL Formulation} and Algorithm \ref{CFL Algorithm Framework} in-depth, we propose the Gradient Noise Model (Equation~\ref{Gradient Noise in CFL}) to capture the dynamics of performing SGD with data heterogeneity.
We first recap the standard definition of gradient noise in previous works~\citep{ref1,gower2019sgd}.

\textit{Gradient noise in SGD.} For objective function $f(\momega)$, the gradient with stochastic noise can be defined as
$\nabla f(\momega) = g(\momega) + \mnu$,
where $g(\momega)$ is the stochastic gradient, and $\mnu$ is a zero-mean noise.

\textit{Client drift in FL.}
To analyze the impact of data heterogeneity, recent works~\citep{karimireddy2020scaffold,khaled2020tighter,li2020on} similarly use the gradient noise to capture the distribution drift between local objective function and global objective function
\begin{align*}
	\textstyle
	\nabla f(\momega) = \nabla f_i(\momega) + \mdelta_{i} \, ,
\end{align*}
where $\mdelta_i$ is a zero-mean random variable which measures the gradient noise of client $i$.

\textit{Client drift and time drift in CFL framework.}
Considering the distribution drift in the dimension of client and time, we further modify the gradient noise model in FL as
\begin{align}
	\textstyle
	\nabla f(\momega) = \nabla f_{t,i}(\momega) + \mdelta_{i} + \mxi_{t,i} \, . \label{Gradient Noise in CFL}
\end{align}
Here we extend the gradient noise to two terms: $\mdelta_{i}$ and $\mxi_{t, i}$.
$\mdelta_{i}$ is a time independent (of $t$), zero-mean random variable that measures the drift of client $i$.
The zero-mean $\mxi_{t, i}$ measures the drift of client $i$ at time $t$:
We assume a fixed underlying client distribution for each client $i$, and the time drift is only the additive noise for the given client drift. For each client $i$, an time-drift will be sampled for each time step $t$ and added on top of the fixed client drift to capture the time-evolving nature of the clients’ local data sets. Notice that the random variables are nature to be zero-mean because we assume the global objective $f(\momega) = \Eb{f_{t,i}(\momega)}$ as in most FL works~\citep{karimireddy2020scaffold,karimireddy2020mime,mcmahan2017communication}. We use Assumption \ref{Bounded gradient noise of CFL assumption} to construct the non-iidness.

\begin{remark}
	To simplify the study and address the cross-device FL settings, we assume that different $\mxi_{t, i}$ are independent.
	A more complicated change pattern of local data may arise in cross-silo FL scenarios in terms of time-dependent $\mxi_{t, i}$, resulting in overlapped local data sets.
	The formulation of these time-dependent scenarios would need some scenario-specific assumptions, and we give some analysis in Section~\ref{sec:time_dependent_drifts}. Besides, to narrow the gap, in Section~\ref{sec:results}, we experimentally analyze the performance of CFL approaches with overlapped time-evolving local data, and the findings are consistent with our theoretical analysis.
	\label{shortcoming remark}
\end{remark}

\subsection{Assumptions}
To ease the theoretical analysis of CFL (Equation~\ref{CFL Formulation}) framework, we use the following widely used assumptions.

\begin{assumption}[Smoothness and convexity]
	\label{Smoothness and convexity assumption}
	Assume local objective functions $f_{t, i} (\momega)$ are $L$-smooth and $\mu$-convex. This means that the following inequality holds for all $\momega_1, \momega_2$: $\frac{\mu}{2} \norm{\momega_1 - \momega_2}^2 \le f_{t, i} (\momega_1) - f_{t, i} (\momega_2) - \langle \nabla f_{t, i} (\momega_2), \momega_1 - \momega_2 \rangle \le \frac{L}{2} \norm{\momega_1 - \momega_2}^2$.
\end{assumption}

A widely used corollary is that if a function $f_{t, i}$ is both $L$-smooth and $\mu$-convex, then it satisfies $\frac{1}{2L} \norm{ \nabla f_{t, i}(\xx) - \nabla f_{t, i}(\yy) }^{2} \le f_{t, i}(\xx) - f_{t, i}(\yy) - \nabla f_{t, i}(\xx)^{T}(\xx - \yy)$, and $L \ge \mu$.

\begin{assumption}[Bounded noise in stochastic gradient]
	\label{Bounded noise in stochastic gradient assumption}
	Let $g_{t, i, k}(\momega) = \nabla f_{t, i, k}(\momega) + \mnu_{t, i, k}$, where $\mnu_{t, i, k}$ is the stochastic noise of client $i$ on round $t$ at $k$-th local update step.
	We assume that $\Eb{ \mnu | \momega } = 0$, and $\Eb{ \norm{ \mnu }^{2} | \momega} \le \sigma^{2}$.
\end{assumption}

Assumption~\ref{Smoothness and convexity assumption} and~\ref{Bounded noise in stochastic gradient assumption} are common assumptions in FL~\citep{karimireddy2020scaffold,li2020on}.
A relaxed assumption, such as bounded noise at the optimum, as discussed in recent works~\citep{khaled2020tighter}, is an interesting direction for future research.
Besides, we introduce the following assumptions to better characterize the client and time drifts in CFL framework (Equation~\ref{Gradient Noise in CFL}).

\begin{assumption}[Bounded gradient drift of CFL framework]
	\label{Bounded gradient noise of CFL assumption}
	Let $f(\momega) = \nabla f_{t,i}(\momega) + \mdelta_{i} + \mxi_{t,i}$, where $\mdelta_{i}$  independent on client $i$ measures client drift, and $\mxi_{t,i}$ independent on time $t$ and indicates the time drift of $\{t, i\}$.
	We assume that $\Eb{ \mdelta | \momega } = 0$, $\Eb{ \mxi | \momega } = 0$, and thus $\Eb{ \norm{ \mdelta }^2 | \momega} \le G^2 + B^2 \E{ \norm{\nabla f(\momega) }^2 }$ and $\Eb{ \norm{ \mxi }^2 | \momega} \le D^2 + A^2 \E{ \norm{ \nabla f(\momega) }^2 }$.
\end{assumption}

Assumption~\ref{Bounded gradient noise of CFL assumption} assumes the bounded $\Eb{\norm{ \mdelta }^2 | \momega}$ and $\Eb{ \norm{ \mxi }^2 | \momega}$ in CFL framework, which is equivalent to the widely used $(G, B)$ gradient drift assumption\footnote{
	The $(G, B)$ gradient drift: $\Eb{ \norm{ \nabla f_i (\momega) }^2 } \le G^2 \Eb{ \norm{ \nabla f(\momega) }^2 } + B^2$, where $G^2 \ge 1$ and $B^2 \ge 0$.
}~\citep{gower2019sgd,karimireddy2020scaffold} on the bounded $\Eb{ \norm{ \nabla f_i (\momega) }^2 }$.
We prove this claim in Appendix ~\ref{sec:Proof of lemma bound of distribution drift}.
Notice that when $D > 0$, or $G > 0$, the clients' local optimum at round $t$, $\momega_{i, t}^{*}$ will be different from global optimum $\momega^{*}$. Besides, as illustrated before, $\Eb{ \mdelta | \momega } = 0$ and $\Eb{ \mxi | \momega } = 0$ are nature due to the assumption of global objective $f(\momega) = \Eb{f_{t,i}(\momega)}$ as in most FL works~\citep{karimireddy2020scaffold,karimireddy2020mime,mcmahan2017communication}.

\begin{assumption}[Bounded information loss] \label{Bounded information loss assumption}
	Let $R$ be a non-negative real number and $\momega$ be a vector. We assume that the information loss $\Delta_{t, i}(\momega)$ satisfies $| \Delta_{t, i}(\momega) | \le R$.

\end{assumption}

\begin{remark}

	In Assumption~\ref{Bounded information loss assumption}, the exact information loss may be difficult to calculate for certain approximation methods, such as generative replay. In Lemma~\ref{bounded approximation error} of Appendix~\ref{sec:proof of bounded approximation error}, we provide a detailed analysis of information loss for Taylor extension based regularization methods and provide a more general case in the main paper.
\end{remark}

\section{Theoretical results} \label{sec:theoretical_rates}
\begin{table*}[!t]

	\centering
	\resizebox{1.\textwidth}{!}{%
		\begin{tabular}{l l l l}
			\toprule

			\textit{Algorithm}             &                                                                                                                                & \textit{Strongly Convex}                                                                                                                                             & \textit{General Convex}                                                                                                                                                         \\
			\midrule
			\textit{SGD}                   &
			                               & $\frac{\sigma^2}{\mu NK \varepsilon} + \frac{1}{\mu}$                                                                          & $\frac{\sigma^2}{NK \varepsilon^2} + \frac{1}{\varepsilon}$                                                                                                                                                                                                                                                                                            \\ \midrule
			\textit{FedAvg}                &
			~~~\citet{li2020on}            & $\frac{\sigma^2}{\mu^2 NK \varepsilon} + \frac{(G^2 + D^2) K}{\mu^2 \varepsilon}$                                              & -                                                                                                                                                                                                                                                                                                                                                      \\
			                               & ~~~\citet{khaled2020tighter}
			                               & $\frac{\sigma^2 + G^2 + D^2}{\mu N K \varepsilon} + \frac{\sigma + G + D}{\mu \sqrt{\varepsilon}} + \frac{N (A^2 + B^2)}{\mu}$ & $\frac{\sigma^2 + G^2 + D^2}{N K \varepsilon^2} + \frac{\sigma + G + D}{\mu \varepsilon^{\frac{3}{2}}} + \frac{N (A^2 + B^2)}{\varepsilon}$                                                                                                                                                                                                            \\
			                               & ~~~\citet{karimireddy2020scaffold}                                                                                             & $\frac{\sigma^2}{\mu N K \varepsilon} + \frac{G + D}{\mu \sqrt{\varepsilon}} + \frac{c_{p_B}}{\mu}$                                                                  & $\frac{\sigma^2}{KN \varepsilon^2} + \frac{G + D}{\varepsilon^{\frac{3}{2}}} + \frac{c_{p_B} (D^2 + G^2)}{\varepsilon}$                                                         \\

			                               & ~~~\citet{woodworth2020minibatch}                                                                                              & $\frac{\sqrt{c_{p_B}})}{K \sqrt{\mu \epsilon}} + \frac{\sigma^2 c_{p_B}}{N K \epsilon^2} + \frac{G + D}{\mu \sqrt{\epsilon}} + \frac{\sigma}{\mu \sqrt{K \epsilon}}$ & $\frac{c_{p_B}}{K \epsilon} + \frac{\sigma^2 c_{p_B}}{NK \epsilon^2} + \frac{(G + D) c_{p_B}}{\epsilon^{\frac{3}{2}}} + \frac{\sigma c_{p_B}}{\sqrt{K} \epsilon^{\frac{3}{2}}}$ \\

			                               & ~~Ours                                                                                                                         & $\frac{\sigma^2}{\mu NK \varepsilon} + \frac{G^2 + D^2}{\mu \varepsilon}
				+ \frac{c_{p_B}}{\mu}$
			                               & $\frac{c_{p_B}}{\varepsilon} + \frac{\sigma^2}{ N K \varepsilon^2} + \frac{G^2 + D^2}{\varepsilon^2}
			$
			\\
			\midrule
			\textit{CL}                    &
			~~~\citet{yin2020optimization} & $\frac{\mu}{\varepsilon}$ (GD)                                                                                                 & -                                                                                                                                                                                                                                                                                                                                                      \\
			                               & ~~Ours                                                                                                                         &
			$\frac{1 + A^2}{\mu}
				+ \frac{\sigma^2}{\mu K \epsilon}
				+ \frac{c_A}{\mu \epsilon}
			$
			                               &
			$
				\frac{c_{p_B}}{\varepsilon}
				+ \frac{\sigma^2}{K \varepsilon^2} + \frac{c_A}{\varepsilon^2}
			$
			\\
			\midrule
			\textit{CFL}                   &
			~~Ours                         & $\frac{c_{p_B}}{\mu} + \frac{\sigma^2}{\mu N K \epsilon}
				+ \frac{c_A + G^2}{\mu \epsilon}
			$
			                               &
			$
				\frac{c_{p_B}}{\varepsilon}
				+ \frac{\sigma^2}{N K \varepsilon^2}
				+ \frac{c_A + G^2}{\varepsilon^2}
			$                                                                                                                                                                                                                                                                                                                                                                                                                                                                                                                        \\

			\bottomrule
		\end{tabular}%

	}
	\caption{
		{Number of communication rounds required to reach $\varepsilon + \varphi$ accuracy} for $\mu$ strongly convex and general convex functions under time-evolving FL scenarios (Assumption \ref{Smoothness and convexity assumption}--\ref{Bounded information loss assumption}).
		We can recover the rates in conventional FL setups by setting $D = 0$ and $A = 0$.
		Note that $\varphi = 0$ in \fedavg.
		Our convergence rate of \fedavg matches the results in previous works~\citep{karimireddy2020scaffold}.
		Our SGD rate of CL on the strongly-convex case is novel. Note that all elaborated results use SGD unless specifically mentioned.
		$c_{p_B} = 1 + B^2 + A^2$, $c_{A} = \frac{D^2 R^2}{R^2 + D^2}$.
		$N$ is the number of chosen clients in each round, and $K$ is the number of local iterations.
	}
	\label{Convergence rates of different algorithms}
\end{table*}

In this section, we analyze the theoretical performance of Algorithm \ref{CFL Algorithm Framework} under Assumption \ref{Smoothness and convexity assumption}--\ref{Bounded information loss assumption}.
We prove that CFL methods converges faster than \fedavg, despite that the term (refer to $\varphi$ in Theorem \ref{Convergence rate of CFL}, which is a bias term that can not eliminate by reducing learning rate) induced by information loss concurrently trades off the optimization.
Additionally, our results include the \fedavg rate (and match the prior work) and a novel convergence rate for SGD on CL.
In Table \ref{Convergence rates of different algorithms}, we summarize the convergence rate of different algorithms\footnote{
	To match the notations of prior works that include all clients in all training rounds, we use the abbreviation $N$ to denote the number of selected clients in each round in this section.
}.
The proof details refer to Appendix \ref{sec:Proof of Theorem Convergence rate of CFL-R} and Appendix \ref{sec:Non-convex}.

\subsection{Convergence Rate of CFL Methods}

Before analyzing Algorithm \ref{CFL Algorithm Framework}, we want to clarify that the introduced information loss $\Delta_{t, i}$, which is an approximation error, will introduce a constant that cannot be eliminated by reducing the learning rate. Therefore, instead of aiming for convergence to an arbitrary $\epsilon$, we aim to optimize until the expected error is smaller than $\epsilon + \varphi$, where $\varphi = \frac{\sum_{t=1}^{T} q_t \varphi_t}{\sum_{t=1}^{T} q_t}$ for some sequence $q_t$ and constant $\varphi_t$. The value of $\varphi_t$ is positively related to the bounded information loss $R$ defined in Assumption~\ref{Bounded information loss assumption}. Details of the proof can be found in Appendix \ref{sec:Proof of Theorem Convergence rate of CFL-R}.

\begin{theorem}
	[Convergence rate of CFL methods]
	\label{Convergence rate of CFL}
	Assume $\{ f_{t, i} (\momega) \}$ satisfy Assumption \ref{Smoothness and convexity assumption}--\ref{Bounded information loss assumption}, the output of Algorithm \ref{CFL Algorithm Framework} has expected error smaller than $\epsilon + \varphi$, for $\eta_g = 1$, $\eta_l \le \frac{\sqrt{3 + 4 (1 + B^2 + A^2)} - \sqrt{4 (1 + B^2 + A^2)}}{6KL\sqrt{1 + B^2 + A^2} }$,
	$p_{\tau, i} = \frac{D^2}{t D^2 + (t - 1) R^2}$ ($\tau < t$), and $p_{t, i} = \frac{(t-1)R^2 + D^2}{tD^2 + (t-1)R^2}$ on round $t$.

	When $\{ f_{t, i} (\momega) \}$ are $\mu$-strongly convex functions, we have
	\begin{align*}
		\textstyle
		T = \cO \left(
		\frac{L c_O}{\mu} + \frac{\sigma^2}{\mu N K \epsilon}
		+ \frac{1}{\mu \epsilon} \left( G^2 + \frac{D^2 R^2}{R^2 + D^2} \right)
		\right) \, ,
	\end{align*}
	and when $\{ f_{t, i} (\momega) \}$ are general convex functions ($\mu = 0$), we have
	\begin{align*}
		\textstyle
		T = \cO \left(
		\frac{c_O \mathcal{H}}{\epsilon}
		+ \frac{\sigma^2 \mathcal{H}}{N K \epsilon^2} + \frac{\mathcal{H}}{\epsilon^2} \left( G^2 + \frac{D^2 R^2}{R^2 + D^2} \right)
		\right) \,,
	\end{align*}
	and when $\{ f_{t, i} (\momega) \}$ are non-convex, setting $\eta = K\eta_g \eta_l = \frac{\sqrt{KN}}{\sqrt{T} L}$, when $\frac{1}{T} \sum_{t=1}^{T} \Eb{\norm{\nabla f(\momega_t)}^2}$ reach $\epsilon$ we have
	\begin{align*}
		\textstyle
		 & T =
		\cO \left( \frac{L^2(f_0 - f_*)^2}{NK c_m^2 \epsilon^2}
		+ \frac{1}{\epsilon^2} \left( \frac{\sqrt{KN} c_{R1}^2}{L}+ \frac{\sigma^2}{\sqrt{N K}} \right)^2 \right) \, ,
	\end{align*}
	where $c_O = 1 + A^2 + B^2$,
	$c_{R1} = \frac{R^2}{R^2 + D^2}$,
	$c_m$ is a constant related to $A$, $B$ and $R$,
	and $\mathcal{H} = \norm{\momega_0 - \momega^*}^2$.
	$N$ is the number of chosen clients in each round, and $K$ is the number of local iterations.
	We elaborate the choice of $p_{t, i}$ in Appendix~\ref{sec:weights of rounds}.
\end{theorem}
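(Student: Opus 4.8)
The plan is to turn the per-round inequality of Theorem~\ref{One round progress of CFL} into a global rate by a weighted-telescoping argument, handling the three convexity regimes separately. First I would write $r_t := \E{\norm{\momega_t - \momega^*}^2}$ and multiply~\eqref{Eq in One round progress of CFL} by $\eta$, turning the $A_1$ block into a one-step contraction $r_{t+1} \le (1 - \frac{\mu\eta}{2}) r_t - \eta\,(f(\momega_t) - f(\momega^*)) + c_1\eta^2 + c_2\eta^3 + \eta\varphi_t$. This is exactly the recursion handled by the standard linear-convergence lemmas of \citet{stich2018local,karimireddy2020scaffold}, which I would invoke rather than re-derive.

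For the $\mu$-strongly convex case I would sum this recursion against geometric weights $w_t = (1 - \frac{\mu\eta}{2})^{-(t+1)}$; the contraction cancels neighbouring $r_t$ terms, leaving only $w_0 r_0$, so the weighted average of $f(\momega_t) - f(\momega^*)$ is bounded by $\tilde{\cO}\big(\mu r_0 \, e^{-\mu\eta T/2} + c_1\eta + c_2\eta^2 + \varphi\big)$, with $\varphi$ the induced weighted mean of the $\varphi_t$. Tuning $\eta = K\eta_g\eta_l$ within the stated cap to balance the exponential burn-in against $c_1\eta$ then gives the displayed complexity. The general convex case ($\mu = 0$) is the same computation with uniform weights: telescoping yields $\frac{1}{T}\sum_t (f(\momega_t) - f(\momega^*)) \le \frac{r_0}{\eta T} + c_1\eta + c_2\eta^2 + \varphi$, and optimizing the $\frac{F}{\eta T}$ versus $c_1\eta,\,c_2\eta^2$ trade-off produces the $\epsilon^{-2}$ rate. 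The non-convex claim cannot pass through $r_t$; there I would start instead from a descent-lemma form of the one-round bound (the variant established in Appendix~\ref{sec:Non-convex}), telescope $f(\momega_t)$ from $f_0$ down to $f_*$, and substitute $\eta = \frac{\sqrt{KN}}{\sqrt{T}L}$ to read off the stationarity rate on $\frac{1}{T}\sum_t \Eb{\norm{\nabla f(\momega_t)}^2}$.

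The hard part is not the telescoping but the choice of the round weights $p_{\tau, i}$, which is what collapses the two separate penalties $c_{p_G}$ and $c_R$ of Theorem~\ref{One round progress of CFL} into the single harmonic-type term $c_A = \frac{D^2 R^2}{R^2 + D^2}$. The $D^2$- and $R^2$-dependent part of $c_1$ is, per client, proportional to $D^2 \sum_{\tau=1}^{t} p_{\tau, i}^2 + R^2 \big(\sum_{\tau=1}^{t-1} p_{\tau, i}\big)^2$, which I would minimize over the simplex $\sum_{\tau} p_{\tau, i} = 1$. By symmetry the past rounds carry equal mass, and a one-variable optimization then forces $p_{\tau, i} = \frac{D^2}{tD^2 + (t-1)R^2}$ for $\tau < t$ with the residual $p_{t, i} = \frac{(t-1)R^2 + D^2}{tD^2 + (t-1)R^2}$ on the current round; back-substitution produces exactly $\frac{D^2 R^2}{R^2 + D^2}$ (up to a vanishing $\cO(1/t)$ term), which is the content deferred to Appendix~\ref{sec:weights of rounds}. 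The remaining work is bookkeeping: checking that the stated cap on $\eta_l$ keeps the contraction factor in $(0,1)$ and the cubic term $c_2\eta^3$ subdominant, and tracking that $\varphi_t$ only shifts the target accuracy from $\epsilon$ to $\epsilon + \varphi$ rather than degrading the leading-order rate.
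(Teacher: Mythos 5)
Your proposal is correct and follows essentially the same route as the paper: it rearranges the one-round progress bound of Theorem~\ref{One round progress of CFL} into a contraction on $\E{\norm{\momega_t - \momega^*}^2}$, applies the geometric-weight (Lemma~\ref{linear convergence rate}) and uniform-weight (Lemma~\ref{sub-linear convergence rate}) telescoping lemmas for the strongly convex and general convex cases, handles the non-convex case via the descent-lemma telescoping of Appendix~\ref{sec:Non-convex}, and obtains the weights $p_{\tau,i}$ by minimizing $D^2\sum_{\tau} p_{\tau,i}^2 + R^2(\sum_{\tau<t} p_{\tau,i})^2$ over the simplex exactly as in Lemma~\ref{best p in appendix}, with the residual $\cO(\ln T/T)$ correction absorbed as in Lemma~\ref{time sum}. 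The treatment of $\varphi$ as the weighted mean of the $\varphi_t$ shifting the target accuracy to $\epsilon+\varphi$ also matches the paper.
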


\begin{remark}
	When setting $N = 1$ in Theorem \ref{Convergence rate of CFL}, we recover the convergence rate of standalone CL methods.
	To the best of our knowledge, we are the first to provide such theoretical guarantees for SGD: the recent work~\citep{Yin20continual} only gives the convergence rate of GD for regularization based CL methods on general convex case, under the constraint of information loss $R = 0$.
\end{remark}

\begin{proposition}

	For stateless cross-device FL scenarios (clients only appear once during training), we can derive lower bounds than that of Theorem \ref{Convergence rate of CFL}.
	More precisely, by applying Lemma \ref{Bounded Gradient Noise of CFL} to the considered stateless scenario, the key variance term $c_{normal} = G^2 + \frac{D^2 R^2}{R^2 + D^2}$ in Theorem \ref{Convergence rate of CFL} becomes
	$c_{stateless} = \frac{(G^2 + D^2) R^2}{G^2 + D^2 + R^2}$.
	$c_{stateless}$ is the lower bound of $c_{normal}$, indicating that CFL methods provide higher performance improvements in stateless FL scenarios.

\end{proposition}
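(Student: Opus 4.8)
The plan is to establish the proposition in two stages. First, I would re-derive the effective variance term under the stateless appearance pattern by specializing Lemma~\ref{Bounded Gradient Noise of CFL}, obtaining $c_{stateless}$; second, I would verify the purely algebraic inequality $c_{stateless} \le c_{normal}$, which is what is meant by ``$c_{stateless}$ is the lower bound of $c_{normal}$'' and hence yields a smaller round complexity.

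For the first stage, I would isolate what distinguishes the stateless scenario at the level of the gradient noise model~\eqref{Gradient Noise in CFL}. In the general CFL analysis the client drift $\mdelta_i$ is shared across every round in which client $i$ participates, so after the normalization $\sum_\tau p_{\tau,i}=1$ it enters with full weight and contributes the irreducible $G^2$, whereas the independent time drift $\mxi_{t,i}$ is damped by $\sum_\tau p_{\tau,i}^2$. When each client appears exactly once, this distinction collapses: the pair $(\mdelta_i,\mxi_{t,i})$ is attached to a single (client, round) index, so I would treat it as one combined one-shot drift whose second moment is bounded, by the zero-mean independence in Assumption~\ref{Bounded gradient noise of CFL assumption}, by $G^2+D^2$. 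I would then feed this combined bound into Lemma~\ref{Bounded Gradient Noise of CFL} in place of the separate $G^2$ and $D^2$ contributions.

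The key computation is the weight optimization, carried out exactly as in Theorem~\ref{Convergence rate of CFL} but with $D^2$ replaced by $G^2+D^2$. Concretely, writing $s=\sum_{\tau=1}^{t-1}p_{\tau,i}$ for the total weight on approximated rounds and spreading it uniformly, the relevant contribution becomes $(G^2+D^2)\big[(1-s)^2+\tfrac{s^2}{t-1}\big]+R^2 s^2$; minimizing over $s$ and letting $t\to\infty$ yields the parallel combination $\frac{(G^2+D^2)R^2}{(G^2+D^2)+R^2}=c_{stateless}$, exactly mirroring how the normal case produced $\frac{D^2R^2}{D^2+R^2}$ sitting on top of a separate $G^2$.

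For the second stage I would prove $c_{stateless}\le c_{normal}$ directly. Setting $X=G^2,\,Y=D^2,\,Z=R^2\ge 0$ and using the shorthand $h(a,b)=\frac{ab}{a+b}$, a short calculation gives $h(X+Y,Z)-h(Y,Z)=\frac{XZ^2}{(X+Y+Z)(Y+Z)}$, so that $c_{stateless}-c_{normal}=X\big(\frac{Z^2}{(X+Y+Z)(Y+Z)}-1\big)\le 0$, since $(X+Y+Z)(Y+Z)\ge (Y+Z)^2\ge Z^2$, with equality iff $G=D=0$. I expect the main obstacle to be the first stage: rigorously justifying that the stateless pattern lets $\mdelta_i$ be treated as reducible one-shot noise, so it merges with $\mxi_{t,i}$ rather than contributing an irreducible $G^2$, and checking that the hypotheses of Lemma~\ref{Bounded Gradient Noise of CFL} still hold under this re-grouping. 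Once the combined bound $G^2+D^2$ is in hand, both the weight optimization and the final inequality are routine.
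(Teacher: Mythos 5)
Your proposal is correct and follows essentially the same route as the paper: the stateless specialization is exactly part (3) of Lemma~\ref{Bounded Gradient Noise of CFL} (independence of $\mdelta_{t,i}$ across rounds lets the client drift be damped by $\sum_\tau p_{\tau,i}^2$ alongside the time drift, replacing $G^2 + D^2\sum_\tau p_{\tau,i}^2$ with $(G^2+D^2)\sum_\tau p_{\tau,i}^2$), and re-running the weight optimization of Lemma~\ref{best p in appendix} with $D^2 \mapsto G^2+D^2$ and no irreducible $G^2$ gives $c_{stateless}$ in the large-$t$ limit. Your explicit algebraic check that $c_{stateless}\le c_{normal}$ fills in a step the paper only asserts, and it is correct.
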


\subsection{Convergence Rate of \fedavg under Time-evolving Scenarios}

To show the faster convergence of CFL methods than \fedavg, we provide the convergence rate of \fedavg under time-evolving scenarios and Assumption  \ref{Smoothness and convexity assumption}--\ref{Bounded information loss assumption}.
Note that prior works only consider traditional FL scenarios (w/o time-evolving heterogeneous data) for \fedavg, and we offer new rates of \fedavg for such scenarios by setting $p_{t, i} = 1$.

\begin{theorem}[Convergence rate of \fedavg under time-evolving scenarios]
	\label{Convergence rate of FedAvg}
	Assume $\{ f_{t, i} (\momega) \}$ satisfy Assumption \ref{Smoothness and convexity assumption}--\ref{Bounded information loss assumption}, the output of \fedavg has expected error smaller than $\epsilon$, for $\eta_g = 1$ and $\eta_l \le \frac{\sqrt{3 + 4 (1 + B^2 + A^2)} - \sqrt{4 (1 + B^2 + A^2)}}{6KL\sqrt{1 + B^2 + A^2} }$.
	When $\{ f_{t, i} (\momega) \}$ are $\mu$-strongly convex functions, we have
	\begin{align*}
		\textstyle
		T = \cO \left(
		\frac{L c_O}{\mu} + \frac{\sigma^2}{\mu N K \epsilon}
		+ \frac{G^2 + D^2}{\mu \epsilon}
		\right) \,,
	\end{align*}
	and when $\{ f_{t, i} (\momega) \}$ are general convex functions ($\mu = 0$), we have
	\begin{align*}
		\textstyle
		T = \cO \left(
		\frac{c_O \mathcal{H} }{\epsilon}
		+ \frac{\sigma^2 \mathcal{H}}{N K \epsilon^2} + \frac{(G^2 + D^2) \mathcal{H}}{\epsilon^2}
		\right) \,,
	\end{align*}
	and when $\{ f_{t, i} (\momega) \}$ are non-convex, setting $\eta = K\eta_g \eta_l = \frac{\sqrt{KN}}{\sqrt{T} L}$, when $\frac{1}{T} \sum_{t=1}^{T} \Eb{\norm{\nabla f(\momega_t)}^2}$ reach $\epsilon$ we have
	\begin{align*}
		\textstyle
		 & T =
		\cO \left( \frac{L^2(f_0 - f_*)^2}{NK c_m^2 \epsilon^2}
		+ \frac{1}{\epsilon^2} \left( \frac{\sqrt{KN}}{L}+ \frac{\sigma^2}{\sqrt{N K}} \right)^2 \right) \, ,
	\end{align*}
	where $c_O = 1 + A^2 + B^2$,
	and $\mathcal{H} = \norm{\momega_0 - \momega^*}^2$.
	$N$ is the number of chosen clients in each round, and $K$ is the number of local iterations.
\end{theorem}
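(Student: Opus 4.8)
The plan is to derive Theorem~\ref{Convergence rate of FedAvg} as a direct specialization of the one-round progress bound in Theorem~\ref{One round progress of CFL}, since FedAvg is exactly the member of the framework that discards all past information. Concretely, FedAvg performs no approximation of the earlier local objectives, so in the weights of \eqref{Approximate CFL Formulation} I set $p_{t,i} = 1$ and $p_{\tau,i} = 0$ for every $\tau < t$. Substituting these into the constants of Theorem~\ref{One round progress of CFL} collapses them: the information-loss term $c_R = \frac{1}{N}\sum_i \big(\sum_{\tau=1}^{t-1} p_{\tau,i}\big)^2 R^2$ vanishes, the optimum-drift term $\varphi_t$ vanishes as well (which is why FedAvg reaches accuracy $\epsilon$ rather than $\epsilon + \varphi$), and $c_{p_G}$ reduces to $G^2 + D^2$. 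With $\eta_g = 1$ this leaves $c_1 = 2(G^2 + D^2) + \frac{\sigma^2}{NK}$ and $c_2 = \frac{L(G^2+D^2)}{3} + \frac{L\sigma^2}{6K}$.

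Next I would rewrite \eqref{Eq in One round progress of CFL} (with $\varphi_t = 0$) as a one-step contraction on the potential $r_t := \Eb{\norm{\momega_t - \momega^*}^2}$. Multiplying through by $\eta$ and moving the $A_1$ terms gives
\begin{small}
\begin{equation*}
r_{t+1} \le \left(1 - \tfrac{\mu\eta}{2}\right) r_t - \eta \left( f(\momega_t) - f(\momega^*) \right) + c_1 \eta^2 + c_2 \eta^3 \,,
\end{equation*}
\end{small}
which is precisely the recursion resolved by the standard weighted-averaging lemma of \citet{karimireddy2020scaffold,stich2018local}: for $\eta \le 1/d$ there exist a step size and averaging weights $q_t$ so that the weighted average of $\Eb{f(\momega_t) - f(\momega^*)}$ is $\tilde{\cO}\big(d\, r_0\, e^{-\mu T/(2d)} + \tfrac{c_1}{\mu T}\big)$ in the $\mu$-strongly convex case, the $c_2\eta^3$ contribution being folded into the $\eta^2$ term via $c_2\eta^3 \le (c_2/d)\,\eta^2$. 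The purpose of the stated bound $\eta_l \le \frac{\sqrt{3 + 4 c_O} - \sqrt{4 c_O}}{6KL\sqrt{c_O}}$, with $c_O = 1 + A^2 + B^2$, is to guarantee $\eta = K \eta_g \eta_l \le 1/d$ for $d = \cO(L c_O)$; I would check this using $\sqrt{3+4c_O} - \sqrt{4c_O} = 3/(\sqrt{3+4c_O} + \sqrt{4c_O})$, which behaves like $1/\sqrt{c_O}$ and hence makes $1/\eta = \cO(L c_O)$.

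I would then invert the accuracy guarantee in each regime. In the strongly convex case the exponential term forces $T$ to be at least of order $\frac{L c_O}{\mu}$ (up to a log), and past that threshold inverting $\frac{c_1}{\mu T} \le \epsilon$ yields the remaining two terms $\frac{\sigma^2}{\mu N K \epsilon}$ and $\frac{G^2 + D^2}{\mu \epsilon}$. For general convex functions ($\mu = 0$) I would instead sum the non-contracting recursion to obtain $\frac{1}{T}\sum_t \Eb{f(\momega_t)-f(\momega^*)} \le \frac{r_0}{\eta T} + c_1 \eta + c_2 \eta^2$, tune the constant step size $\eta$ (capped at $1/d$) to balance these terms with $F = \norm{\momega_0 - \momega^*}^2$, and invert to get $\frac{c_O F}{\epsilon} + \frac{\sigma^2 F}{NK\epsilon^2} + \frac{(G^2+D^2)F}{\epsilon^2}$. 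The non-convex statement is not reachable from Theorem~\ref{One round progress of CFL} (which uses convexity), so there I would start from a separate one-round descent inequality for $\frac{1}{T}\sum_t \Eb{\norm{\nabla f(\momega_t)}^2}$ proved under Assumptions~\ref{Smoothness and convexity assumption}--\ref{Bounded information loss assumption} in Appendix~\ref{sec:Non-convex}, insert the same simplified $c_1, c_2$, and choose $\eta = \sqrt{KN}/(\sqrt{T}L)$ to balance the gradient-variance and drift contributions.

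The main obstacle I anticipate is not the telescoping, which is routine once the recursion is in standard form, but the non-convex branch and the step-size bookkeeping around it. Reusing Theorem~\ref{One round progress of CFL} is impossible without convexity, so a fresh one-round descent bound must be established: controlling the accumulated client and time drift over the $K$ local SGD steps under Assumptions~\ref{Bounded gradient noise of CFL assumption}--\ref{Bounded information loss assumption} while only assuming $L$-smoothness. Verifying that the exact threshold on $\eta_l$ keeps the effective contraction factor below one---and simultaneously keeps $d = \cO(L c_O)$ so that the clean three-term rates emerge---is the most delicate piece of the argument; everything else is substitution and inversion.
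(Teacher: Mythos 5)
Your proposal matches the paper's own argument essentially step for step: the paper likewise treats FedAvg as the $p_{t,i}=1$, $p_{\tau,i}=0$ ($\tau<t$) specialization of the one-round progress bound, which kills $c_R$ and $\varphi_t$ and collapses the constants to exactly your $c_1 = 2(G^2+D^2) + \tfrac{\sigma^2}{NK}$ and $c_2 = \tfrac{L(G^2+D^2)}{3} + \tfrac{L\sigma^2}{6K}$, then invokes the linear and sub-linear convergence lemmas of \citet{karimireddy2020scaffold} for the strongly convex and general convex regimes and a separate smoothness-only descent inequality with $\eta = \sqrt{KN}/(\sqrt{T}L)$ for the non-convex case. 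Your reading of the step-size threshold as enforcing $1/\eta = \cO(Lc_O)$ is also exactly how the $\tfrac{Lc_O}{\mu}$ term arises in the paper.
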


Note that in both Theorem \ref{Convergence rate of CFL} and Theorem \ref{Convergence rate of FedAvg}, we only elaborate dominate terms to clarify the difference. Details please refer to Theorem \ref{Convergence rate of CFL full} and \ref{Convergence rate of FedAvg full} in Appendix \ref{sec:Theoretical Results in appendix}.

\begin{remark}
	\label{rmk:R}
	\fedavg (under time-evolving scenarios) is a special case of CFL methods by setting $p_{t, i} = 1$, as stated in Theorem \ref{Convergence rate of CFL} and Theorem \ref{Convergence rate of FedAvg}.
	We show in Appendix \ref{sec:Proof of Theorem Convergence rate of CFL-R} that the rate of \fedavg (under time-evolving scenarios) is equivalent to setting the upper bound of information loss $R \to \infty$.
	Similar observations can be found by setting $R \to \infty$ in $p_{t, i} = \frac{(t-1)R^2 + D^2}{tD^2 + (t-1)R^2}$ in Theorem \ref{Convergence rate of CFL}.
\end{remark}

\begin{remark}
	\label{sec:Theoretical Analysis}
	CFL methods accelerate the convergence by reducing the variance term of \fedavg.
	For example, the gradient noise term $G^2 + D^2$ of \fedavg for convex functions in Theorem \ref{Convergence rate of FedAvg} can be decreased to $G^2 + \frac{D^2 R^2}{R^2 + D^2}$ in Theorem \ref{Convergence rate of CFL}.
	Similarly, the gradient noise term of non-convex functions can be reduced from $1$ to $\frac{R^2}{R^2 + D^2}$.
	The benefits of variance reduction in CFL depend on the approximation accuracy $R$, as stated in remark~\ref{rmk:R}.

\end{remark}

\subsection{Discussion: Convergence Rate of CFL for Time-dependent Drifts}
\label{sec:time_dependent_drifts}
In the previous sections, our convergence rates---though valuable---only assume a time $t$ independent time drift $\mxi_{t, i}$ (c.f.\ Assumption~\ref{Bounded gradient noise of CFL assumption}), an assumption may not always hold in practice.
In this section, we extend our convergence analysis of CFL to the correlated time drifts.

\begin{assumption}[Correlated time drifts]
	\label{Generalized time drift with correlation assumption}
	Let $\mxi_{t_1, i}$ and $\mxi_{t_2, i}$ be time drifts of client $i$  at times $t_1$ and $t_2$, respectively. We assume that the expected value of the correlation between $\mxi_{t_1, i}$ and $\mxi_{t_2, i}$ given $\momega$ satisfies $\Eb{ \langle \mxi_{t_1, i}, \mxi_{t_2, i} \rangle |\momega } \le F_{t_1, t_2}^2 + C_{t_1, t_2}^2 \Eb{ \norm{ \nabla f(\momega) }^2 }$.
\end{assumption}

Assumption~\ref{Generalized time drift with correlation assumption} assumes that time drifts $\xi_{t_1, i}$ and $\xi_{t_2, i}$ are correlated, which allows us to simulate more complex scenarios by setting different relationships between $F_{t_1, i}$ and $F_{t_2, i}$ (also $C_{t_1, i}$ and $C_{t_2, i}$). In the following subsections, we present two case studies to examine time drifts in real-world scenarios.

\subsubsection{Case 1}
\label{sec:case1}
In the first case, we consider the following scenario: Local data sets in different rounds have overlap. Then we assume the bounded correlation between time drifts in the following assumption.
\begin{assumption}[Correlated time drifts: Case 1]
	Let $\mxi_{t_1, i}$ and $\mxi_{t_2, i}$ be time drifts of client $i$  at times $t_1$ and $t_2$, respectively. We assume that the expected value of the correlation between $\mxi_{t_1, i}$ and $\mxi_{t_2, i}$ given $\momega$ satisfies $\Eb{ \langle \mxi_{t_1, i}, \mxi_{t_2, i} \rangle |\momega} \le F^2 + C^2 \Eb{ \norm{ \nabla f(\momega) }^2 }$.
	\label{Time drift with correlation assumption}
\end{assumption}

Assumption~\ref{Time drift with correlation assumption} states that the correlation between these time drifts is bounded by constants $F$ and $C$. This is a generalized assumption when the change in time drifts over time is unknown, and we use the same $F$ and $C$ for all $t_1, t_2$ in Assumption~\ref{Generalized time drift with correlation assumption}. For instance, in stateless FL, servers may not have information on the overlap of new clients with clients in previous rounds. Under this more realistic assumption, we provide the convergence rate of CFL.

\begin{theorem}[Convergence rate of CFL methods with correlated time drifts]
	\label{Convergence rate of CFL with correlated time drifts}
	Assume $\{ f_{t, i} (\momega) \}$ satisfy Assumption \ref{Smoothness and convexity assumption}--\ref{Time drift with correlation assumption}, the output of Algorithm \ref{CFL Algorithm Framework} has expected error smaller than $\epsilon + \varphi$, for $\eta_g = 1$, $\eta_l \le \frac{\sqrt{3 + 4 c_O} - \sqrt{4 c_O}}{6KL\sqrt{c_O} }$,
	$p_{\tau, i} = \frac{M^2}{t (M^2) + (t - 1) R^2}$ (for $\tau < t$), and $p_{t, i} = \frac{(t-1)R^2 + M^2}{t(M^2) + (t-1)R^2}$ on round $t$.

	When $\{ f_{t, i} (\momega) \}$ are $\mu$-strongly convex functions, we have
	\begin{align*}
		\textstyle
		T = \cO \left(
		\frac{L c_O}{\mu} + \frac{\sigma^2}{\mu N K \epsilon}
		+ \frac{1}{\mu \epsilon} \left( G^2 + \frac{M^2 R^2}{R^2 + M^2} \right)
		\right) \, ,
	\end{align*}
	and when $\{ f_{t, i} (\momega) \}$ are general convex functions ($\mu = 0$), we have
	\begin{align*}
		\textstyle
		T = \cO \left(
		\frac{c_O \mathcal{H}}{\epsilon}
		+ \frac{\sigma^2 \mathcal{H}}{N K \epsilon^2} + \frac{\mathcal{H}}{\epsilon^2} \left( G^2 + \frac{M^2 R^2}{R^2 + M^2} \right)
		\right) \,,
	\end{align*}
	and when $\{ f_{t, i} (\momega) \}$ are non-convex, by setting $\eta = K\eta_g \eta_l = \frac{\sqrt{KN}}{\sqrt{T} L}$, $\frac{1}{T} \sum_{t=1}^{T} \Eb{\norm{\nabla f(\momega_t)}^2}$ needs to take the following steps to reach $\epsilon$
	\begin{align*}
		\textstyle
		 & T =
		\cO \left( \frac{L^2(f_0 - f_*)^2}{NK c_m^2 \epsilon^2}
		+ \frac{1}{\epsilon^2} \left( \frac{\sqrt{KN} c_{R1}^2}{L}+ \frac{\sigma^2}{\sqrt{N K}} \right)^2 \right) \, ,
	\end{align*}
	where $M^2 = max(0, D^2 - F^2)$,
	$c_O = 1 + A^2 + max(B^2, C^2)$,
	$c_{R1} = \frac{R^2}{R^2 + M^2}$,
	$c_m$ is a constant related to $A$, $B$, $C$ and $R$,
	and $\mathcal{H} = \norm{\momega_0 - \momega^*}^2$.
	$N$ is the number of chosen clients in each round, and $K$ is the number of local iterations.
	We elaborate the choice of $p_{t, i}$ in Appendix~\ref{sec:weights of rounds}.
\end{theorem}

\begin{remark}
	Comparing Theorem~\ref{Convergence rate of CFL} and Theorem~\ref{Convergence rate of CFL with correlated time drifts}, we can see that the convergence rate of CFL with correlated time drifts is equivalent to setting the time drift $D^2$ to be $max(0, D^2 - F^2)$ in Theorem~\ref{Convergence rate of CFL}. This suggests that the correlation between time drifts can reduce the impact of time drifts on the convergence rate, and the weights of previous rounds' objective function $p_{\tau, i}$ decrease as $F^2$ increases. This aligns with our observations in real-world settings. For instance, when local data sets contain data from previous rounds, the model performance is significantly improved (cf. Table~\ref{Performance on overlap local data sets}). This observation also supports the effectiveness of core set based methods in Continual Learning.
\end{remark}



\subsubsection{Case 2}
In the second case, we assume that local data sets change gradually over time. Specifically, the overlap between the current data set $D_t$ and the previous data set $D_{\tau}$ decreases as the time difference $|t - \tau|$ increases. This leads to the extension of Assumption~\ref{Generalized time drift with correlation assumption}.
\begin{assumption}[Correlated time drifts: Case 2]
	\label{Time drift with correlation assumption: a special case}
	We assume that the correlation between time drifts $\mxi_{t_1, i}$ and $\mxi_{t_2, i}$ decreases as $|t_1 - t_2|$ increases to model a scenario where local data sets change gradually over time. This is formalized a
	\begin{align*}
		\Eb{ \langle \mxi_{t_1, i}, \mxi_{t_2, i} \rangle |\momega } \le \alpha^{|t_1 - t_2|}\left( D^2 + A^2 \Eb{ \norm{ \nabla f(\momega) }^2 } \right) \,,
	\end{align*}
	where $0 \le \alpha < 1$, and $D$ and $A$ are constants as defined in Assumption~\ref{Bounded gradient noise of CFL assumption}.
\end{assumption}

Under Assumption~\ref{Time drift with correlation assumption: a special case}, the correlation between $\xi_{\tau}$ and $\xi_{t}$ increases as the absolute value of $t - \tau$ decreases. The value of $\alpha$ controls the rate at which local data sets change over time. A large value of $\alpha$ indicates a slower rate of change.

Deriving the final convergence rate for this scenario is non-trivial. Therefore, we analyze the choice of weights $p_{\tau, i}$ and provide some insights.
On top of the proof stated in Section~\ref{sec:proof of optimal weights of special case},
we optimize $p_{t, i}$ to minimize the upper bound of the convergence rate. Under Assumption  \ref{Smoothness and convexity assumption}--\ref{Bounded information loss assumption} and Assumption~\ref{Time drift with correlation assumption: a special case}, the optimal weights of $p_{t, i}$ can be given by
\begin{subequations}
	\begin{align*}
		 & \min_{p} \frac{1}{N} \sum_{i=1}^{N} (1 - p_{t, i})^2 R^2 + G^2 + D^2 \sum_{\tau_1, \tau_2}^{t} \alpha^{\| \tau_1 - \tau_2 \|} p_{\tau_1} p_{\tau_2} \, , \\
		 & \text{s.t.} \; \sum_{\tau=1}^{t} p_{\tau, i} = 1, \forall i = 1, \dots, N \, .
	\end{align*}
\end{subequations}
The given optimization problem can be reformulated as a quadratic programming problem and can be solved by finding the solution to the corresponding linear system.

\begin{theorem}[Optimal choice of $p_{\tau, i}$ under correlated time drifts]
	Let $f_{t,i}(\omega)$ satisfy Assumptions \ref{Smoothness and convexity assumption}--\ref{Bounded information loss assumption} and Assumption \ref{Time drift with correlation assumption: a special case}. On round $t$, the optimal choice of $\hat{p} = [p_{1,i}, \dots, p_{t,i}]^\top$ is determined by solving the linear system
	\begin{align}
		\begin{bmatrix}
			Q              & \mathbf{1} \\
			\mathbf{1}^{T} & 0
		\end{bmatrix}
		\begin{bmatrix}
			\hat{p} \\
			\lambda
		\end{bmatrix}
		=
		\begin{bmatrix}
			\mathbf{0} \\
			1
		\end{bmatrix},
	\end{align}
	where each entry of $\mathbf{1} \in \mathbb{R}^{t}$ is $1$, and $\lambda$ is the Lagrange multiplier that comes out of the solution alongside $\hat{p}$. Matrix $Q$ is defined as
	\begin{align*}
		Q = \begin{bmatrix}
			    D^2 + R^2              & \alpha D^2 + R^2       & \cdots & \alpha^{t-2} D^2 + R^2 & \alpha^{t-1} D^2 \\
			    \alpha D^2 + R^2       & D^2 + R^2              & \cdots & \alpha^{t-3} D^2 + R^2 & \alpha^{t-2} D^2 \\
			    \vdots                 & \vdots                 &        & \vdots                 & \vdots           \\
			    \alpha^{t-2} D^2 + R^2 & \alpha^{t-3} D^2 + R^2 & \cdots & \alpha^2 D^2 + R^2     & \alpha D^2       \\
			    \alpha^{t-1} D^2       & \alpha^{t-2} D^2       & \cdots & \alpha D^2             & D^2
		    \end{bmatrix} \, .
	\end{align*}
	\label{Optimal choice of weights with correlated time drifts}
\end{theorem}
The optimal $p_{\tau, i}$ can be found by solving the linear system in Theorem~\ref{Optimal choice of weights with correlated time drifts}.
It is easy to show that $Q$ is positive definite (see Lemma~\ref{pd Q}), and we can then use LU factorization to solve the linear system in Theorem~~\ref{Optimal choice of weights with correlated time drifts}.

\begin{example}
	Solving the linear system in Theorem~\ref{Optimal choice of weights with correlated time drifts} requires the values of $\alpha$, $D$, and $R$.
	It is difficult to provide an explicit expression of $p_{t, i}$ with arbitrary $\alpha$, $D$, and $R$.
	Therefore, we use a toy example to provide some insights.
	We set different values of $\alpha$, $D$, and $R$ ($t = 4$ for all settings) and solve the linear system in Theorem~\ref{Optimal choice of weights with correlated time drifts} accordingly. The results in Table~\ref{Optimal weights of CFL with correlated time drifts} show that the optimal weights of the current round increase as $\alpha$ increases. This indicates that a larger correlation between time drifts results in lower importance of previous rounds' objectives.
	This aligns with our observation in Theorem~\ref{Convergence rate of CFL with correlated time drifts} that, as the time correlation bound $F^2$ increases, $p_{\tau, i}$ defined in Theorem~\ref{Convergence rate of CFL with correlated time drifts} decreases and the convergence rate in Theorem~\ref{Convergence rate of CFL with correlated time drifts} increases, indicating that the impact of time drifts on convergence is reduced.

	\begin{table}[!t]
		\centering
		\caption{
			{Optimal weights of CFL with correlated time drifts.}
			We solve the optimization problem defined in Theorem~\ref{Optimal choice of weights with correlated time drifts} with different $\alpha$, $D$, and $R$, and report the optimal $p_{\tau}$ for $\tau = 1, \dots, 4$.
		}
		\resizebox{.5\textwidth}{!}{%
			\begin{tabular}{c c c c c c c}
				\toprule
				$\alpha$ & $D$ & $R$ & $p_1$  & $p_2$  & $p_3$   & $p_4$  \\
				\midrule
				0        & 2   & 1   & 0.1818 & 0.1818 & 0.1818  & 0.4545 \\
				0.5      & 1   & 0.5 & 0.2857 & 0.1429 & 0.0000  & 0.5714 \\
				0.5      & 1   & 1   & 0.2632 & 0.1316 & -0.0789 & 0.6842 \\
				0.8      & 1   & 0.5 & 0.3870 & 0.0774 & -0.2077 & 0.7434 \\
				0.8      & 2   & 0.5 & 0.3960 & 0.0792 & -0.1188 & 0.6436 \\
				\bottomrule
			\end{tabular}}

		\label{Optimal weights of CFL with correlated time drifts}
	\end{table}
\end{example}

\section{Experiments}
\label{sec:Experiments}
We utilize the Noisy Quadratic Model---a simple convex model---to verify the theoretical results presented in Section~\ref{sec:theoretical_rates}. The results in Appendix~\ref{sec:Noisy quadratic model} demonstrate that the convergence rate of CFL methods is much faster than baselines like \fedavg and FedProx, with a smoother convergence curve. These findings support the variance reduction effect of CFL methods mentioned in Remark~\ref{sec:Theoretical Analysis}.

We further conduct extensive empirical evaluations comparing the performance of our CFL methods with various strong FL competitors on a variety of realistic data sets. Additionally, we explore various approximation techniques in our CFL methods and demonstrate how they can be used to achieve efficient federated learning with time-evolving data heterogeneity.

\subsection{Noisy Quadratic Model}
\label{sec:Noisy quadratic model}

\begin{figure*}[!t]
	\centering
	\subfigure[Small $L$, strongly convex, small round drift]{ \includegraphics[width=.23\textwidth,]{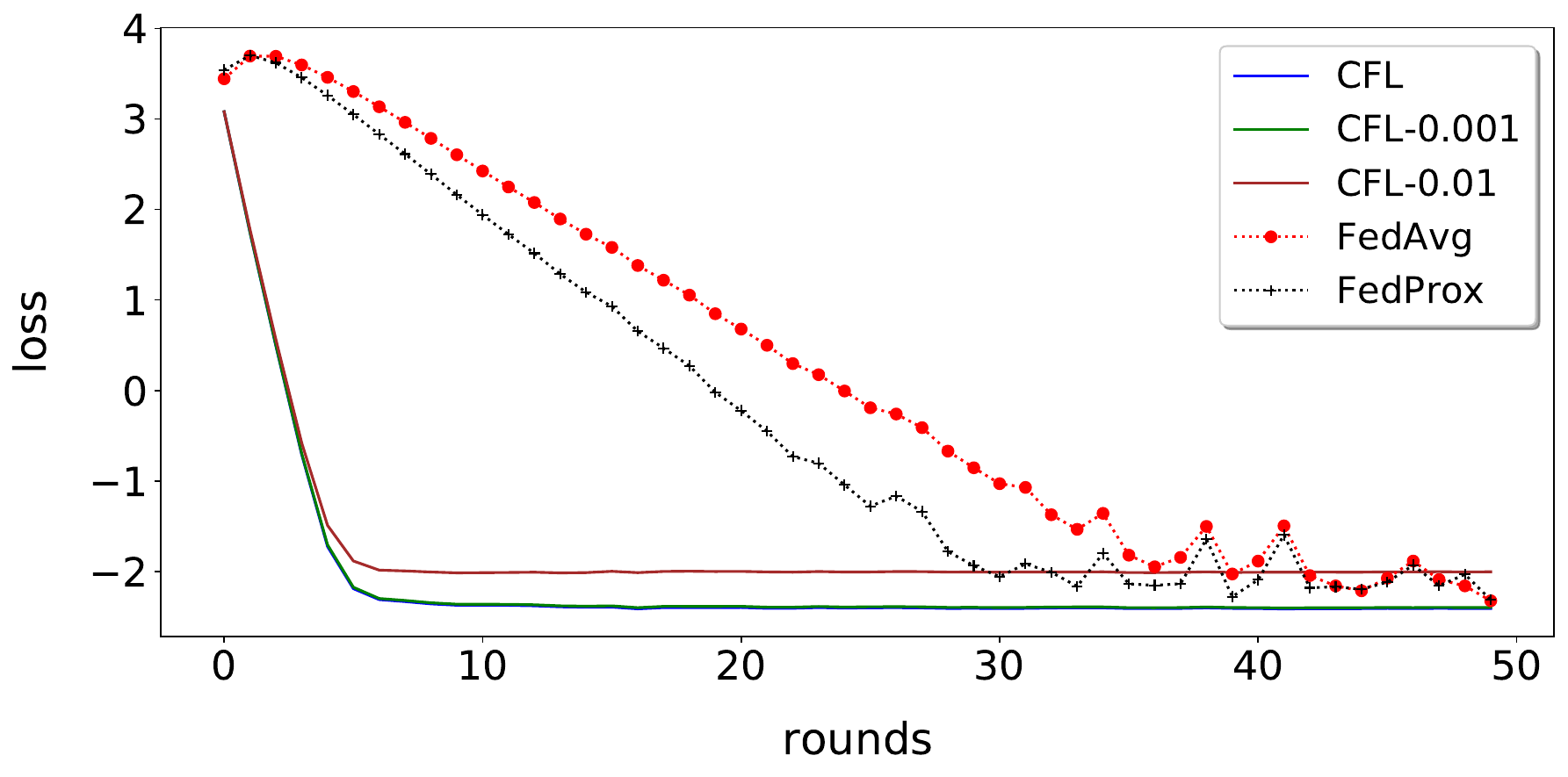}}
	\subfigure[Large $L$, strongly convex, small round drift]{ \includegraphics[width=.23\textwidth,]{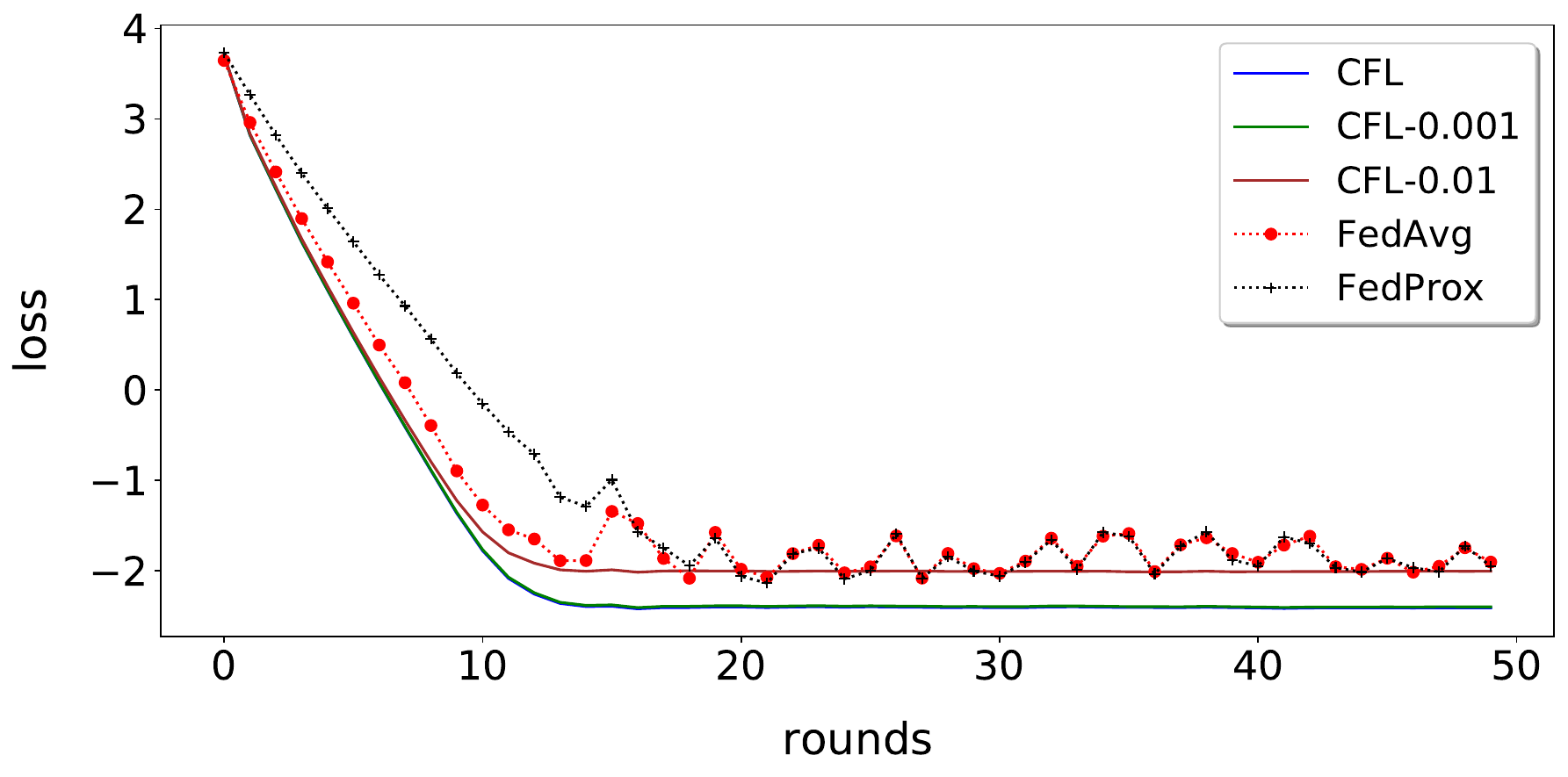}}
	\subfigure[Small $L$, general convex, small round drift]{ \includegraphics[width=.23\textwidth,]{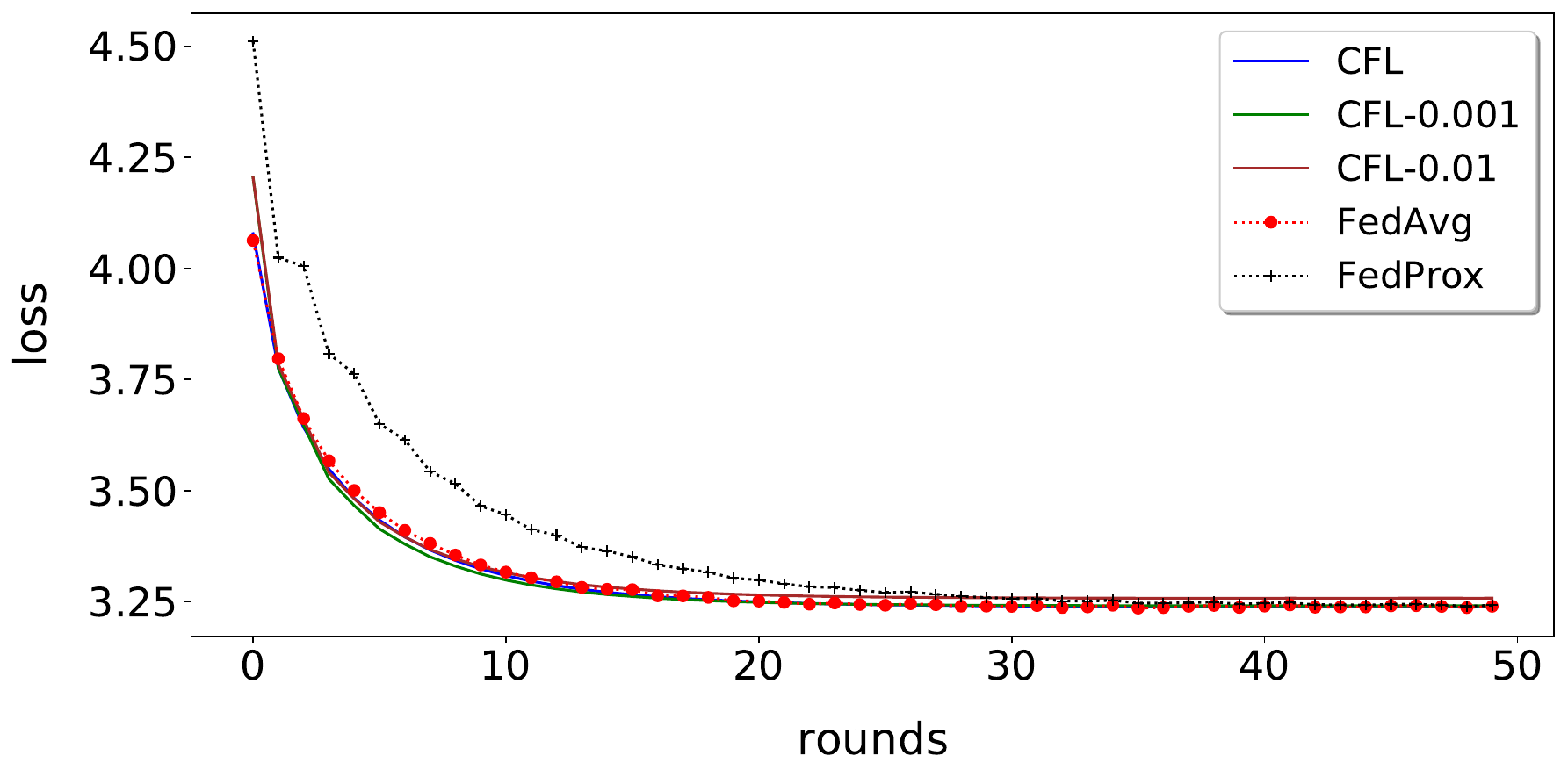}}
	\subfigure[Large $L$, general convex, small round drift]{ \includegraphics[width=.23\textwidth,]{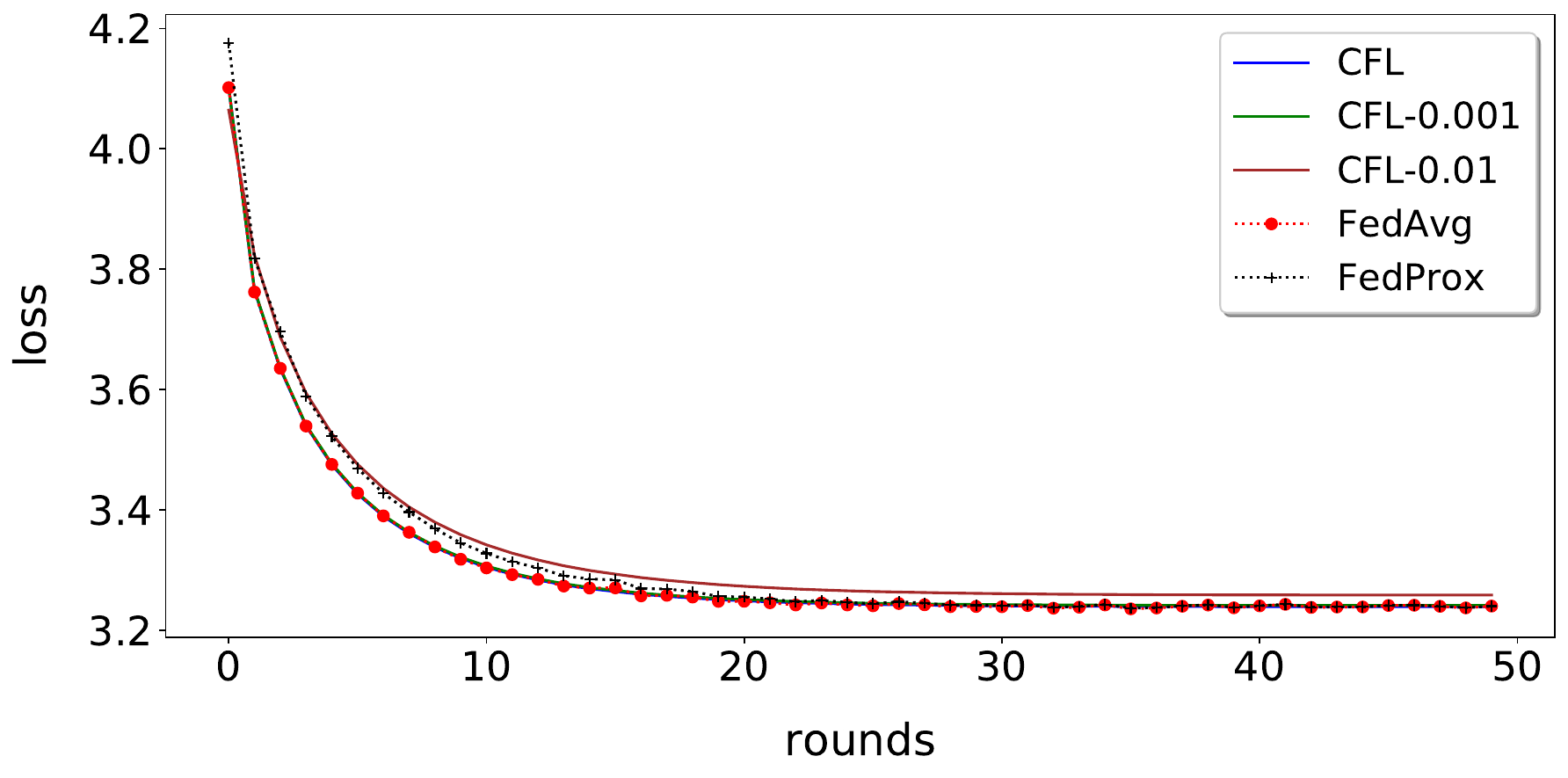}}
	\\
	\centering
	\subfigure[Small $L$, strongly convex, big round drift]{ \includegraphics[width=.23\textwidth,]{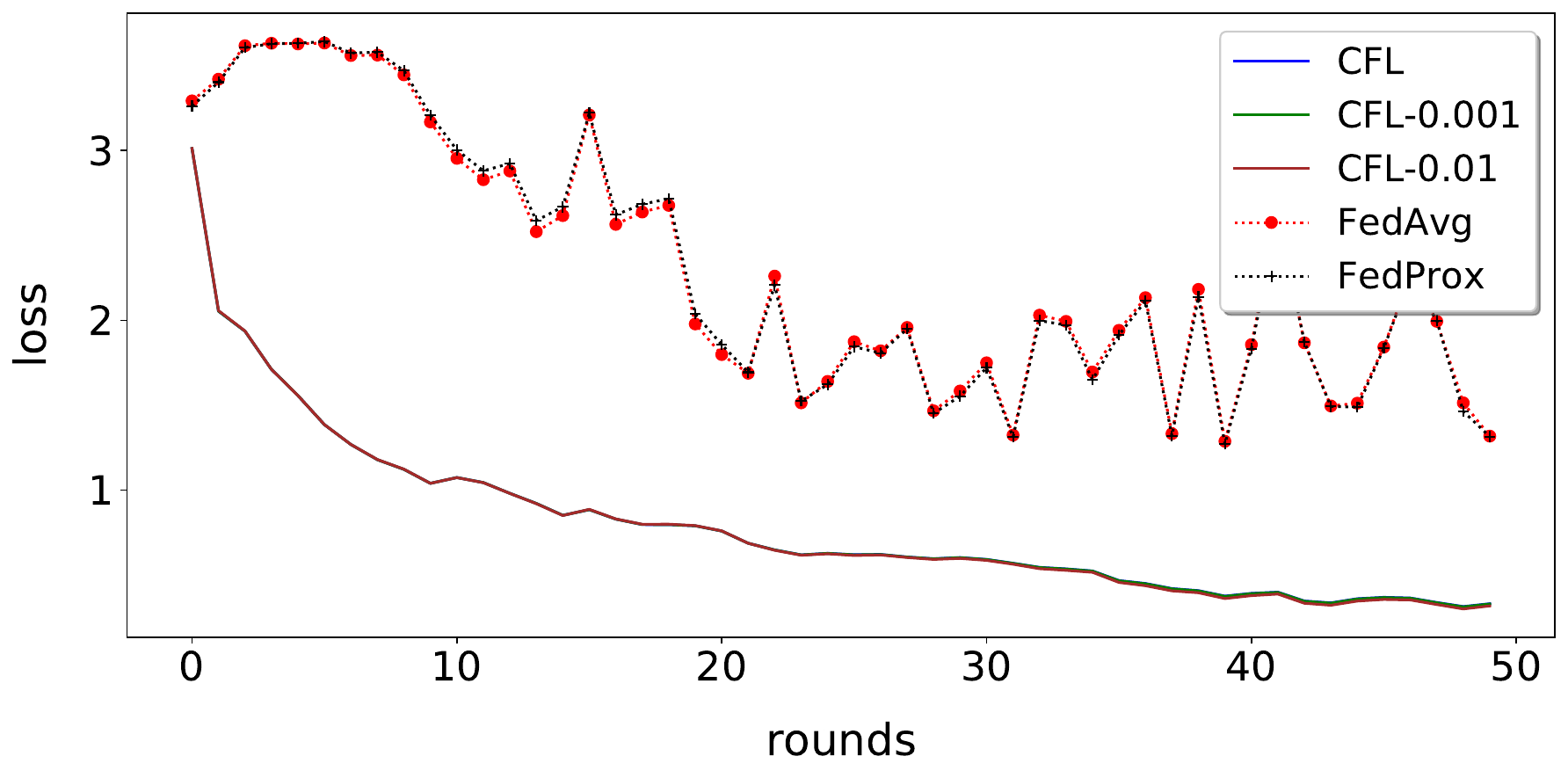}}
	\subfigure[Large $L$, strongly convex, big round drift]{ \includegraphics[width=.23\textwidth,]{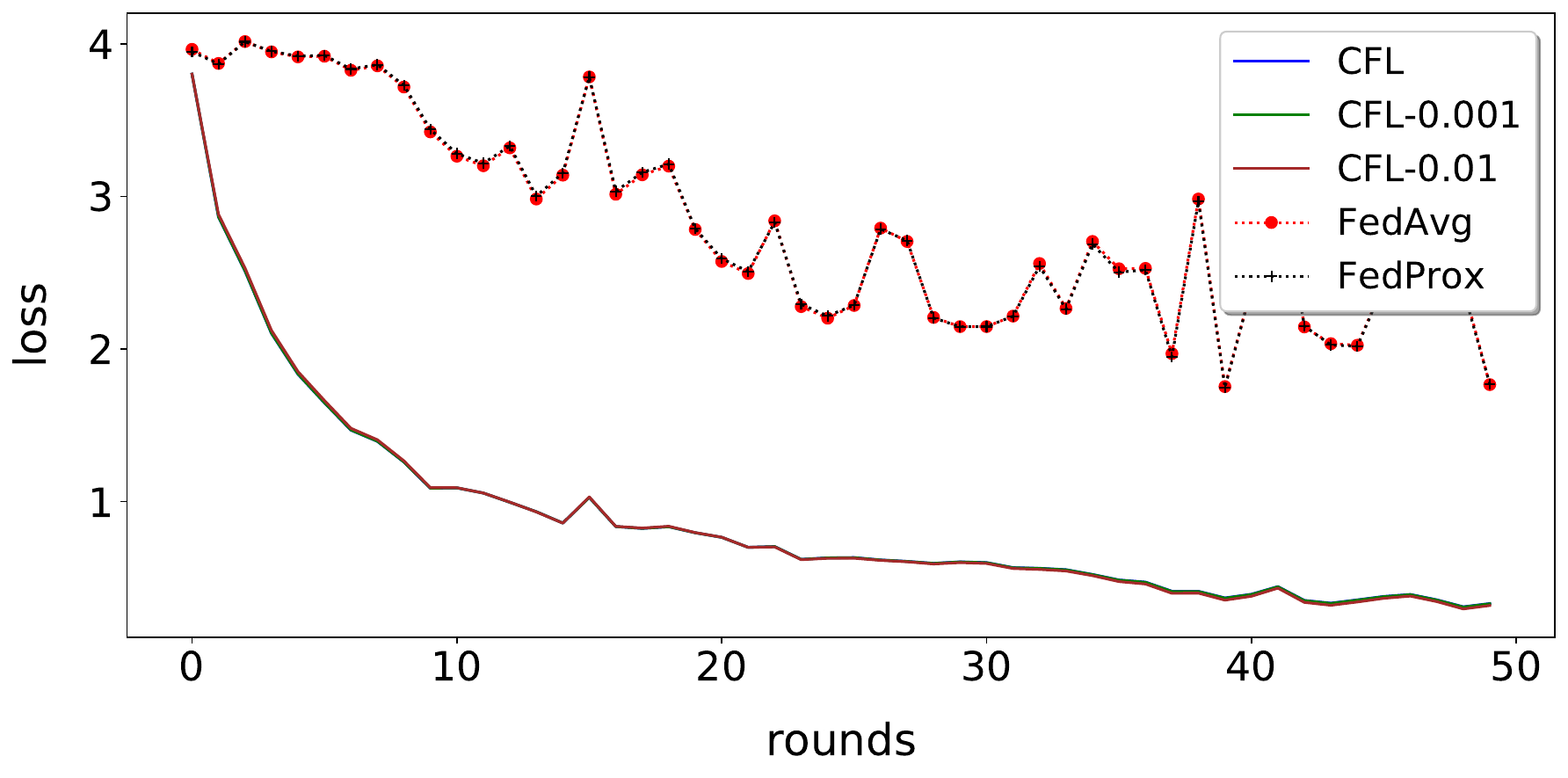}}
	\subfigure[Small $L$, general convex, big round drift]{ \includegraphics[width=.23\textwidth,]{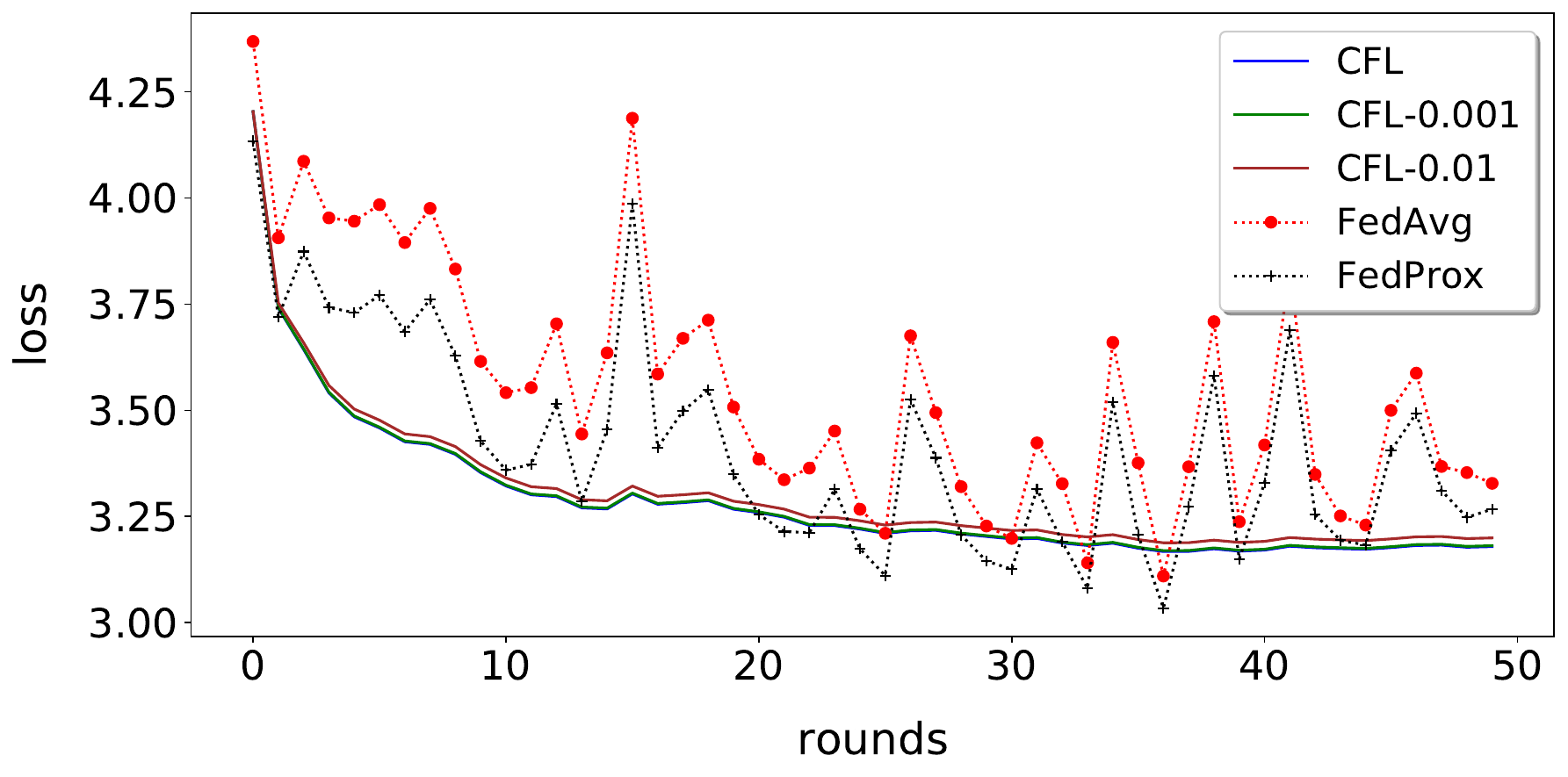}}
	\subfigure[Large $L$, general convex, big round drift]{ \includegraphics[width=.23\textwidth,]{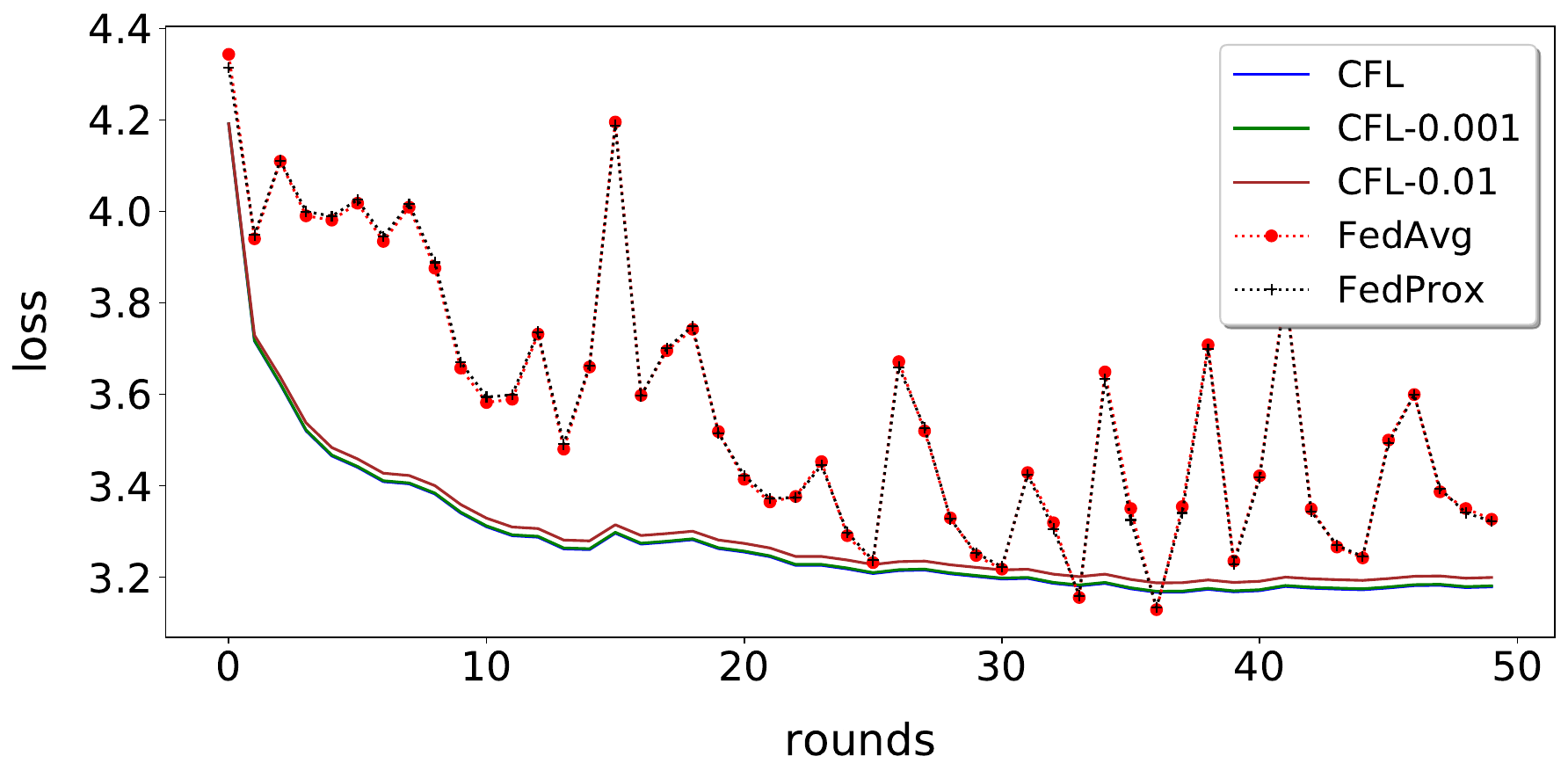}}
	\caption{{Performance of different algorithms on noisy quadratic model.} The curves all evaluate the loss on global test data sets. We set the information loss to 0 for CFL, and CFL-0.01 and CFL-0.001 represent different levels of information loss.}
	\label{Performance of different algorithms on noisy quadratic model Appendix}
\end{figure*}

In this section, we introduce the Noisy Quadratic Models and show the superior performance of CFL over \fedavg and FedProx~\citep{li2018federated}.

\subsubsection{Setup of Noisy Quadratic Models}

We utilize the noisy quadratic model from \cite{zhang2019algorithmic} to simulate the assumptions made in our proof. This model is defined by $f(\momega) = \momega^T \mA \momega + \mB^T \momega + \mC$, where $\momega \in \R^{n}$ is the parameter we aim to optimize and $\mA \in \R^{n \times n}$, $\mB \in \R^{n}$, and $\mC \in \R$ are matrices with $\mA \succeq 0$. To construct $\mA$, we first create a diagonal matrix $\Lambda$ and then let $\mA = \mU^{T} \Lambda \mU$, where $\mU$ is a unitary matrix. By controlling the eigenvalues of $\mA$, we can control the convexity of the model, ensuring that $\mu \le \lambda_{min}(\mA) \le \lambda_{max}(\mA) \le L$. Since the eigenvalues of $\Lambda$ and $\mA$ are the same, it is simple to control the eigenvalues of $\mA$ by controlling those of $\Lambda$.

To simulate Assumptions \ref{Bounded noise in stochastic gradient assumption} and \ref{Bounded gradient noise of CFL assumption}, we introduce the gradient noise model given by
\begin{align*}
	g_{t,i}(\momega) = \mA \momega + \mB + \mdelta_i + \mxi_{t, i} + \mnu_{t, i, k}, \qquad
	\tilde{g}_{t,i}(\momega) = g_{t,i}(\momega) + \cN (\mI, \mI / 1000),
\end{align*}
where $\mdelta_i$ represents the client drift of client $j$, $\mxi_{t, i}$ denotes the round drift of client $j$ on round $i$, and $\mnu_{t, i, k}$ denotes the noise of gradient of client $j$ on round $i$, iteration $k$. All of these drifts and noise are generated from a Normal distribution with zero mean and different variance. The term $\cN (I, I / 1000)$ is used to simulate information loss, and we use $\tilde{g}_{t,i}(\momega)$ instead of $g_{t,i}(\momega)$ in the CFL algorithm, and set different values of $I$ to study the effect of information loss.

In practice, to show if the numerical results match our theoretical results, we conducted eight experiments with varying settings. We examined large and small values of $L$, as well as different levels of convexity and round drift. For $L$, we set $L=20$ for large values and $L=5$ for small values. For convexity, we set $\mu=1$ for strongly convex and $\mu=0$ for general convexity. For round drift, we set $\E{\norm{\mxi_{t,i}}^2}=100$ for large drift and $\E{\norm{\mxi_{t,i}}^2}=0.01$ for small drift. Additionally, we fixed the variance of $\mdelta_i$ and $\mnu_{t,i,k}$ at $\E{\norm{\mdelta_i}^2}=0.01$ and $\E{\norm{\mnu_{t,i,k}}^2}=0.00001$.

To perform gradient descent, we set $\momega_{t,i,k+1} = \momega_{t,i,k} - \eta_l g_{t,i}(\momega_{t,i,k})$. We use the loss function $\norm{A \momega + B}$ because it reaches 0 at the global optimal point for convex functions, and has a large value when far from the global optimal point.

\subsubsection{Results for Noisy Quadratic Models}

The results of our experiments can be found in Table \ref{Best learning rate of different algorithms on noisy quadratic model} and Figure \ref{Performance of different algorithms on noisy quadratic model Appendix}. From these results, we can draw the following conclusions:
\begin{itemize}[nosep,leftmargin=12pt]
	\item CFL algorithms can tolerate higher learning rates. As seen in Table \ref{Best learning rate of different algorithms on noisy quadratic model}, the best learning rates for different FL algorithms on noisy quadratic model are higher for CFL algorithms.
	\item CFL algorithms demonstrate better convergence than other FL baselines. From Figure \ref{Performance of different algorithms on noisy quadratic model Appendix}, it is clear that CFL outperforms other FL baselines in all settings. Additionally, the loss curves of CFL are smoother, indicating a reduced variance of gradients. Furthermore, we find that the convergence of CFL with larger information loss (CFL-0.01) is worse than that of CFL with smaller information loss (CFL and CFL-0.001).
\end{itemize}

\begin{table*}[!t]
	\centering
	\caption{{Best learning rate of different algorithms on noisy quadratic model.} we denote small round drift as SRD, big round drift as BRD, small $L$ as SL, large $L$ as LL, strongly convex as SC, and general convex as GC.}
	\begin{tabular}{c c c c c c c c c}
		\toprule
		        & \multicolumn{4}{c}{SRD} & \multicolumn{4}{c}{BRD}                                                 \\
		\cline{2-5}                                     \cline{6-9}
		Method  & SL-SC                   & LL-SC                   & SL-GC & LL-GC & SL-SC & LL-SC & SL-GC & LL-GC \\
		\midrule
		FedAvg  & 0.03                    & 0.08                    & 0.33  & 0.09  & 0.02  & 0.01  & 0.09  & 0.02  \\
		FedProx & 0.04                    & 0.07                    & 0.35  & 0.09  & 0.02  & 0.01  & 0.26  & 0.02  \\
		CFL     & 0.2                     & 0.09                    & 0.35  & 0.09  & 0.25  & 0.08  & 0.3   & 0.08  \\
		\bottomrule
	\end{tabular}
	\label{Best learning rate of different algorithms on noisy quadratic model}
\end{table*}

\subsection{Experiments Setup for Real Datasets} \label{sec:setup}
\textit{Time-evolving heterogeneous local data sets.}
We study the use of federated learning on split-CIFAR10, split-CIFAR100, and split-Fashion-MNIST data sets for image classification tasks using ResNet18 and a two layer MLP.
The split is based on the Latent Dirichlet Allocation (LDA) method, as proposed in~\cite{yurochkin2019bayesian,hsu2019measuring,reddi2021adaptive}, to control distribution drift with the parameter $\alpha$ (see Algorithm \ref{Data spliting algorithm}).
Higher values of $\alpha$ result in smaller drifts.\\
Unless specifically mentioned otherwise our studies use the following protocol.
All data sets are partitioned to 210 subsets for 7 distinct clients: all clients are selected and trained for 500 communication rounds, and each client randomly samples one of the corresponding 30 subsets for the local training---this challenging time-evolving scenario mimics the realistic client data sampling scheme (from some underlying distributions).
Notably, the training strategy used in this section is applicable to all FL baselines and CFL methods, and we assess the model performance using a global test data set\footnote{
	The training loss/accuracy on \eqref{CFL Formulation} formulation are closely aligned with that of global test data, as shown in Figure~\ref{Difference between train and test loss} of Appendix~\ref{sec:Additional Experiments}.
	We here only present the global test results for the common interests in practice.
}.
We carefully tune the hyper-parameters in all algorithms (details refer to Appendix~\ref{sec:realistic setup appendix}), and report the optimal results (i.e.\ mean test accuracy across the past 5 best epochs) after repeated trials.

\textit{Approximation techniques in CFL methods.}
\label{Approximation techniques in CFL}
Below, we review three representative types of information approximation techniques in CL, and use them to investigate the effect of various types of information loss under the \eqref{CFL Formulation} formulation.
We refer to Appendix~\ref{sec:Approximation Methods Appendix} for a more comprehensive introduction and discussion.
\begin{itemize}[nosep,leftmargin=12pt]
	\item \textit{Regularization methods.}
	      Instead of keeping data sets from previous rounds, we keep track the gradients and Hessian matrices of local objective functions, and use
	      Taylor Extension to approximate the local objective functions in previous rounds.
	      Note that the trade-off between Hessian estimation and computational overhead constrains the practical feasible of such approach.

	\item \textit{Core set methods.}
	      Another simple yet effective treatment in CL lines in the category of Exemplar Replay~\citep{rebuffi2017icarl,castro2018end}.
	      This method selects and regularly saves previous core-set samples (a.k.a.\ exemplars), and replays them with the current local data sets.

	\item \textit{Generative methods.}
	      Maintaining a core set for each client may become impractical when learning scales to millions of clients.
	      To ensure a privacy-preserved federated learning, the generative models~\citep{goodfellow2014generative} could be used locally to capture the local data distribution: fresh data samples will be generated on the fly and combined with the current local data set.
	      For the sake of simplicity, we use Markov Chain Monte Carlo~\citep{nori2014r2} in our assessment.
\end{itemize}

\subsection{Experiments Results for Real Datasets} \label{sec:results}
In this section, we conduct experiments on real data sets and show the effectiveness of CFL-based methods.
\subsubsection{Comments on Different CFL Approximation Techniques}
In Table \ref{Ablation Study Table}, we examine the performance of several approximation techniques using the split-CIFAR10 data set.
We can conclude that
\begin{enumerate}[nosep,leftmargin=12pt]
	\item \emph{Core set methods outperform other methods by a significant margin.}
	      The simple choice of ``naive core set'', i.e.\ randomly and uniformly sample data from the local data set, surpasses the sampling technique described in iCaRL~\citep{rebuffi2017icarl}\footnote{
		      The algorithmic details of the sampling method in iCaRL refer to Algorithm \ref{iCaRL Core Set} in Appendix~\ref{sec:Approximation Methods Appendix}.
	      } for CL, despite their faster convergence in the initial training phase.
	\item \emph{The quality of Hessian estimation matters for regularization based methods.}
	      PyHessian~\citep{Yao2020pyhessian}, as a method to approximate diagonal Hessian matrix, is slightly preferable than Fisher Information Matrix, though the latter one involves less computation.
	\item \emph{The performance of generative methods is restricted}, and we hypothesize that the poor quality of generated samples contributes to the constraint.
\end{enumerate}

In the subsequent evaluation, we consider naive core set sampling for CFL-Core-Set method, and use PyHessian for CFL-Regularization.
We exclude the results of generative methods, due to the trivial performance gain and significant computational overhead.

\begin{table*}[!t]
	\centering
	\caption{
		{Top-1 accuracy for different choices of approximation techniques in CFL.}
		We train ResNet18 on split-CIFAR10 data set (w/ $\alpha = 0.2$) for 300 communication rounds, and the data set is partitioned to 300 subsets for 10 different clients.
		All examined algorithms use \fedavg as the backbone.
	}
	\resizebox{1.\textwidth}{!}{%
		\begin{tabular}{l c c c c c c c}
			\toprule
			\multirow{2}{*}{Algorithm} & \multicolumn{2}{c}{Regularization Methods} & \multicolumn{4}{c}{Core Set Methods} & Generative Methods                                                                                     \\
			\cmidrule(lr){2-3} \cmidrule(lr){4-7} \cmidrule(lr){8-8}

			                           & PyHessian                                  & Fisher Information Matrix            & Naive (Small Set)  & Naive (Large Set)     & iCaRL (Small Set) & iCaRL (Large Set) & MCMC              \\

			\midrule

			Accuracy                   & $74.15 \pm 0.66$                           & $73.70 \pm 0.39$                     & $77.84 \pm 0.06$   & \bm{$78.90 \pm 0.09$} & $76.97 \pm 0.16$  & $77.09 \pm 0.09$  & $74.18 \pm 0.08 $ \\

			\bottomrule
		\end{tabular}%
	}

	\label{Ablation Study Table}
\end{table*}

\begin{table*}[!t]
	\centering
	\caption{{Top-1 accuracy of various CFL methods on diverse data sets} for training ResNet18 with 500 communication rounds.
		In order to observe a noticeable performance difference on Fashion-MNIST,
		we
		use $\alpha = 0.1$ instead.
		All examined algorithms use \fedavg as the backbone.
		Both CFL-Regularization and CFL-Regularization-Full are regularization based method, and the difference lies on where the regularization is applied: the full version applies regularization to all layers while the other only considers the top layers.
		$R_{r}$ indicates the accuracy of CFL-Regularization, while $R_{f}$ denotes the accuracy of \fedavg.
	}
	\resizebox{1.\textwidth}{!}{%
		\begin{tabular}{l c cc}
			\toprule
			Algorithm                          & Accuracy on Fashion-MNIST ($\%$)    & Accuracy on CIFAR10 ($\%$)    & Accuracy on CIFAR100 ($\%$)    \\
			\midrule
			\; \fedavg                         & $86.75 \pm 0.14$                    & $70.51 \pm 0.19$              & $49.97 \pm 0.19$               \\
			\; CFL-Regularization              & $87.02 \pm 0.21$                    & $70.86 \pm 0.31$              & $50.69 \pm 0.06$               \\
			\; CFL-Core-Set                    & $\bm{88.32 \pm 0.12}$           & $\bm{81.48 \pm 0.24}$     & $\bm{53.17 \pm 0.08}$      \\
			\; CFL-Regularization-Full         & $77.37 \pm 0.63$                    & $33.09 \pm 1.45$              & $14.84 \pm 0.12$               \\
			\bottomrule
			\toprule
			Metric                             & Improvement on Fashion-MNIST ($\%$) & Improvement on CIFAR10 ($\%$) & Improvement on CIFAR100 ($\%$) \\
			\midrule
			\; Absolute ($R_{r} - R_{f}$)      & 0.27                                & 0.35                          & $0.72$                         \\
			\; Ratio ($(R_{r} - R_{f}) / R_f$) & 0.31                                & 0.50                          & 1.44                           \\

			\bottomrule
		\end{tabular}%
	}

	\label{Performance on Different data sets}
\end{table*}

\subsubsection{Understanding Various Approximated CFL Implementations on Different Datasets}
Table~\ref{Performance on Different data sets} experimentally studies the impacts of various information approximation techniques in CFL methods---as discussed in Section~\ref{sec:setup}---and compares them with the backbone algorithm of CFL methods, i.e.\ \fedavg.
We have the following consistent findings on different data sets:
\begin{enumerate}[nosep,leftmargin=12pt]
	\item \emph{The improvement of CFL methods over \fedavg becomes larger for more challenging tasks}, as shown in the bottom of the Table \ref{Performance on Different data sets} for regularization based methods.
	      This might reflect the fact that the precision of the information approximation is more crucial for complicated tasks.
	      Similarly, we demonstrate in Figure~\ref{Loss on Different data sets} (in Appendix~\ref{sec:Additional Experiments}) that \emph{CFL methods offer better resistance to time-evolving non-iid data} and the significance of such finding is depending on the task difficulty.
	\item \emph{For CFL-Regularization method, applying regularization terms on top layers works better than that on all layers.}
	      This observation matches the recent work on decoupling feature extractor and classifier~\citep{collins2021exploiting,chen2021bridging,luo2021no}: the bottom layers are more generic across tasks and can serve as a global feature extractor, while the top layers are subject to task-specific information.
	      %
\end{enumerate}

We further investigate the connection between the information loss and the learning performance in Figure \ref{Information Loss Figure}.
We examine CFL methods with naive core sets, where the degree of information loss\footnote{
	We estimate the information loss by using $\frac{1}{tS}\sum_{i = 1}^{S} \sum_{\tau=1}^{t} \norm{\Delta_{\tau, i} (\momega)}$, where $\Delta_{\tau, i} (\momega)$ is defined in Equation~\eqref{information loss definition}, $S$ is the number of clients chosen in each round, and $t$ is the number of communication rounds.
} can be changed by altering the core set size from $20$ to $150$, with the same random seed and optimizers.
Figure \ref{Information Loss Figure} depicts that the \emph{performance of models is highly linked to the value of information loss: the lesser the information loss, the higher the performance.}

\begin{figure}
	\centering
	\includegraphics[width=.45\textwidth,]{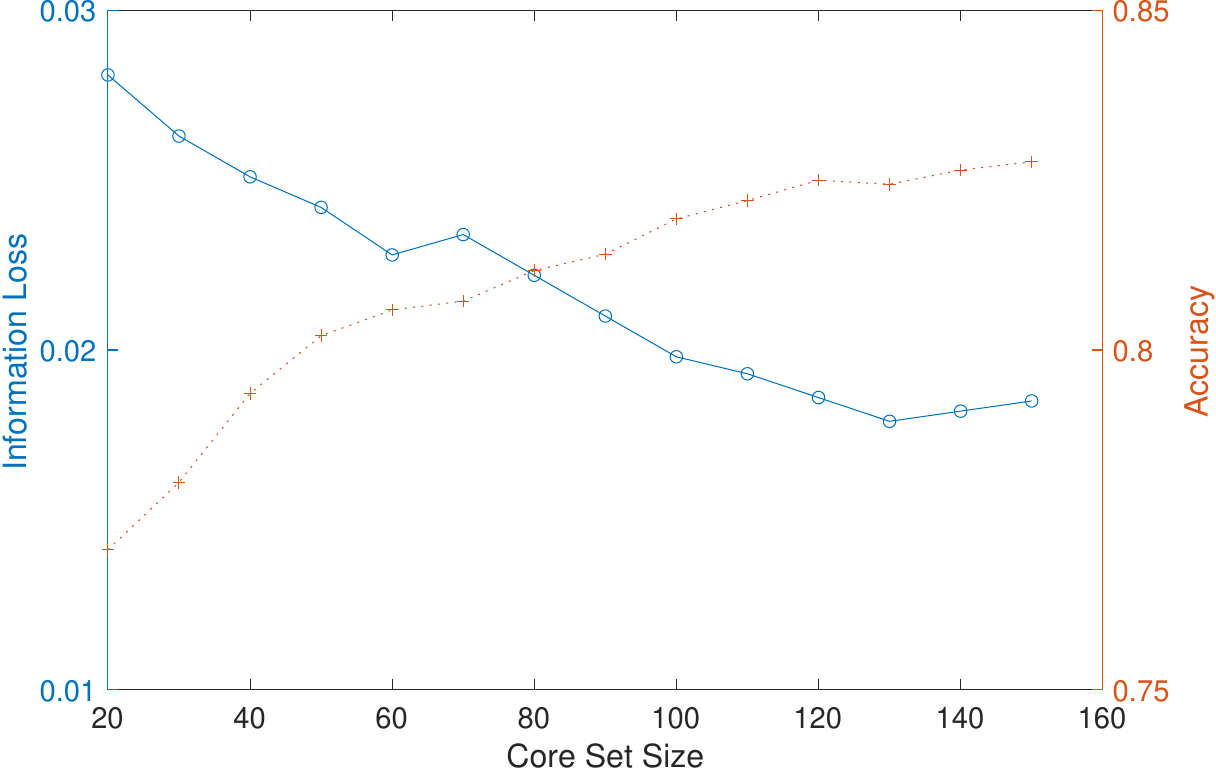}
	\caption{\small \textcolor{myblue}{Information loss} (left y-axis) and \textcolor{myred}{accuracy} (right y-axis) for training ResNet18 on split-CIFAR10 with different core set sizes (x-axis) and $\alpha=0.2$.}
	\label{Information Loss Figure}
\end{figure}

\subsubsection{Superior Performance of CFL Methods over Other Strong FL Baselines}
Alongside the comparison between CFL methods and \fedavg on various data sets (Table~\ref{Performance on Different data sets}), in Table~\ref{Performance of different algorithms} \emph{we verify the efficacy of CFL methods over other strong FL baselines}, on split-CIFAR10 data set.
We also examine MimeLite~\citep{karimireddy2020mime}, a method suggested for stateless FL scenarios.
Note that we exclude the results illustration for methods like SCAFFOLD~\citep{karimireddy2020scaffold} and Ditto~\citep{li2021ditto}, due to their infeasibility to be applied in our continual scenarios\footnote{
	We also naively adapted and examined these methods, but we cannot observe significant performance gains.
}.
We also examine overlapping local data sets in Table \ref{Performance on overlap local data sets} of Appendix \ref{sec:Additional Experiments}.
Results reveal that the advancement of CFL methods is consistent with Table~\ref{Performance of different algorithms}.

\begin{table*}[!t]

	\centering
	\caption{{Comparing the SOTA FL baselines with several CFL methods}, for training ResNet18 on split-CIFAR10 data set with different degrees of non-iid-ness $\alpha$ (and with total 500 communication rounds).}
	\resizebox{1.\textwidth}{!}{%
		\begin{tabular}{l c c c c c c}
			\toprule
			\multirow{2}{*}{Accuracy} & \multicolumn{3}{c}{FL baselines} & \multicolumn{3}{c}{CFL methods}                                                                                                \\
			\cmidrule(lr){2-4} \cmidrule(lr){5-7}

			                          & FedAvg                           & FedProx                         & MimeLite         & CFL-Core-Set          & CFL-Regularization & CFL-Regularization + FedProx \\

			\midrule

			$\alpha = 0.1$            & $70.51 \pm 0.19$                 & $71.57 \pm 0.17$                & $69.55 \pm 0.36$ & \bm{$81.48 \pm 0.24$} & $70.86 \pm 0.31$   & $71.50 \pm 0.07$             \\

			$\alpha = 0.2$            & $77.98 \pm 0.36$                 & $78.40 \pm 0.22$                & $78.65 \pm 0.26$ & \bm{$84.41 \pm 0.11$} & $78.76 \pm 0.26$   & $79.04 \pm 0.01$             \\

			\bottomrule
		\end{tabular}%
	}
	\label{Performance of different algorithms}
\end{table*}

%






\begin{table*}[!t]
	\centering
	\caption{
		{Learning with stateless clients}, regarding training ResNet18 on split-CIFAR10 with $\alpha = 0.2$.
		All examined algorithms use \fedavg as the backbone.
		The details w.r.t.\ CFL-Regularization refer to Appendix~\ref{sec:Approximation Methods Appendix}.
	}
	\resizebox{1.\textwidth}{!}{%
		\begin{tabular}{l c c c c c c c}
			\toprule
			\multirow{2}{*}{Algorithm} & \multicolumn{3}{c}{FL baselines} & \multicolumn{2}{c}{CFL methods}                                                                        \\
			\cmidrule(lr){2-4} \cmidrule(lr){5-6}

			                           & FedAvg                           & MimeLite                        & FedProx          & CFL-Regularization & CFL-Regularization + FedProx \\

			\midrule
			Accuracy                   & $78.07 \pm 0.06$                 & $78.47 \pm 0.19$                & $78.48 \pm 0.05$ & $79.07 \pm 0.25$   & \bm{$79.32 \pm 0.07$}        \\

			\bottomrule
		\end{tabular}%
	}

	\label{Performance on scenario S2}
\end{table*}


\subsubsection{Investigating Stateless FL Scenario}
In previous numerical investigations, we examine the \emph{stateful clients} and assume that these clients would learn and retain their client states throughout the training process.
The remainder of this section will discuss learning with \emph{stateless clients} (i.e.\ each client only appear once).
To do this, we change the data partition strategy such that fresh data partitions are sampled on the fly for each client and communication round.

Table~\ref{Performance on scenario S2} evaluates the above-mentioned scenario.
We find that \emph{CFL-Regularization methods consistently outperform alternative FL baselines}, similar to the observations in Table~\ref{Performance of different algorithms}.
However, due to the nature of stateless under the privacy concern, the advancements of core set method in CFL algorithm cannot be seen in this scenario.
Additionally, we discuss the applicability of different algorithms in a variety of time-evolving scenarios in Table~\ref{Applicable Scenarios for Different Algorithms} of Appendix~\ref{sec:Additional Experiments}.

\section{Conclusion and Future Work}

In this paper, we formally define Continual Federated Learning (CFL) and demonstrate the necessity of saving information from previous rounds in order to improve convergence through theoretical analysis. Additionally, we find that the quality of approximation methods has a significant impact on the convergence of CFL.

We propose that designing approximation methods with reduced information loss in Federated Learning (FL) would be a valuable research direction. Additionally, exploring the potential of more advanced Continual Learning (CL) methods to enhance FL is worth investigating.

\clearpage
\bibliography{paper}

\begin{thebibliography}{57}
\providecommand{\natexlab}[1]{#1}
\providecommand{\url}[1]{\texttt{#1}}
\expandafter\ifx\csname urlstyle\endcsname\relax
  \providecommand{\doi}[1]{doi: #1}\else
  \providecommand{\doi}{doi: \begingroup \urlstyle{rm}\Url}\fi

\bibitem[Bui et~al.(2018)Bui, Nguyen, Swaroop, and Turner]{bui2018partitioned}
Thang~D Bui, Cuong~V Nguyen, Siddharth Swaroop, and Richard~E Turner.
\newblock Partitioned variational inference: A unified framework encompassing
  federated and continual learning.
\newblock \emph{arXiv preprint arXiv:1811.11206}, 2018.

\bibitem[Casado et~al.(2021)Casado, Lema, Criado, Iglesias, Regueiro, and
  Barro]{casado2021concept}
Fernando~E. Casado, Dylan Lema, Marcos~F. Criado, Roberto Iglesias, Carlos~V.
  Regueiro, and Sen{\'{e}}n Barro.
\newblock Concept drift detection and adaptation for federated and continual
  learning.
\newblock \emph{CoRR}, abs/2105.13309, 2021.
\newblock URL \url{https://arxiv.org/abs/2105.13309}.

\bibitem[Castro et~al.(2018)Castro, Mar{\'\i}n-Jim{\'e}nez, Guil, Schmid, and
  Alahari]{castro2018end}
Francisco~M Castro, Manuel~J Mar{\'\i}n-Jim{\'e}nez, Nicol{\'a}s Guil, Cordelia
  Schmid, and Karteek Alahari.
\newblock End-to-end incremental learning.
\newblock In \emph{Proceedings of the European conference on computer vision
  (ECCV)}, pp.\  233--248, 2018.

\bibitem[Chen \& Chao(2021)Chen and Chao]{chen2021bridging}
Hong-You Chen and Wei-Lun Chao.
\newblock On bridging generic and personalized federated learning.
\newblock \emph{arXiv preprint arXiv:2107.00778}, 2021.

\bibitem[Collins et~al.(2021)Collins, Hassani, Mokhtari, and
  Shakkottai]{collins2021exploiting}
Liam Collins, Hamed Hassani, Aryan Mokhtari, and Sanjay Shakkottai.
\newblock Exploiting shared representations for personalized federated
  learning.
\newblock In \emph{arXiv preprint arXiv:2102.07078}, 2021.

\bibitem[Das et~al.(2020)Das, Acharya, Hashemi, Sanghavi, Dhillon, and
  Topcu]{das2020faster}
Rudrajit Das, Anish Acharya, Abolfazl Hashemi, Sujay Sanghavi, Inderjit~S
  Dhillon, and Ufuk Topcu.
\newblock Faster non-convex federated learning via global and local momentum.
\newblock \emph{arXiv preprint arXiv:2012.04061}, 2020.

\bibitem[Diao et~al.(2021)Diao, Ding, and Tarokh]{diao2021heterofl}
Enmao Diao, Jie Ding, and Vahid Tarokh.
\newblock Heterofl: Computation and communication efficient federated learning
  for heterogeneous clients.
\newblock In \emph{International Conference on Learning Representations}, 2021.
\newblock URL \url{https://openreview.net/forum?id=TNkPBBYFkXg}.

\bibitem[Dinh et~al.(2020)Dinh, Tran, and Nguyen]{dinh2020personalized}
Canh~T Dinh, Nguyen~H Tran, and Tuan~Dung Nguyen.
\newblock Personalized federated learning with moreau envelopes.
\newblock In \emph{Advances in Neural Information Processing Systems}, 2020.

\bibitem[French(1999)]{french1999catastrophic}
Robert~M French.
\newblock Catastrophic forgetting in connectionist networks.
\newblock \emph{Trends in cognitive sciences}, 3\penalty0 (4):\penalty0
  128--135, 1999.

\bibitem[Goodfellow et~al.(2014)Goodfellow, Pouget-Abadie, Mirza, Xu,
  Warde-Farley, Ozair, Courville, and Bengio]{goodfellow2014generative}
Ian~J. Goodfellow, Jean Pouget-Abadie, Mehdi Mirza, Bing Xu, David
  Warde-Farley, Sherjil Ozair, Aaron Courville, and Yoshua Bengio.
\newblock Generative adversarial networks, 2014.

\bibitem[Gower et~al.(2019)Gower, Loizou, Qian, Sailanbayev, Shulgin, and
  Richt{\'a}rik]{gower2019sgd}
Robert~Mansel Gower, Nicolas Loizou, Xun Qian, Alibek Sailanbayev, Egor
  Shulgin, and Peter Richt{\'a}rik.
\newblock Sgd: General analysis and improved rates.
\newblock In \emph{International Conference on Machine Learning}, pp.\
  5200--5209. PMLR, 2019.

\bibitem[Haddadpour et~al.(2021)Haddadpour, Kamani, Mokhtari, and
  Mahdavi]{haddadpour2021federated}
Farzin Haddadpour, Mohammad~Mahdi Kamani, Aryan Mokhtari, and Mehrdad Mahdavi.
\newblock Federated learning with compression: Unified analysis and sharp
  guarantees.
\newblock In \emph{International Conference on Artificial Intelligence and
  Statistics}, pp.\  2350--2358. PMLR, 2021.

\bibitem[Hanzely \& Richt{\'a}rik(2020)Hanzely and
  Richt{\'a}rik]{hanzely2020federated}
Filip Hanzely and Peter Richt{\'a}rik.
\newblock Federated learning of a mixture of global and local models.
\newblock \emph{arXiv preprint arXiv:2002.05516}, 2020.

\bibitem[Hsu et~al.(2019)Hsu, Qi, and Brown]{hsu2019measuring}
Tzu-Ming~Harry Hsu, Hang Qi, and Matthew Brown.
\newblock Measuring the effects of non-identical data distribution for
  federated visual classification.
\newblock \emph{arXiv preprint arXiv:1909.06335}, 2019.

\bibitem[Jothimurugesan et~al.(2022)Jothimurugesan, Hsieh, Wang, Joshi, and
  Gibbons]{jothimurugesan2022federated}
Ellango Jothimurugesan, Kevin Hsieh, Jianyu Wang, Gauri Joshi, and Phillip~B
  Gibbons.
\newblock Federated learning under distributed concept drift.
\newblock \emph{arXiv preprint arXiv:2206.00799}, 2022.

\bibitem[Kairouz et~al.(2019)Kairouz, McMahan, Avent, Bellet, Bennis, Bhagoji,
  Bonawitz, Charles, Cormode, Cummings, et~al.]{kairouz2019advances}
Peter Kairouz, H~Brendan McMahan, Brendan Avent, Aur{\'e}lien Bellet, Mehdi
  Bennis, Arjun~Nitin Bhagoji, Kallista Bonawitz, Zachary Charles, Graham
  Cormode, Rachel Cummings, et~al.
\newblock Advances and open problems in federated learning.
\newblock \emph{arXiv preprint arXiv:1912.04977}, 2019.

\bibitem[Karimireddy et~al.(2020{\natexlab{a}})Karimireddy, Jaggi, Kale, Mohri,
  Reddi, Stich, and Suresh]{karimireddy2020mime}
Sai~Praneeth Karimireddy, Martin Jaggi, Satyen Kale, Mehryar Mohri, Sashank~J
  Reddi, Sebastian~U Stich, and Ananda~Theertha Suresh.
\newblock Mime: Mimicking centralized stochastic algorithms in federated
  learning.
\newblock \emph{arXiv preprint arXiv:2008.03606}, 2020{\natexlab{a}}.

\bibitem[Karimireddy et~al.(2020{\natexlab{b}})Karimireddy, Kale, Mohri, Reddi,
  Stich, and Suresh]{karimireddy2020scaffold}
Sai~Praneeth Karimireddy, Satyen Kale, Mehryar Mohri, Sashank Reddi, Sebastian
  Stich, and Ananda~Theertha Suresh.
\newblock Scaffold: Stochastic controlled averaging for federated learning.
\newblock In \emph{International Conference on Machine Learning}, pp.\
  5132--5143. PMLR, 2020{\natexlab{b}}.

\bibitem[Khaled et~al.(2020)Khaled, Mishchenko, and
  Richt{\'a}rik]{khaled2020tighter}
Ahmed Khaled, Konstantin Mishchenko, and Peter Richt{\'a}rik.
\newblock Tighter theory for local sgd on identical and heterogeneous data.
\newblock In \emph{International Conference on Artificial Intelligence and
  Statistics}, pp.\  4519--4529. PMLR, 2020.

\bibitem[Kirkpatrick et~al.(2017)Kirkpatrick, Pascanu, Rabinowitz, Veness,
  Desjardins, Rusu, Milan, Quan, Ramalho, Grabska-Barwinska,
  et~al.]{kirkpatrick2017overcoming}
James Kirkpatrick, Razvan Pascanu, Neil Rabinowitz, Joel Veness, Guillaume
  Desjardins, Andrei~A Rusu, Kieran Milan, John Quan, Tiago Ramalho, Agnieszka
  Grabska-Barwinska, et~al.
\newblock Overcoming catastrophic forgetting in neural networks.
\newblock \emph{Proceedings of the national academy of sciences}, 114\penalty0
  (13):\penalty0 3521--3526, 2017.

\bibitem[Koloskova et~al.(2020)Koloskova, Loizou, Boreiri, Jaggi, and
  Stich]{koloskova2020unified}
Anastasia Koloskova, Nicolas Loizou, Sadra Boreiri, Martin Jaggi, and Sebastian
  Stich.
\newblock A unified theory of decentralized sgd with changing topology and
  local updates.
\newblock In \emph{International Conference on Machine Learning}, pp.\
  5381--5393. PMLR, 2020.

\bibitem[Li et~al.(2018)Li, Sahu, Zaheer, Sanjabi, Talwalkar, and
  Smith]{li2018federated}
Tian Li, Anit~Kumar Sahu, Manzil Zaheer, Maziar Sanjabi, Ameet Talwalkar, and
  Virginia Smith.
\newblock Federated optimization in heterogeneous networks.
\newblock \emph{arXiv preprint arXiv:1812.06127}, 2018.

\bibitem[Li et~al.(2020{\natexlab{a}})Li, Sanjabi, Beirami, and
  Smith]{LiSBS20fair}
Tian Li, Maziar Sanjabi, Ahmad Beirami, and Virginia Smith.
\newblock Fair resource allocation in federated learning.
\newblock In \emph{8th International Conference on Learning Representations,
  {ICLR} 2020, Addis Ababa, Ethiopia, April 26-30, 2020}. OpenReview.net,
  2020{\natexlab{a}}.
\newblock URL \url{https://openreview.net/forum?id=ByexElSYDr}.

\bibitem[Li et~al.(2021)Li, Hu, Beirami, and Smith]{li2021ditto}
Tian Li, Shengyuan Hu, Ahmad Beirami, and Virginia Smith.
\newblock Ditto: Fair and robust federated learning through personalization.
\newblock In \emph{International Conference on Machine Learning}, pp.\
  6357--6368. PMLR, 2021.

\bibitem[Li et~al.(2020{\natexlab{b}})Li, Huang, Yang, Wang, and
  Zhang]{li2020on}
Xiang Li, Kaixuan Huang, Wenhao Yang, Shusen Wang, and Zhihua Zhang.
\newblock On the convergence of fedavg on non-iid data.
\newblock In \emph{International Conference on Learning Representations},
  2020{\natexlab{b}}.
\newblock URL \url{https://openreview.net/forum?id=HJxNAnVtDS}.

\bibitem[Li \& Hoiem(2017)Li and Hoiem]{li2017learning}
Zhizhong Li and Derek Hoiem.
\newblock Learning without forgetting.
\newblock \emph{IEEE transactions on pattern analysis and machine
  intelligence}, 40\penalty0 (12):\penalty0 2935--2947, 2017.

\bibitem[Lin et~al.(2020{\natexlab{a}})Lin, Kong, Stich, and
  Jaggi]{lin2020ensemble}
Tao Lin, Lingjing Kong, Sebastian~U Stich, and Martin Jaggi.
\newblock Ensemble distillation for robust model fusion in federated learning.
\newblock In \emph{Advances in Neural Information Processing Systems},
  2020{\natexlab{a}}.

\bibitem[Lin et~al.(2020{\natexlab{b}})Lin, Stich, Patel, and
  Jaggi]{lin2020dont}
Tao Lin, Sebastian~U. Stich, Kumar~Kshitij Patel, and Martin Jaggi.
\newblock Don't use large mini-batches, use local sgd.
\newblock In \emph{International Conference on Learning Representations},
  2020{\natexlab{b}}.
\newblock URL \url{https://openreview.net/forum?id=B1eyO1BFPr}.

\bibitem[Luo et~al.(2021)Luo, Chen, Hu, Zhang, Liang, and Feng]{luo2021no}
Mi~Luo, Fei Chen, Dapeng Hu, Yifan Zhang, Jian Liang, and Jiashi Feng.
\newblock No fear of heterogeneity: Classifier calibration for federated
  learning with non-iid data.
\newblock \emph{arXiv preprint arXiv:2106.05001}, 2021.

\bibitem[Maltoni \& Lomonaco(2019)Maltoni and Lomonaco]{maltoni2019continuous}
Davide Maltoni and Vincenzo Lomonaco.
\newblock Continuous learning in single-incremental-task scenarios.
\newblock \emph{Neural Networks}, 116:\penalty0 56--73, 2019.

\bibitem[McMahan et~al.(2017)McMahan, Moore, Ramage, Hampson, and
  y~Arcas]{mcmahan2017communication}
Brendan McMahan, Eider Moore, Daniel Ramage, Seth Hampson, and Blaise~Aguera
  y~Arcas.
\newblock Communication-efficient learning of deep networks from decentralized
  data.
\newblock In \emph{Artificial Intelligence and Statistics}, pp.\  1273--1282.
  PMLR, 2017.

\bibitem[Mitra et~al.(2021)Mitra, Jaafar, Pappas, and
  Hassani]{mitra2021achieving}
Aritra Mitra, Rayana Jaafar, George~J Pappas, and Hamed Hassani.
\newblock Achieving linear convergence in federated learning under objective
  and systems heterogeneity.
\newblock \emph{arXiv preprint arXiv:2102.07053}, 2021.

\bibitem[Mohri et~al.(2019)Mohri, Sivek, and Suresh]{MohriSS19agnostic}
Mehryar Mohri, Gary Sivek, and Ananda~Theertha Suresh.
\newblock Agnostic federated learning.
\newblock In Kamalika Chaudhuri and Ruslan Salakhutdinov (eds.),
  \emph{Proceedings of the 36th International Conference on Machine Learning,
  {ICML} 2019, 9-15 June 2019, Long Beach, California, {USA}}, volume~97 of
  \emph{Proceedings of Machine Learning Research}, pp.\  4615--4625. {PMLR},
  2019.
\newblock URL \url{http://proceedings.mlr.press/v97/mohri19a.html}.

\bibitem[Nori et~al.(2014)Nori, Hur, Rajamani, and Samuel]{nori2014r2}
Aditya Nori, Chung-Kil Hur, Sriram Rajamani, and Selva Samuel.
\newblock R2: An efficient mcmc sampler for probabilistic programs.
\newblock In \emph{Proceedings of the AAAI Conference on Artificial
  Intelligence}, volume~28, 2014.

\bibitem[Patel \& Dieuleveut(2019)Patel and Dieuleveut]{patel2019communication}
Kumar~Kshitij Patel and Aymeric Dieuleveut.
\newblock Communication trade-offs for synchronized distributed sgd with large
  step size.
\newblock In \emph{Advances in Neural Information Processing Systems}, 2019.

\bibitem[Rebuffi et~al.(2017)Rebuffi, Kolesnikov, Sperl, and
  Lampert]{rebuffi2017icarl}
Sylvestre-Alvise Rebuffi, Alexander Kolesnikov, Georg Sperl, and Christoph~H
  Lampert.
\newblock icarl: Incremental classifier and representation learning.
\newblock In \emph{Proceedings of the IEEE conference on Computer Vision and
  Pattern Recognition}, pp.\  2001--2010, 2017.

\bibitem[Reddi et~al.(2021)Reddi, Charles, Zaheer, Garrett, Rush,
  Kone{\v{c}}n{\'y}, Kumar, and McMahan]{reddi2021adaptive}
Sashank~J. Reddi, Zachary Charles, Manzil Zaheer, Zachary Garrett, Keith Rush,
  Jakub Kone{\v{c}}n{\'y}, Sanjiv Kumar, and Hugh~Brendan McMahan.
\newblock Adaptive federated optimization.
\newblock In \emph{International Conference on Learning Representations}, 2021.
\newblock URL \url{https://openreview.net/forum?id=LkFG3lB13U5}.

\bibitem[Reisizadeh et~al.(2020)Reisizadeh, Farnia, Pedarsani, and
  Jadbabaie]{ReisizadehFPJ20robust}
Amirhossein Reisizadeh, Farzan Farnia, Ramtin Pedarsani, and Ali Jadbabaie.
\newblock Robust federated learning: The case of affine distribution shifts.
\newblock In Hugo Larochelle, Marc'Aurelio Ranzato, Raia Hadsell,
  Maria{-}Florina Balcan, and Hsuan{-}Tien Lin (eds.), \emph{Advances in Neural
  Information Processing Systems 33: Annual Conference on Neural Information
  Processing Systems 2020, NeurIPS 2020, December 6-12, 2020, virtual}, 2020.
\newblock URL
  \url{https://proceedings.neurips.cc/paper/2020/hash/f5e536083a438cec5b64a4954abc17f1-Abstract.html}.

\bibitem[Rusu et~al.(2016)Rusu, Rabinowitz, Desjardins, Soyer, Kirkpatrick,
  Kavukcuoglu, Pascanu, and Hadsell]{rusu2016progressive}
Andrei~A Rusu, Neil~C Rabinowitz, Guillaume Desjardins, Hubert Soyer, James
  Kirkpatrick, Koray Kavukcuoglu, Razvan Pascanu, and Raia Hadsell.
\newblock Progressive neural networks.
\newblock \emph{arXiv preprint arXiv:1606.04671}, 2016.

\bibitem[Sammut \& Webb(2010)Sammut and Webb]{ref1}
Claude Sammut and Geoffrey~I. Webb (eds.).
\newblock \emph{Neuro-Dynamic Programming}, pp.\  716--716.
\newblock Springer US, Boston, MA, 2010.
\newblock ISBN 978-0-387-30164-8.
\newblock \doi{10.1007/978-0-387-30164-8_588}.
\newblock URL \url{https://doi.org/10.1007/978-0-387-30164-8_588}.

\bibitem[Shoham et~al.(2019)Shoham, Avidor, Keren, Israel, Benditkis,
  Mor{-}Yosef, and Zeitak]{shoham2019overcoming}
Neta Shoham, Tomer Avidor, Aviv Keren, Nadav Israel, Daniel Benditkis, Liron
  Mor{-}Yosef, and Itai Zeitak.
\newblock Overcoming forgetting in federated learning on non-iid data.
\newblock \emph{CoRR}, abs/1910.07796, 2019.
\newblock URL \url{http://arxiv.org/abs/1910.07796}.

\bibitem[Stich(2019)]{stich2018local}
Sebastian~U. Stich.
\newblock Local {SGD} converges fast and communicates little.
\newblock In \emph{International Conference on Learning Representations}, 2019.
\newblock URL \url{https://openreview.net/forum?id=S1g2JnRcFX}.

\bibitem[Stich \& Karimireddy(2020)Stich and Karimireddy]{stich2020error}
Sebastian~U Stich and Sai~Praneeth Karimireddy.
\newblock The error-feedback framework: Better rates for sgd with delayed
  gradients and compressed updates.
\newblock \emph{Journal of Machine Learning Research}, 21:\penalty0 1--36,
  2020.

\bibitem[Usmanova et~al.(2021)Usmanova, Portet, Lalanda, and
  Vega]{usmanova2021distillationbased}
Anastasiia Usmanova, François Portet, Philippe Lalanda, and German Vega.
\newblock A distillation-based approach integrating continual learning and
  federated learning for pervasive services, 2021.

\bibitem[Wang et~al.(2020)Wang, Liu, Liang, Joshi, and Poor]{wang2020tackling}
Jianyu Wang, Qinghua Liu, Hao Liang, Gauri Joshi, and H~Vincent Poor.
\newblock Tackling the objective inconsistency problem in heterogeneous
  federated optimization.
\newblock \emph{arXiv preprint arXiv:2007.07481}, 2020.

\bibitem[Woodworth et~al.(2020{\natexlab{a}})Woodworth, Patel, and
  Srebro]{woodworth2020minibatch}
Blake Woodworth, Kumar~Kshitij Patel, and Nathan Srebro.
\newblock Minibatch vs local sgd for heterogeneous distributed learning.
\newblock In \emph{NeurIPS 2020 - Thirty-fourth Conference on Neural
  Information Processing Systems}, 2020{\natexlab{a}}.

\bibitem[Woodworth et~al.(2020{\natexlab{b}})Woodworth, Patel, Stich, Dai,
  Bullins, Mcmahan, Shamir, and Srebro]{woodworth2020local}
Blake Woodworth, Kumar~Kshitij Patel, Sebastian Stich, Zhen Dai, Brian Bullins,
  Brendan Mcmahan, Ohad Shamir, and Nathan Srebro.
\newblock Is local sgd better than minibatch sgd?
\newblock In \emph{International Conference on Machine Learning}, pp.\
  10334--10343. PMLR, 2020{\natexlab{b}}.

\bibitem[Xu et~al.(2022)Xu, Hong, Huang, and Jiang]{xu2022acceleration}
Chencheng Xu, Zhiwei Hong, Minlie Huang, and Tao Jiang.
\newblock Acceleration of federated learning with alleviated forgetting in
  local training.
\newblock \emph{arXiv preprint arXiv:2203.02645}, 2022.

\bibitem[Yao et~al.(2020)Yao, Gholami, Keutzer, and Mahoney]{Yao2020pyhessian}
Zhewei Yao, Amir Gholami, Kurt Keutzer, and Michael~W. Mahoney.
\newblock Pyhessian: Neural networks through the lens of the hessian.
\newblock In Xintao Wu, Chris Jermaine, Li~Xiong, Xiaohua Hu, Olivera Kotevska,
  Siyuan Lu, Weija Xu, Srinivas Aluru, Chengxiang Zhai, Eyhab Al{-}Masri,
  Zhiyuan Chen, and Jeff Saltz (eds.), \emph{{IEEE} International Conference on
  Big Data, Big Data 2020, Atlanta, GA, USA, December 10-13, 2020}, pp.\
  581--590. {IEEE}, 2020.
\newblock \doi{10.1109/BigData50022.2020.9378171}.
\newblock URL \url{https://doi.org/10.1109/BigData50022.2020.9378171}.

\bibitem[Yin et~al.(2020{\natexlab{a}})Yin, Farajtabar, and Li]{Yin20continual}
Dong Yin, Mehrdad Farajtabar, and Ang Li.
\newblock {SOLA:} continual learning with second-order loss approximation.
\newblock \emph{CoRR}, abs/2006.10974, 2020{\natexlab{a}}.
\newblock URL \url{https://arxiv.org/abs/2006.10974}.

\bibitem[Yin et~al.(2020{\natexlab{b}})Yin, Farajtabar, Li, Levine, and
  Mott]{yin2020optimization}
Dong Yin, Mehrdad Farajtabar, Ang Li, Nir Levine, and Alex Mott.
\newblock Optimization and generalization of regularization-based continual
  learning: a loss approximation viewpoint.
\newblock \emph{arXiv preprint arXiv:2006.10974}, 2020{\natexlab{b}}.

\bibitem[Yoon et~al.(2021)Yoon, Jeong, Lee, Yang, and Hwang]{yoon2021federated}
Jaehong Yoon, Wonyong Jeong, Giwoong Lee, Eunho Yang, and Sung~Ju Hwang.
\newblock Federated continual learning with weighted inter-client transfer.
\newblock In \emph{International Conference on Machine Learning}, pp.\
  12073--12086. PMLR, 2021.

\bibitem[Yurochkin et~al.(2019)Yurochkin, Agarwal, Ghosh, Greenewald, Hoang,
  and Khazaeni]{yurochkin2019bayesian}
Mikhail Yurochkin, Mayank Agarwal, Soumya Ghosh, Kristjan Greenewald, Nghia
  Hoang, and Yasaman Khazaeni.
\newblock Bayesian nonparametric federated learning of neural networks.
\newblock In \emph{International Conference on Machine Learning}, pp.\
  7252--7261. PMLR, 2019.

\bibitem[Zenke et~al.(2017)Zenke, Poole, and Ganguli]{zenke2017continual}
Friedemann Zenke, Ben Poole, and Surya Ganguli.
\newblock Continual learning through synaptic intelligence.
\newblock In \emph{International Conference on Machine Learning}, pp.\
  3987--3995. PMLR, 2017.

\bibitem[Zhang et~al.(2019)Zhang, Li, Nado, Martens, Sachdeva, Dahl, Shallue,
  and Grosse]{zhang2019algorithmic}
Guodong Zhang, Lala Li, Zachary Nado, James Martens, Sushant Sachdeva, George~E
  Dahl, Christopher~J Shallue, and Roger Grosse.
\newblock Which algorithmic choices matter at which batch sizes? insights from
  a noisy quadratic model.
\newblock In \emph{Advances in Neural Information Processing Systems}, 2019.

\bibitem[Zhu et~al.(2021)Zhu, Xu, Chen, Kone{\v{c}}n{\`y}, Hard, and
  Goldstein]{zhu2021diurnal}
Chen Zhu, Zheng Xu, Mingqing Chen, Jakub Kone{\v{c}}n{\`y}, Andrew Hard, and
  Tom Goldstein.
\newblock Diurnal or nocturnal? federated learning of multi-branch networks
  from periodically shifting distributions.
\newblock In \emph{International Conference on Learning Representations}, 2021.

\bibitem[Zinkevich et~al.(2010)Zinkevich, Weimer, Smola, and
  Li]{zinkevich2010parallelized}
Martin Zinkevich, Markus Weimer, Alexander~J Smola, and Lihong Li.
\newblock Parallelized stochastic gradient descent.
\newblock In \emph{NIPS}, volume~4, pp.\ ~4. Citeseer, 2010.

\end{thebibliography}
\bibliographystyle{configuration/iclr2022_conference}
\clearpage
\appendix
\onecolumn
{
	\hypersetup{linkcolor=black}
	\parskip=0em
	\renewcommand{\contentsname}{Contents of Appendix}
	\tableofcontents
	\addtocontents{toc}{\protect\setcounter{tocdepth}{3}}
}

\section{Techniques}
Here we show some technical lemmas which are helpful in the theoretical proof.

\begin{lemma}[Linear convergence rate {\citep[Lemma 1]{karimireddy2020scaffold}}]
	\label{linear convergence rate}
	For every non-negative sequence $\{d_{r-1}\}_{r \ge 1}$, and any parameter $\mu > 0$, $T \ge \frac{1}{2 \eta_{max} \mu}$, there exists a constant step-size $\eta \le \eta_{max}$ and weights $\omega_t = (1 - \mu \eta)^{1-t}$ such that for $W_T = \sum_{t=1}^{T+1} \omega_t$
	\begin{small}
		\begin{talign*}
			\Phi_{T} = \frac{1}{W_T} \sum_{t=1}^{T+1} \left(\frac{\omega_{t-1}}{\eta}(1 - \mu \eta) d_{t-1} - \frac{\omega_t}{\eta} d_t \right)
			= \cO \left(  \mu d_0 exp(- \mu \eta_{max}T) \right) \,.
		\end{talign*}
	\end{small}
\end{lemma}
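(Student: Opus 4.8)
The plan is to exploit the fact that, despite its appearance, the weighted sum defining $\Phi_T$ collapses to a single dominant boundary term plus a non-positive remainder, after which the stated rate follows from evaluating the geometric normalizer $W_T$ exactly. I write $a := 1-\mu\eta$ and fix $\eta = \eta_{max}$; this is admissible since the statement only requires $\eta\le\eta_{max}$, and because the lemma has no additive error term, the largest allowed step-size gives the fastest decay. I assume throughout that $\mu\eta_{max}<1$, so that $a\in(0,1)$ and the weights $\omega_t=a^{1-t}$ are positive and increasing.

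First I would split the summand and reindex. Writing $S_1=\frac{a}{\eta}\sum_{t=1}^{T+1}\omega_{t-1}d_{t-1}$ and $S_2=\frac1\eta\sum_{t=1}^{T+1}\omega_t d_t$, substituting $s=t-1$ in $S_1$ makes both sums share a common range, and pairing the terms carrying the same $d_s$ yields
\[ S_1 - S_2 = \frac{a\,\omega_0}{\eta}d_0 - \mu\sum_{s=1}^{T}\omega_s d_s - \frac{\omega_{T+1}}{\eta}d_{T+1}, \]
where the coefficient $-\mu$ on the middle sum arises exactly from $\frac1\eta(a-1)=-\mu$. Since $\{d_s\}$ is non-negative and $\omega_s>0$, the last two groups are non-positive and can be discarded, leaving $\Phi_T W_T = S_1-S_2 \le \frac{a\,\omega_0}{\eta}d_0 = \frac{a^2}{\eta}d_0$ after using $\omega_0=a$.

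Next I would evaluate the normalizer exactly as a geometric series, $W_T=\sum_{t=1}^{T+1}a^{1-t}=\frac{a^{-T}-a}{\mu\eta}$, and divide. The step-size $\eta$ then cancels and a factor $\mu$ reappears in the numerator, giving $\Phi_T \le \frac{\mu a^2 d_0}{a^{-T}-a}=\frac{\mu\,a^{T+2}d_0}{1-a^{T+1}}$. This is the pivotal computation: a crude bound such as $W_T\ge\omega_{T+1}$ would leak a spurious $1/\eta_{max}$ factor instead of the desired $\mu$, so the exact closed form of $W_T$ is essential. Finally I invoke the hypothesis $T\ge\frac{1}{2\eta_{max}\mu}$: together with $a\le e^{-\mu\eta_{max}}$ it gives $a^{T+1}\le e^{-\mu\eta_{max}(T+1)}\le e^{-1/2}$, so $1-a^{T+1}$ stays above the absolute constant $1-e^{-1/2}$, while $a^{T+2}\le a^{T}\le e^{-\mu\eta_{max}T}$. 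Combining these bounds yields $\Phi_T\le \frac{1}{1-e^{-1/2}}\,\mu d_0\exp(-\mu\eta_{max}T)=\tilde{\cO}\!\left(\mu d_0\exp(-\mu\eta_{max}T)\right)$.

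The argument is essentially careful bookkeeping, so I do not expect a genuine conceptual obstacle. The only delicate point is the third step: recovering the $\mu$ prefactor demands the exact geometric value of $W_T$ rather than a single-term lower bound, and the condition $T\ge\frac1{2\eta_{max}\mu}$ must be used precisely to keep $1-a^{T+1}$ bounded away from zero so that the constant absorbed into $\tilde{\cO}$ is legitimate.
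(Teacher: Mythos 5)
Your proof is correct. The paper itself does not prove this lemma---it imports it verbatim by citation from \citet{karimireddy2020scaffold}---and your argument (pairing the shifted weighted terms so that the interior coefficients become $-\mu\omega_s\le 0$, discarding the non-positive remainder, and dividing by the exact geometric value of $W_T$ so that the $\mu$ prefactor emerges and $T\ge\tfrac{1}{2\eta_{max}\mu}$ keeps $1-a^{T+1}$ bounded below) is essentially the standard derivation used in that cited source, specialized to the case with no additive $c\eta$ term so that $\eta=\eta_{max}$ is the right choice. The only caveat is your explicit assumption $\mu\eta_{max}<1$, which the paper's restatement omits but which is present in the original lemma (there $\eta_{max}\in(0,1/\mu]$) and is genuinely needed for the weights $\omega_t=(1-\mu\eta)^{1-t}$ to be positive.
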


\begin{lemma}[Sub-linear convergence rate {\citep[Lemma 2]{karimireddy2020scaffold}}]
	\label{sub-linear convergence rate}
	For every non-negative sequence $\{d_{r-1}\}_{r \ge 1}$ and any parameters $\eta_{max} > 0$, $c_1 \ge 0$, $c_2 \ge 0$, $T \ge 0$, there exists a constant step-size $\eta \le \eta_{max}$ such that
	\begin{small}
		\begin{talign*}
			\Phi_{T} = \frac{1}{T + 1} \sum_{t=1}^{T+1} \left(  \frac{d_{t-1}}{\eta} - \frac{d_{t}}{\eta} + c_1 \eta + c_2 \eta^2 \right)
			\le \frac{d_0}{\eta_{max}(T+1)} + \frac{2 \sqrt{c_1 d_0}}{\sqrt{T + 1}} + 2 \left( \frac{d_0}{T + 1} \right)^{\frac{2}{3}} c_2^{\frac{1}{3}} \,.
		\end{talign*}
	\end{small}
\end{lemma}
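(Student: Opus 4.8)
The plan is to collapse the telescoping part of the sum first and thereby reduce the statement to a one-dimensional step-size tuning problem. In the summand $\frac{d_{t-1}}{\eta} - \frac{d_t}{\eta}$ the two pieces cancel across consecutive indices, so summing over $t = 1, \ldots, T+1$ leaves $\frac{d_0 - d_{T+1}}{\eta} \le \frac{d_0}{\eta}$, where the inequality uses $d_{T+1} \ge 0$ and $\eta > 0$. The terms $c_1 \eta + c_2 \eta^2$ do not depend on $t$, so after dividing by $T+1$ I obtain, for every admissible $\eta \le \eta_{max}$,
\begin{align}
	\Phi_T \le \frac{d_0}{\eta (T+1)} + c_1 \eta + c_2 \eta^2 = \frac{r}{\eta} + c_1 \eta + c_2 \eta^2 \, , \qquad r := \frac{d_0}{T+1} \, .
\end{align}
The lemma then amounts to exhibiting a single admissible $\eta$ for which the right-hand side $\Psi(\eta) := \frac{r}{\eta} + c_1 \eta + c_2 \eta^2$ is at most the claimed bound.

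The key step is the choice
\begin{align}
	\eta = \min \left\{ \eta_{max}, \, \left( \tfrac{r}{c_1} \right)^{1/2}, \, \left( \tfrac{r}{c_2} \right)^{1/3} \right\} \, ,
\end{align}
which simultaneously under-shoots the unconstrained minimizers that balance $\frac{r}{\eta}$ against $c_1 \eta$ and against $c_2 \eta^2$ respectively; the degenerate cases $c_1 = 0$ or $c_2 = 0$ send the corresponding entry to $+\infty$ and are absorbed automatically. With this $\eta$ the two growing terms are controlled directly: $\eta \le (r/c_1)^{1/2}$ gives $c_1 \eta \le \sqrt{c_1 r}$, and $\eta \le (r/c_2)^{1/3}$ gives $c_2 \eta^2 \le c_2^{1/3} r^{2/3}$.

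It then remains to bound the decaying term $\frac{r}{\eta}$. Since $\frac{1}{\eta}$ is exactly the maximum of $\frac{1}{\eta_{max}}$, $(c_1/r)^{1/2}$, and $(c_2/r)^{1/3}$, and a maximum is bounded by the sum of its arguments, I would write $\frac{r}{\eta} \le \frac{r}{\eta_{max}} + \sqrt{c_1 r} + c_2^{1/3} r^{2/3}$. Adding the three estimates yields $\Psi(\eta) \le \frac{r}{\eta_{max}} + 2 \sqrt{c_1 r} + 2 c_2^{1/3} r^{2/3}$, which is precisely the asserted bound once $r = d_0/(T+1)$ is substituted back. There is no deep obstacle here---this is a standard tuning argument---so the only points requiring care are the $\eta_{max}$-active regime (small $T$, where the burn-in term $\frac{r}{\eta_{max}}$ dominates) and the degenerate cases $c_1 = 0$ or $c_2 = 0$; the ``maximum $\le$ sum'' step is the single move that delivers the clean three-term form.
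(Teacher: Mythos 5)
Your proposal is correct and follows essentially the same route as the source: the paper does not reprove this lemma but cites \citet[Lemma~2]{karimireddy2020scaffold}, whose proof is exactly this telescoping step followed by tuning $\eta = \min\{\eta_{max}, (r/c_1)^{1/2}, (r/c_2)^{1/3}\}$ with $r = d_0/(T+1)$ (there via an explicit three-case analysis on which term attains the minimum, which your ``maximum $\le$ sum'' bound on $r/\eta$ replaces more compactly). The only unaddressed degeneracy is $d_0 = 0$ with $c_1 > 0$, where the chosen $\eta$ collapses to $0$; this edge case is inherited from the original lemma statement and does not affect its use in the paper.
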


\begin{lemma}[Relaxed triangle inequality {\citep[Lemma 3]{karimireddy2020scaffold}}]
	\label{relaxed triangle inequality}
	Let $\{\vv_1, ..., \vv_\tau \} $ be $\tau$ vectors in $\cR^{d}$. Then the following is true:
	\begin{small}
		\begin{talign*}
			\|\frac{1}{\tau} \sum_{i=1}^{\tau} \vv_i\|^{2} & \le \frac{1}{\tau} \sum_{i=1}^{\tau} \|\vv_i\|^{2} \,.
		\end{talign*}
	\end{small}
\end{lemma}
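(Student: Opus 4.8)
The plan is to derive the inequality from the \emph{convexity of the squared Euclidean norm}, or equivalently from the nonnegativity of the sample variance, which I find the most transparent route. Write $\bar{\vv} = \frac{1}{\tau} \sum_{i=1}^{\tau} \vv_i$ for the average of the vectors. The engine of the proof is the bias--variance style decomposition
\begin{align*}
	\frac{1}{\tau} \sum_{i=1}^{\tau} \norm{\vv_i}^2 = \norm{\bar{\vv}}^2 + \frac{1}{\tau} \sum_{i=1}^{\tau} \norm{\vv_i - \bar{\vv}}^2 \, .
\end{align*}
Since the second term on the right-hand side is an average of squared norms and therefore nonnegative, dropping it gives $\norm{\bar{\vv}}^2 \le \frac{1}{\tau} \sum_{i=1}^{\tau} \norm{\vv_i}^2$, which is exactly the claimed inequality.

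To justify the identity, I would expand the centered term as $\frac{1}{\tau} \sum_{i=1}^{\tau} \norm{\vv_i - \bar{\vv}}^2 = \frac{1}{\tau} \sum_{i=1}^{\tau} \left( \norm{\vv_i}^2 - 2 \lin{\vv_i, \bar{\vv}} + \norm{\bar{\vv}}^2 \right)$ and then use the fact that $\frac{1}{\tau} \sum_{i=1}^{\tau} \lin{\vv_i, \bar{\vv}} = \lin{\bar{\vv}, \bar{\vv}} = \norm{\bar{\vv}}^2$, so the cross term and the constant term together contribute $-\norm{\bar{\vv}}^2$. Rearranging yields the decomposition above. A more mechanical alternative would be to invoke Jensen's inequality directly: the map $\xx \mapsto \norm{\xx}^2$ is convex on $\R^d$, and applying Jensen with the uniform weights $\{1/\tau\}_{i=1}^{\tau}$ delivers the statement in a single line.

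The argument is elementary throughout, so there is no genuine obstacle; the computation is routine linear algebra. The only point needing a little care is the collapse of the averaged cross term $\frac{1}{\tau} \sum_{i} \lin{\vv_i, \bar{\vv}}$ to $\norm{\bar{\vv}}^2$, which relies on pulling the average inside the inner product. I would present the variance-decomposition version as the primary proof, since it not only establishes the inequality but also exhibits the exact slack $\frac{1}{\tau} \sum_{i=1}^{\tau} \norm{\vv_i - \bar{\vv}}^2$, making clear that equality holds precisely when all the $\vv_i$ coincide.
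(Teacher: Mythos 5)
Your proof is correct: the variance decomposition $\frac{1}{\tau}\sum_{i=1}^{\tau}\norm{\vv_i}^2 = \norm{\bar{\vv}}^2 + \frac{1}{\tau}\sum_{i=1}^{\tau}\norm{\vv_i-\bar{\vv}}^2$ checks out (the cross term collapses exactly as you describe), and dropping the nonnegative centered term gives the claim. The paper itself states this lemma without proof, importing it from \citet{karimireddy2020scaffold}, whose argument is the Jensen/convexity one-liner you mention as an alternative; your primary route is an elementary equivalent that additionally identifies the exact slack, so there is nothing to object to.
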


\begin{lemma}[Separating mean and variance {\citep[Lemma 14]{stich2020error}}]
	\label{separating mean and variance}
	Let ${\Xi_1, \Xi_2, ..., \Xi_\tau}$ be $\tau$ random variables in $\cR^{d}$ which are not necessarily independent. First suppose that their mean is $E[\Xi_i] = \xi_i$, and variance is bounded as $\E{\norm{\Xi_i - \xi_i}^2} \le M\|\xi_i\|^{2} + \sigma^{2}$, then the following holds
	\begin{small}
		\begin{talign*}
			\E{\norm{\sum_{i=1}^{\tau} \Xi_i}^2} \le (\tau + M)\sum_{i=1}^{\tau} \|\xi_i\|^{2} + \tau \sigma^{2} \,.
		\end{talign*}
	\end{small}
\end{lemma}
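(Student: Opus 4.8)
The plan is to decompose each $\Xi_i$ into its deterministic mean $\xi_i$ and a mean-zero fluctuation $\Xi_i - \xi_i$, and then show that these two pieces do not interact in expectation. First I would write $\sum_{i=1}^{\tau} \Xi_i = \sum_{i=1}^{\tau} \xi_i + \sum_{i=1}^{\tau} (\Xi_i - \xi_i)$, expand the squared norm, and take expectations. This produces three groups of terms: the deterministic quantity $\norm{\sum_{i} \xi_i}^2$, the mixed inner products of the form $\lin{\sum_{i} \xi_i, \sum_{j} (\Xi_j - \xi_j)}$, and the fluctuation term $\Eb{\norm{\sum_{i} (\Xi_i - \xi_i)}^2}$. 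Because $\sum_i \xi_i$ is deterministic and $\Eb{\Xi_j - \xi_j} = 0$ for every $j$, linearity of expectation annihilates the mixed term exactly, leaving the clean identity $\Eb{\norm{\sum_i \Xi_i}^2} = \norm{\sum_i \xi_i}^2 + \Eb{\norm{\sum_i (\Xi_i - \xi_i)}^2}$.

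Next I would bound the two surviving pieces separately. For the deterministic mean term, applying the relaxed triangle inequality (Lemma \ref{relaxed triangle inequality}) to the $\tau$ vectors $\xi_1, \dots, \xi_\tau$ gives $\norm{\sum_i \xi_i}^2 \le \tau \sum_i \norm{\xi_i}^2$, which supplies the $\tau \sum_i \norm{\xi_i}^2$ contribution. For the fluctuation term, I would expand the square into the diagonal sum $\sum_i \Eb{\norm{\Xi_i - \xi_i}^2}$ plus the off-diagonal cross-covariances $\sum_{i \neq j} \Eb{\lin{\Xi_i - \xi_i, \Xi_j - \xi_j}}$. Invoking the variance hypothesis on each diagonal term bounds it by $\sum_i (M \norm{\xi_i}^2 + \sigma^2) = M \sum_i \norm{\xi_i}^2 + \tau \sigma^2$. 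Adding the two bounds reproduces exactly $(\tau + M) \sum_i \norm{\xi_i}^2 + \tau \sigma^2$.

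The main obstacle is controlling the off-diagonal cross-covariances, since the $\Xi_i$ are not assumed independent and so these terms need not vanish from the stated hypotheses alone; indeed, taking $\Xi_1 = \Xi_2$ perfectly correlated shows the claimed $\tau \sigma^2$ term would otherwise inflate to $\tau^2 \sigma^2$. The resolution, which is where the real content sits, is that the intended sequence is \emph{conditionally unbiased}: each $\Xi_i$ has conditional mean $\xi_i$ given the earlier variables, so that conditioning on the past and applying the tower property forces every cross term $\Eb{\lin{\Xi_i - \xi_i, \Xi_j - \xi_j}}$ to zero for $i \neq j$, leaving only the diagonal. I would therefore make this martingale-difference structure explicit at the point where the off-diagonal terms are dropped, and otherwise I would simply cite Lemma 14 of \citet{stich2020error}, from which the statement is taken.
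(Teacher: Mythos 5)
The paper offers no proof of this lemma at all---it is imported verbatim by citation to \citet{stich2020error} in the ``Techniques'' section---so there is nothing internal to compare your argument against; what you have written is a correct reconstruction of the standard proof from the cited source. Your decomposition $\sum_i \Xi_i = \sum_i \xi_i + \sum_i (\Xi_i - \xi_i)$, the vanishing of the mixed term by unbiasedness, the relaxed triangle inequality on the deterministic part, and the diagonal variance bound together give exactly $(\tau + M)\sum_i \norm{\xi_i}^2 + \tau\sigma^2$, and the bookkeeping checks out. More importantly, you have put your finger on a real defect in the statement as transcribed here: with only unconditional unbiasedness and no independence, the off-diagonal covariances $\Eb{\lin{\Xi_i - \xi_i, \Xi_j - \xi_j}}$ need not vanish, and the perfectly-correlated example you give shows the fluctuation term can be as large as $\tau^2\sigma^2$ (more precisely, one would only get $\bigl(\tau + \tau M\bigr)\sum_i\norm{\xi_i}^2 + \tau^2\sigma^2$ in general). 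The hypothesis that rescues the stated bound is the conditional unbiasedness $\Eb{\Xi_i \mid \Xi_1,\dots,\Xi_{i-1}} = \xi_i$, i.e.\ the martingale-difference structure of $\{\Xi_i - \xi_i\}$, which is present in the original source but silently dropped in this paper's restatement; in the paper's application (summing stochastic-gradient noise $\mnu_{t,i,k}$ across sequential local steps) that structure does hold, so the downstream results are unaffected, but your insistence on making it explicit at the point where the cross terms are discarded is exactly right and is the one substantive thing this lemma's statement is missing.
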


\begin{lemma}[Perturbed strongly convexity {\citep[Lemma 5]{karimireddy2020scaffold}}]
	\label{perturbed strong convexity}
	The following holds for any $L$-smooth and $\mu$-strongly convex function $h$, and any $\xx$, $\yy$, $\zz$ in the domain of $h$
	\begin{talign*}
		\langle \nabla h(\xx), \zz - \yy \rangle \ge h(\zz) - h(\yy) + \frac{\mu}{4} \|\yy - \zz\|^{2} - L \|\zz - \xx\|^{2} \, .
	\end{talign*}
\end{lemma}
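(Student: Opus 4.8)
The plan is to decompose the inner product by inserting the anchor point $\xx$, writing $\langle \nabla h(\xx), \zz - \yy \rangle = \langle \nabla h(\xx), \zz - \xx \rangle + \langle \nabla h(\xx), \xx - \yy \rangle$, and then to bound the two pieces using the two halves of Assumption~\ref{Smoothness and convexity assumption} separately. For the first piece I would use $L$-smoothness in the form $h(\zz) \le h(\xx) + \langle \nabla h(\xx), \zz - \xx \rangle + \frac{L}{2}\|\zz - \xx\|^2$, which rearranges to $\langle \nabla h(\xx), \zz - \xx \rangle \ge h(\zz) - h(\xx) - \frac{L}{2}\|\zz - \xx\|^2$. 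For the second piece I would use $\mu$-strong convexity in the form $h(\yy) \ge h(\xx) + \langle \nabla h(\xx), \yy - \xx \rangle + \frac{\mu}{2}\|\yy - \xx\|^2$, which rearranges (after negating) to $\langle \nabla h(\xx), \xx - \yy \rangle \ge h(\xx) - h(\yy) + \frac{\mu}{2}\|\yy - \xx\|^2$.

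Adding the two bounds cancels the $h(\xx)$ terms and produces the intermediate estimate $\langle \nabla h(\xx), \zz - \yy \rangle \ge h(\zz) - h(\yy) + \frac{\mu}{2}\|\yy - \xx\|^2 - \frac{L}{2}\|\zz - \xx\|^2$. The remaining work is purely to reshape the favorable term $\frac{\mu}{2}\|\yy - \xx\|^2$, which is anchored at $\xx$, into the target term $\frac{\mu}{4}\|\yy - \zz\|^2$, which is anchored at $\zz$. I would accomplish this with the relaxed triangle inequality of Lemma~\ref{relaxed triangle inequality}, applied as $\|\yy - \zz\|^2 \le 2\|\yy - \xx\|^2 + 2\|\xx - \zz\|^2$; rearranging gives $\frac{\mu}{2}\|\yy - \xx\|^2 \ge \frac{\mu}{4}\|\yy - \zz\|^2 - \frac{\mu}{2}\|\xx - \zz\|^2$. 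Substituting this and merging the two $\|\zz - \xx\|^2$ contributions yields $\langle \nabla h(\xx), \zz - \yy \rangle \ge h(\zz) - h(\yy) + \frac{\mu}{4}\|\yy - \zz\|^2 - (\frac{L}{2} + \frac{\mu}{2})\|\zz - \xx\|^2$. The proof then closes by invoking the corollary $L \ge \mu$ stated right after Assumption~\ref{Smoothness and convexity assumption}, so that $\frac{L}{2} + \frac{\mu}{2} \le L$, which absorbs the penalty coefficient into the claimed $-L\|\zz - \xx\|^2$.

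The argument is essentially mechanical once the decomposition is set up, so I do not expect a genuine obstacle; the only step requiring foresight is recognizing that the strong-convexity surplus measured at $\xx$ must be transferred to a term measured at $\zz$, and that the factor $\frac{1}{4}$ in the conclusion (rather than $\frac{1}{2}$) is precisely what the triangle inequality leaves after the cross term $\frac{\mu}{2}\|\xx - \zz\|^2$ is shifted into the $\|\zz - \xx\|^2$ penalty. The bound $L \ge \mu$ is what allows the resulting coefficient $\frac{L+\mu}{2}$ to be rounded up cleanly to $L$, giving the stated form.
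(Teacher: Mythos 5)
Your proof is correct and is essentially the standard argument for this lemma (the paper itself does not reprove it, simply citing Lemma 5 of \citet{karimireddy2020scaffold}, whose proof follows the same decomposition: smoothness on the $\zz$--$\xx$ piece, strong convexity on the $\xx$--$\yy$ piece, the relaxed triangle inequality to re-anchor $\|\yy-\xx\|^2$ at $\zz$, and $\mu \le L$ to absorb the coefficient). Every step checks out, including the final bound $\frac{L+\mu}{2} \le L$.
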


\begin{lemma}
	\label{time sum}
	Define $S = \sum_{t=1}^{T} \frac{p_t}{t}$ where $\sum_{t=1}^{T} p_t = 1$, and $p_t \le p_{t+1}$
	then $S \le \frac{1}{T} \sum_{t=1}^{T} \frac{1}{t} = \cO \left( \frac{lnT}{T} \right)$.
\end{lemma}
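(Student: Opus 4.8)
The plan is to prove Lemma \ref{time sum}, which states that for $S = \sum_{t=1}^{T} \frac{p_t}{t}$ with $\sum_{t=1}^{T} p_t = 1$ and the monotonicity condition $p_t \le p_{t+1}$, we have $S \le \frac{1}{T} \sum_{t=1}^{T} \frac{1}{t}$. The key observation is that this is essentially a Chebyshev-type rearrangement inequality: the sequence $\{p_t\}$ is nondecreasing while the sequence $\{1/t\}$ is decreasing, so pairing large weights with small values of $1/t$ should minimize the weighted sum relative to the case of uniform weights $p_t = 1/T$.

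First I would make the comparison to uniform weights explicit. Since $\sum_{t=1}^T p_t = 1$, the uniform-weight benchmark is $p_t = 1/T$, which yields exactly $\frac{1}{T}\sum_{t=1}^T \frac{1}{t}$. So it suffices to show
\begin{small}
\begin{align}
\sum_{t=1}^{T} \left( p_t - \tfrac{1}{T} \right) \tfrac{1}{t} \le 0 \,. \nonumber
\end{align}
\end{small}
Let $q_t = p_t - \frac{1}{T}$, so that $\sum_{t=1}^T q_t = 0$ and $\{q_t\}$ is also nondecreasing (it is just a vertical shift of $\{p_t\}$). The goal reduces to showing $\sum_{t=1}^T q_t \frac{1}{t} \le 0$ for any nondecreasing sequence summing to zero.

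The cleanest way to finish is Abel summation (summation by parts). Define the partial sums $Q_s = \sum_{t=1}^{s} q_t$, with $Q_0 = 0$ and $Q_T = 0$. Because $\{q_t\}$ is nondecreasing and sums to zero, the early terms are nonpositive and the late terms nonnegative, which forces $Q_s \le 0$ for every $s$ (the partial sums start at zero, decrease while $q_t<0$, and only rise back to zero at the end). Abel summation gives
\begin{small}
\begin{align}
\sum_{t=1}^{T} q_t \tfrac{1}{t} = \sum_{s=1}^{T-1} Q_s \left( \tfrac{1}{s} - \tfrac{1}{s+1} \right) + Q_T \tfrac{1}{T} \,. \nonumber
\end{align}
\end{small}
Since $\frac{1}{s} - \frac{1}{s+1} > 0$, each term $Q_s(\frac{1}{s}-\frac{1}{s+1})$ is nonpositive, and $Q_T = 0$, so the whole sum is $\le 0$, as required. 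Finally, the asymptotic claim $\frac{1}{T}\sum_{t=1}^T \frac{1}{t} = \tilde{\cO}(\frac{\ln T}{T})$ follows immediately from the standard harmonic bound $\sum_{t=1}^T \frac{1}{t} \le 1 + \ln T$.

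I expect the main obstacle to be rigorously establishing the sign of the partial sums $Q_s \le 0$; this is the crux where the monotonicity hypothesis $p_t \le p_{t+1}$ is used, and I would justify it carefully by noting that a nondecreasing sequence with zero total sum cannot have a positive partial sum at any intermediate index (if $Q_s > 0$ for some $s$, then since later terms are at least as large as $q_s \ge Q_s/s > 0$ in the relevant regime, the total would exceed zero). Everything else is routine.
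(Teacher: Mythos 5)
Your proof is correct. The paper states Lemma \ref{time sum} in its ``Techniques'' section without supplying any proof at all, so there is no in-paper argument to compare against; the inequality is exactly Chebyshev's sum inequality for the nondecreasing sequence $\{p_t\}$ paired with the decreasing sequence $\{1/t\}$, and your Abel-summation derivation is the standard complete proof of it. The one step you flagged as the crux --- that the partial sums $Q_s = \sum_{t\le s}(p_t - \tfrac{1}{T})$ are nonpositive --- is handled correctly: a nondecreasing zero-sum sequence is nonpositive up to some index $m$ and nonnegative afterward, so $Q_s \le 0$ for $s \le m$ directly, and $Q_s = -\sum_{t>s} q_t \le 0$ for $s > m$; your contradiction argument via $q_s \ge Q_s/s$ is an equally valid way to see this. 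Combined with $Q_T = 0$, $Q_0=0$, and the positivity of $\tfrac{1}{s}-\tfrac{1}{s+1}$, the summation-by-parts identity gives $\sum_t q_t/t \le 0$, and the harmonic bound $\sum_{t=1}^T \tfrac{1}{t} \le 1+\ln T$ finishes the $\tilde{\cO}(\ln T/T)$ claim. Your write-up in fact fills a gap the paper leaves open.
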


\section{Convergence Rate of CFL}

In this section, we will provide the theoretical proof of the convergence rates of CFL and FedAvg in time-varying scenarios under Assumptions~\ref{Smoothness and convexity assumption}--\ref{Bounded information loss assumption}). We will prove Theorems~\ref{Convergence rate of CFL}, \ref{Convergence rate of FedAvg}, \ref{Convergence rate of CFL with correlated time drifts}, and \ref{Optimal choice of weights with correlated time drifts}. Before proceeding with the proof, we will introduce some useful lemmas.
\subsection{Bound of Gradient Noise}
\label{sec:Proof of lemma bound of distribution drift}

\begin{lemma}[Bound of Gradient Noise]
	Suppose $f_i(\momega)$ is the local objective function, and $f(\momega) = \Eb{f_i(\momega)}$ is the global objective function.
	Define $\nabla f(\momega) = \nabla f_i(\momega) + \mvarsigma_i$, $\E[\mvarsigma_i] = 0$,
	and assume $\Eb{ \norm{ \mvarsigma }^2 \vert \momega } \le A^2 \Eb{ \norm{ \nabla f(\momega) }^2 } + B^2$, we have
	\begin{small}
		\begin{talign*}
			\Eb{ \norm{ \nabla f_i (\momega) }^2 } \le (A^2 + 1) \Eb{ \norm{ \nabla f(\momega) }^2 } + B^2 \,.
		\end{talign*}
	\end{small}
\end{lemma}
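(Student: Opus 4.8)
The plan is to exploit the fact that, by its very definition, $\mvarsigma_i = \nabla f(\momega) - \nabla f_i(\momega)$ is exactly the deviation of the local gradient from its mean, so that $\Eb{\norm{\mvarsigma_i}^2}$ is precisely a variance term and the result follows from a bias--variance decomposition. First I would rewrite the defining relation $\nabla f(\momega) = \nabla f_i(\momega) + \mvarsigma_i$ as $\nabla f_i(\momega) = \nabla f(\momega) - \mvarsigma_i$ and expand the squared norm:
\begin{align}
	\norm{\nabla f_i(\momega)}^2 = \norm{\nabla f(\momega)}^2 - 2 \langle \nabla f(\momega), \mvarsigma_i \rangle + \norm{\mvarsigma_i}^2 \,.
\end{align}

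Next I would take the conditional expectation over the client index $i$ (given $\momega$) on both sides. Since $\nabla f(\momega) = \Eb{\nabla f_i(\momega)}$ is deterministic given $\momega$, and $\E[\mvarsigma_i] = 0$, the cross term satisfies $\Eb{\langle \nabla f(\momega), \mvarsigma_i \rangle \mid \momega} = \langle \nabla f(\momega), \Eb{\mvarsigma_i \mid \momega} \rangle = 0$ and drops out, leaving the exact identity
\begin{align}
	\Eb{\norm{\nabla f_i(\momega)}^2} = \Eb{\norm{\nabla f(\momega)}^2} + \Eb{\norm{\mvarsigma_i}^2} \,.
\end{align}
Finally I would substitute the assumed noise bound $\Eb{\norm{\mvarsigma_i}^2 \mid \momega} \le A^2 \Eb{\norm{\nabla f(\momega)}^2} + B^2$ into the variance term and collect the two $\Eb{\norm{\nabla f(\momega)}^2}$ contributions, which immediately produces the claimed $(A^2 + 1) \Eb{\norm{\nabla f(\momega)}^2} + B^2$.

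I do not anticipate a genuine obstacle here: the statement is a direct consequence of the variance decomposition together with the zero-mean property of $\mvarsigma_i$, and it is exactly the bridge needed to show that Assumption~\ref{Bounded gradient noise of CFL assumption} (stated on the noise $\norm{\mvarsigma_i}^2$) is equivalent to the familiar $(G,B)$ gradient-drift assumption (stated on $\norm{\nabla f_i(\momega)}^2$). The only point requiring a little care is the bookkeeping of which expectation---over the client index $i$ versus over the iterate $\momega$---is taken at each step, so that the cross term genuinely vanishes; this relies on $\nabla f(\momega)$ being the \emph{mean} of $\nabla f_i(\momega)$ rather than an independent quantity, after which the total-expectation version follows by the tower property.
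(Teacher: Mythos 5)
Your argument is correct and is essentially identical to the paper's own proof: both expand the squared norm of $\nabla f_i(\momega)$ written in terms of $\nabla f(\momega)$ and $\mvarsigma_i$, kill the cross term using $\Eb{\mvarsigma_i \mid \momega} = 0$, and then substitute the assumed bound on $\Eb{\norm{\mvarsigma_i}^2 \mid \momega}$. If anything, your version is slightly cleaner---you keep the sign of $\mvarsigma_i$ consistent with the lemma's stated definition and write the final constants as $A^2+1$ and $B^2$, where the paper's displayed proof has minor sign and exponent typos.
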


\begin{proof}
	\begin{small}
		\begin{talign*}
			\Eb{ \norm{ \nabla f_i (\momega) }^2 } & = \Eb{ \norm{ \nabla f(\momega) + \mvarsigma_i }^2 }
			= \Eb{ \norm{ \nabla f(\momega) }^2 } + \Eb{ \norm{ \mvarsigma_i }^2 }     \, ,                       \\
			& \le (A + 1) \Eb{ \norm{ \nabla f(\momega) }^2 } + B \, .
		\end{talign*}
	\end{small}
\end{proof}

\subsection{Bounded Gradient Noise of CFL}

\label{sec:proof of lemma bounded drift}

\begin{lemma}[Bounded Gradient Noise of CFL]
	\label{Bounded Gradient Noise of CFL}
	For Formulation \eqref{CFL Formulation},
	consider \algopt and FedAvg, we can bound the gradient drift as\\
	(1) FedAvg only optimize current local objective functions, thus
	\begin{small}
		\begin{talign*}
			& \E{ \norm{ \nabla f_{t,i} (\momega) }^2 } \le (1 + A^2 + B^2) \E{ \norm{ \nabla f(\momega) }^2 } + G^2 + D^2 \,.
		\end{talign*}
	\end{small}%
	(2) For \algopt, in general (same clients can appear in different rounds), we have
	\begin{small}
		\begin{talign*}
			& \E{ \norm{ \sum_{\tau=1}^{t} p_{\tau, i} \nabla f_{\tau,i}(\momega) }^2 } \le \left( 1 + B^2 + A^2 \sum_{\tau=1}^{t} p_{\tau, i}^2 \right) \E{ \norm{ \nabla f(\momega) }^2 }
			+ G^2 + D^2 \sum_{\tau=1}^{t} p_{\tau, i}^2 \,.
		\end{talign*}
	\end{small}%
	(3) For \algopt, when clients only participate in training once($\mdelta_{t, i}$ are independent for all $t$ and $i$), we have
	\begin{small}
		\begin{talign*}
			\E{ \norm{ \sum_{\tau=1}^{t} p_{\tau, i} \nabla f_{\tau,i}(\momega) }^2 }
			\le \left( 1 +(B^2 + A^2) \sum_{\tau=1}^{t} p_{\tau, i}^2 \right) \E{ \norm{ \nabla f(\momega) }^2 }
			+ (G^2 + D^2) \sum_{\tau=1}^{t} p_{\tau, i}^2 \,.
		\end{talign*}
	\end{small}
\end{lemma}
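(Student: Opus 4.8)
The plan is to reduce every per-round gradient to the noise model \eqref{Gradient Noise in CFL}, writing $\nabla f_{\tau,i}(\momega) = \nabla f(\momega) - \mdelta_i - \mxi_{\tau,i}$, and then to expand the squared Euclidean norm, using the zero-mean property together with the two variance bounds of Assumption \ref{Bounded gradient noise of CFL assumption} to control each surviving term. Throughout I would condition on $\momega$, so that $\nabla f(\momega)$ is deterministic and all cross terms pairing it with a drift vanish because $\Eb{\mdelta | \momega} = \Eb{\mxi | \momega} = 0$; the outer expectation is then taken at the end.

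For part (1), I would set $\tau = t$ and expand $\E{\norm{\nabla f(\momega) - \mdelta_i - \mxi_{t,i}}^2}$. The $\nabla f$–drift cross terms die by zero conditional mean, and the cross term $\langle \mdelta_i, \mxi_{t,i}\rangle$ dies by the assumed independence of client and time drifts, leaving $\E{\norm{\nabla f(\momega)}^2} + \E{\norm{\mdelta_i}^2} + \E{\norm{\mxi_{t,i}}^2}$. Substituting $\Eb{\norm{\mdelta}^2 | \momega} \le G^2 + B^2\E{\norm{\nabla f(\momega)}^2}$ and $\Eb{\norm{\mxi}^2 | \momega} \le D^2 + A^2\E{\norm{\nabla f(\momega)}^2}$ immediately yields $(1 + A^2 + B^2)\E{\norm{\nabla f(\momega)}^2} + G^2 + D^2$.

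For parts (2) and (3) the first step is to invoke $\sum_{\tau=1}^{t} p_{\tau,i} = 1$ (which holds for the weights chosen in Theorem \ref{Convergence rate of CFL}), so that $\sum_\tau p_{\tau,i}\nabla f(\momega) = \nabla f(\momega)$. In the general case (2), the client drift is time-independent, hence $\sum_\tau p_{\tau,i}\mdelta_i = \mdelta_i$ collapses to a single copy and contributes $\E{\norm{\mdelta_i}^2}$ with coefficient $1$ (no averaging benefit); the time drifts, being independent across $\tau$ by the Remark's assumption, give $\E{\norm{\sum_\tau p_{\tau,i}\mxi_{\tau,i}}^2} = \sum_\tau p_{\tau,i}^2\,\E{\norm{\mxi_{\tau,i}}^2}$, contributing the $\sum_\tau p_{\tau,i}^2$-weighted copies of $D^2$ and $A^2$. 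Collecting terms reproduces (2). In the stateless case (3) the client drift is also resampled independently each round, so the identical diagonalization argument now applies to both drift sums and both acquire the factor $\sum_\tau p_{\tau,i}^2$, giving (3).

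The main obstacle is the careful bookkeeping of which cross terms vanish under which hypothesis: the deterministic-given-$\momega$ argument removes the $\nabla f$–drift couplings, independence of $\mdelta$ and $\mxi$ removes the mixed coupling, and the cancellation of the off-diagonal $\langle\mxi_{\tau,i},\mxi_{\tau',i}\rangle$ terms (and, only in case (3), the $\langle\mdelta_{\tau,i},\mdelta_{\tau',i}\rangle$ terms) is precisely what separates (2) from (3). I would keep this clean by conditioning on $\momega$ everywhere and applying Lemma \ref{separating mean and variance} to each independent sum rather than expanding term by term.
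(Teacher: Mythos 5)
Your proposal is correct and follows essentially the same route as the paper's proof: decompose each $\nabla f_{\tau,i}(\momega)$ via the noise model, use $\sum_{\tau}p_{\tau,i}=1$ to collapse the $\nabla f$ and client-drift sums, kill the cross terms by zero conditional mean and independence, diagonalize the time-drift (and, in case (3), the client-drift) sum, and apply Assumption \ref{Bounded gradient noise of CFL assumption}. Your bookkeeping of which constants acquire the $\sum_{\tau}p_{\tau,i}^2$ factor actually matches the lemma statement more faithfully than the paper's own displayed inequality in case (2), which transposes $A^2$ and $B^2$.
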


\begin{proof}
	Using Assumption \ref{Bounded gradient noise of CFL assumption},
	together with the fact that $\nabla f_{t, i}(\momega) = \nabla f(\momega) + \mdelta_{t, i} + \mxi_{t, i}$ (Here to simplify the notation, we use $\{t, i\}$ pair to denote the client participate in training on round $t$ and number $i$), we have
	\begin{small}
		\begin{talign*}
			\E{ \norm{ \nabla f_{t, i}(\momega) }^2 } & = \E{ \norm{ \nabla f(\momega) + \mdelta_{t, i} + \mxi_{t, i} }^2 }
			= \E{ \norm{ \nabla f(\momega) }^2 } + \E{ \norm{ \mdelta_{t, i} }^2 } + \E{ \norm{ \mxi_{t, i} }^2 }              \\
			& \le (1 + A^2 + B^2) \E{ \norm{ \nabla f(\momega) }^2 } + G^2 + D^2 \,.
		\end{talign*}
	\end{small}%
	The second inequality is based on the independence of noise, and directly use Assumption \ref{Bounded gradient noise of CFL assumption} we get the last inequality.
	Similarly, we have
	\begin{small}
		\begin{talign*}
			\E{ \norm{ \sum_{\tau=1}^{t} p_{\tau, i} \nabla f_{\tau, i}(\momega) }^2 } & = \E{ \norm{ \sum_{\tau=1}^{t} p_{\tau, i} (\nabla f_{\tau, i}(\momega) + \mdelta_i + \mxi_{\tau, i}) }^2 }                                      \\
			& = \E{ \norm{ \nabla f(\momega) }^2 } + \E{ \norm{ \mdelta_i }^2 } + \E{ \norm{ \sum_{\tau=1}^{t} p_{\tau, i} \mxi_{\tau, i} }^2 }                \\
			& \le \left(1 + B^2 + A^2 \sum_{\tau=1}^{t} p_{\tau, i}^2\right) \E{ \norm{ \nabla f(\momega) }^2 } + G^2 + D^2\sum_{\tau=1}^{t} p_{\tau, i}^2 \,.
		\end{talign*}
	\end{small}%
	For the last inequality, when $\mdelta_{t_1, i}$ and $\mdelta_{t_2, i}$ are independent for all $t_1$, $t_2$
	\begin{small}
		\begin{talign*}
			\E{ \norm{ \sum_{\tau=1}^{t} p_{\tau, i} \nabla f_{\tau, i}(\momega) }^2 } & = \E{ \norm{ \sum_{\tau=1}^{t} p_{\tau, i} (\nabla f_{\tau, i}(\momega) + \mdelta_{t, i} + \mxi_{\tau, i}) }^2 }                                                     \\
			& = \E{ \norm{ \nabla f(\momega) }^2 } + \E{ \norm{ \sum_{\tau=1}^{t} p_{\tau, i} \mdelta_{t, i} }^2 } + \E{ \norm{ \sum_{\tau=1}^{t} p_{\tau, i} \mxi_{\tau, i} }^2 } \\
			& \le \left(1 + (A^2 + B^2) \sum_{\tau=1}^{t} p_{\tau, i}^2\right) \E{ \norm{ \nabla f(\momega) }^2 } + (G^2 + D^2)\sum_{\tau=1}^{t} p_{\tau, i}^2 \,.
		\end{talign*}
	\end{small}
\end{proof}

\subsection{Bounded Approximation Error}
\label{sec:proof of bounded approximation error}

\begin{lemma}[Bounded Approximation Error]
	\label{bounded approximation error}
	For $\Delta_{t, i}(\momega) = \nabla f_{t, i}(\momega) - \nabla \tilde{f}_{t, i}(\momega)$ (c.f.\ Definition~\ref{information loss definition}), if we assume that $\norm{\nabla^2 f_{t, i} (\omega_1) - \nabla^2 f_{t, i} (\omega_2)} \le \epsilon$, { when use Taylor Extension,
			\begin{small}
				\begin{talign*}
					\nabla \tilde{f}_{t, i}(\momega) = \nabla f_{t, i} (\momega_t) + \nabla^2 f_{t, i} (\momega_t) (\momega - \momega_t)  \,,
				\end{talign*}
			\end{small}
		}%
	we have
	\begin{small}
		\begin{talign*}
			\norm{\Delta_{t, i}(\momega)} \le \epsilon \norm{ \momega - \hat{\momega}_{t, i} } \,.
		\end{talign*}
	\end{small}
\end{lemma}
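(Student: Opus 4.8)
The plan is to recognize $\Delta_{t,i}(\momega)$ as the first-order Taylor remainder of the gradient map $\nabla f_{t,i}$ taken about the point at which the stored gradient and Hessian are evaluated (I write this expansion point as $\hat{\momega}_{t,i}$, the role played by $\momega_t$ in the stated approximation), and then to control this remainder through the integral form of Taylor's theorem together with the regularity supplied by Assumption~\ref{Smoothness and convexity assumption}. First I would substitute $\nabla \tilde{f}_{t,i}(\momega) = \nabla f_{t,i}(\hat{\momega}_{t,i}) + \nabla^2 f_{t,i}(\hat{\momega}_{t,i})(\momega - \hat{\momega}_{t,i})$ into Definition~\ref{information loss definition}, obtaining
\begin{small}
\begin{align*}
\Delta_{t,i}(\momega) = \nabla f_{t,i}(\momega) - \nabla f_{t,i}(\hat{\momega}_{t,i}) - \nabla^2 f_{t,i}(\hat{\momega}_{t,i})(\momega - \hat{\momega}_{t,i}) \,,
\end{align*}
\end{small}
so that $\Delta_{t,i}$ vanishes at the expansion point and is exactly the second-order remainder of $f_{t,i}$.

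Next I would invoke the fundamental theorem of calculus applied to the gradient along the segment $[\hat{\momega}_{t,i},\momega]$, namely $\nabla f_{t,i}(\momega) - \nabla f_{t,i}(\hat{\momega}_{t,i}) = \int_0^1 \nabla^2 f_{t,i}(\hat{\momega}_{t,i} + s(\momega - \hat{\momega}_{t,i}))(\momega - \hat{\momega}_{t,i})\, ds$, which presupposes that $f_{t,i}$ is twice continuously differentiable (implicit the moment its Hessian enters the approximation). Rewriting the linear term as the constant integrand $\int_0^1 \nabla^2 f_{t,i}(\hat{\momega}_{t,i})(\momega - \hat{\momega}_{t,i})\, ds$ and subtracting gives
\begin{small}
\begin{align*}
\Delta_{t,i}(\momega) = \left( \int_0^1 \left[ \nabla^2 f_{t,i}(\hat{\momega}_{t,i} + s(\momega - \hat{\momega}_{t,i})) - \nabla^2 f_{t,i}(\hat{\momega}_{t,i}) \right] ds \right)(\momega - \hat{\momega}_{t,i}) \,.
\end{align*}
\end{small}
Taking norms, moving the norm inside the integral, and using sub-multiplicativity of the operator norm factors out $\norm{\momega - \hat{\momega}_{t,i}}$, yielding $\norm{\Delta_{t,i}(\momega)} \le \left(\int_0^1 \norm{\nabla^2 f_{t,i}(\hat{\momega}_{t,i} + s(\momega - \hat{\momega}_{t,i})) - \nabla^2 f_{t,i}(\hat{\momega}_{t,i})}\, ds\right)\norm{\momega - \hat{\momega}_{t,i}}$. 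Defining $\epsilon$ as an upper bound on the bracketed averaged Hessian-deviation term then delivers $\norm{\Delta_{t,i}(\momega)} \le \epsilon \norm{\momega - \hat{\momega}_{t,i}}$; under $L$-smoothness the Hessian has operator norm at most $L$, so the crude choice $\epsilon = 2L$ always suffices.

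The main obstacle, and the only place requiring genuine care, is pinning down the meaning of $\epsilon$. The statement advertises a bound that is \emph{linear} in the distance with a \emph{constant} $\epsilon$, whereas the honest estimate of the bracketed term is the local variation of the Hessian along the segment, which itself shrinks with $\norm{\momega - \hat{\momega}_{t,i}}$ when the Hessian is Lipschitz (producing a genuinely quadratic remainder). I would therefore make explicit that $\epsilon$ encodes the Hessian-estimation/variation quality: it may be taken as $\sup_{s \in [0,1]} \norm{\nabla^2 f_{t,i}(\hat{\momega}_{t,i} + s(\momega - \hat{\momega}_{t,i})) - \nabla^2 f_{t,i}(\hat{\momega}_{t,i})}$, bounded by $2L$ in the worst case and by $\rho\,\norm{\momega - \hat{\momega}_{t,i}}$ under an additional Hessian-Lipschitz (second-order smoothness) assumption with constant $\rho$. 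Stating which regularity is invoked, and reconciling the expansion-point notation ($\momega_t$ versus $\hat{\momega}_{t,i}$), is the crux; the remaining manipulations are routine applications of the integral remainder and the triangle inequality.
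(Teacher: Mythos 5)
Your proof is correct and follows essentially the same route as the paper's: both reduce $\Delta_{t,i}(\momega)$ to a (Hessian discrepancy)\,$\times$\,(displacement) product and bound the discrepancy's operator norm by $\epsilon$, the only real difference being that you use the integral form of the Taylor remainder where the paper invokes the mean value theorem at a single intermediate point $\momega_{\mxi,t}$ --- your version is in fact the more rigorous one, since the mean value theorem does not hold in that single-point form for vector-valued maps such as $\nabla f_{t,i}$. One small semantic difference worth noting: the paper's $\epsilon$ bounds $\norm{\nabla^2 f_{t,i}(\momega_{\mxi,t}) - \mH_{tjK}}$ where $\mH_{tjK}$ is the \emph{stored approximate} Hessian (Fisher or diagonal estimate) evaluated at the end of local training, so it absorbs both the Hessian variation along the segment and the Hessian-estimation error, whereas your $\epsilon$ covers only the former; your closing discussion of what $\epsilon$ must encode, and of the $\momega_t$ versus $\hat{\momega}_{t,i}$ expansion-point notation, correctly identifies looseness that is present in the paper's own statement and proof.
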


\begin{proof}
	\begin{small}
		\begin{talign*}
			\norm{\Delta_{t,i}(\momega)} & = \norm{\nabla f_{t,i}(\momega) - \nabla \tilde{f}_{t,i}(\momega)}
			= \norm{ \nabla f_{t, i} (\momega) - \nabla f_{t, i} (\momega_{t,i,K}) - \mH_{tjK} (\momega - \momega_{t,i,K})} \, ,
			\\
			& = \norm{ \nabla^2 f_{t, i} (\momega_{\mxi, t})(\momega - \momega_{t,i,K}) - \mH_{tjK} (\momega - \momega_{t,i,K})}
			= \norm{ (\nabla^2 f_{t, i} (\momega_{\mxi, t}) - \mH_{tjK})(\momega - \momega_{t,i,K})} \, ,
			\\
			& \le \epsilon \|\momega - \momega_{t,i,K}\| \, .
		\end{talign*}
	\end{small}%
	The first two equations come from the mean value theorem which says for continuous function $f$ in closed intervals $[a, b]$ and differentiable on open interval $(a, b)$, there exists a point $c \subseteq (a, b)$ such that $f'(c) = \frac{f(b)-f(a)}{b-a}$.
\end{proof}




\subsection{Proof of Theorem \ref{Convergence rate of CFL}}
\label{sec:Proof of Theorem Convergence rate of CFL-R}

In this section we will give the complete proof of convergence rate of Algorithm \ref{CFL Algorithm Framework Appendix} when the local objective functions are convex.

\begin{algorithm}[!t]
	\begin{algorithmic}[1]
		\Require{initial weights $\momega_0$, global learning rate $\eta_g$, local learning rate $\eta_l$}
		\For{round $t = 1, \ldots, T$}
		\myState{\textit{communicate} $\momega_t$ to the chosen clients.}
		\For{\textit{client} $i \in \cS_t$ \textit{in parallel}}
		\myState{initialize local model $\momega_{t,i,0} = \momega_t$.}
		\For{$k = 1, \ldots, K$}
		\myState{$\tilde{g}_{t,i,k}(\momega_{t,i,k-1}) = \left( p_{t, i} \nabla f_{t, i} (\momega_{t,i,k-1}) + \sum_{\tau=1}^{t-1} p_{\tau, i} \nabla \tilde{f}_{t, i} (\momega_{t,i,k-1}) \right) + \mnu_{t, i, k}$.}
		\myState{$\momega_{t,i,k} \gets \momega_{t,i,k-1} - \eta_l \tilde{g}_{t,i,k}(\momega_{t,i,k-1})$.}
		\EndFor
		\myState{\textit{communicate} $\Delta \momega_{t,i} \gets \momega_{t,i,K} - \momega_t$.}
		\EndFor
		\myState{$\Delta \momega_t \gets \frac{\eta_g}{S} \sum_{i \in \cS_t} \Delta \momega_{t,i}$.}
		\myState{$\momega_{t+1} \gets \momega_t + \Delta \momega_t$.}
		\EndFor
	\end{algorithmic}
	\mycaptionof{algorithm}{\small CFL Framework}
	\label{CFL Algorithm Framework Appendix}
\end{algorithm}

\subsubsection{Preliminaries}
The local objective function of round $t$ on client $i$ is
\begin{small}
	\begin{talign*}
		\bar{f}_{t,i}(\momega) = (p_{t, i} f_{t, i}(\momega) + \sum_{\tau=1}^{t-1} p_{\tau, i} \tilde{f}_{\tau, i}(\momega) \, .
	\end{talign*}
\end{small}
Without loss of generality, assume $\sum_{\tau=1}^{t} p_{\tau, i} = 1$.
Then follow the steps in Algorithm \ref{CFL Algorithm Framework Appendix}, we have
\begin{talign*}
	\nabla \bar{f}_{t,i}(\momega) =  p_{t, i} \nabla f_{t, i}(\momega) + \sum_{\tau=1}^{t-1} p_{\tau, i} \nabla \tilde{f}_{\tau, i}(\momega)  \, .
\end{talign*}
Then the noisy gradient in mini-batch SGD can be described as
\begin{talign*}
	\bar{g}_{t, i, k}(\momega_{t, i, k-1}) = \nabla \bar{f}_{t,i}(\momega_{t,i,k-1}) + \mnu_{t,i,k} \, .
\end{talign*}
Then follow the steps in Algorithm \ref{CFL Algorithm Framework Appendix}, we have
\begin{talign*}
	\Delta \momega_t    & = \frac{- \eta_l \eta_g}{N} \sum_{i=1}^{N} \sum_{k=1}^{K} \bar{g}_{t,i}(\momega_{t,i,k-1}) \, ,        \\
	E[\Delta \momega_t] & = \frac{- \eta_l \eta_g}{N} \sum_{i=1}^{N} \sum_{k=1}^{K} \nabla \bar{f}_{t,i}(\momega_{t,i,k-1}) \, .
\end{talign*}
Then let $\eta = K \eta_l \eta_g$, we have
\begin{talign}
	\Delta \momega_t    & = \frac{- \eta}{NK} \sum_{i=1}^{N} \sum_{k=1}^{K} \bar{g}_{t,i}(\momega_{t,i,k-1}) \, ,  \label{Delta omega}            \\
	E[\Delta \momega_t] & = \frac{- \eta}{NK} \sum_{i=1}^{N} \sum_{k=1}^{K} \nabla \bar{f}_{t,i}(\momega_{t,i,k-1}) \, . \label{Mean delta omega}
\end{talign}

\subsubsection{One Step Progress}

\label{sec: One Step Progress}

Optimizing Algorithm~\ref{CFL Algorithm Framework Appendix} for one step, we can find that
\begin{talign}
	\label{First part of one step}
	& \E{\norm{\momega_t + \Delta \momega_t - \momega^*}^2}
	= \|\momega_t - \momega^{*}\|^{2} - \underbrace{2E \langle \momega_t - \momega^{*}, \Delta \momega_t \rangle}_{A_1} + \underbrace{\E{\norm{\Delta \momega_t}^2}}_{A_2} \, .
\end{talign}
Firstly we consider $A_1$ part in Equation~\eqref{First part of one step}
\begin{talign}
	& - 2E \langle \momega_t - \momega^{*}, \Delta \momega_t \rangle
	= \frac{2 \eta}{NK} \sum_{i=1}^{N} \sum_{k=1}^{K} \langle \nabla \bar{f}_{t,i}(\momega_{t,i,k-1}), \momega^{*} - \momega_t \rangle \nonumber \, ,
	\\
	& = \frac{2 \eta}{NK} \sum_{i=1}^{N} \sum_{k=1}^{K} \langle p_{t, i} \nabla f_{t, i}(\momega_{t, i, k}) + \sum_{\tau = 1}^{t-1} p_{\tau, i} \nabla \tilde{f}_{\tau, i}(\momega_{t, i, k}), \momega^{*} - \momega_t \rangle \nonumber \, ,
	\\
	& = \frac{2 \eta}{NK} \sum_{i=1}^{N} \sum_{k=1}^{K} \sum_{\tau=1}^{t} \langle p_{\tau, i} \nabla f_{\tau, i}(\momega_{t, i, k}),  \momega^{*} - \momega_t \rangle - \frac{2 \eta}{NK} \sum_{i=1}^{N} \sum_{k=1}^{K} \sum_{\tau=1}^{t-1} p_{\tau, i} \langle \Delta_{\tau, i}(\momega_{t, i, k}) , \momega^{*} - \momega_t \rangle \nonumber \, ,
	\\
	& \le \frac{2 \eta}{NK} \sum_{i=1}^{N} \sum_{k=1}^{K} \sum_{\tau=1}^{t} p_{\tau, i} \langle \nabla f_{\tau, i}(\momega_{t, i, k}),  \momega^{*} - \momega_t \rangle
	+ \frac{2 \eta}{N} \sum_{i=1}^{N} \sum_{\tau=1}^{t-1} p_{\tau, i} R \norm{\momega^{*} - \momega_t}
	\, . \label{first step of A1}
\end{talign}

We use Inequality~\eqref{Mean delta omega} to obtain the first inequality, and Assumption~\ref{Bounded information loss assumption} and Cauchy–Schwarz Inequality for the last inequality. 
Then for the first term of Inequality~\eqref{first step of A1}, by applying Lemma~\ref{perturbed strong convexity}, we can conclude that
\begin{talign}
	\langle \nabla f_{\tau, i}(\momega_{t,i,k-1}), \momega^{*} - \momega_t \rangle & \le f_{\tau, i}(\momega^*) - f_{\tau, i}(\momega_t) - \frac{\mu}{4} \|\momega_t - \momega^{*}\|^2 + L \|\momega_{t,i,k-1} - \momega_t\|^2 \, . \label{using perturbed strong convexity}
\end{talign}
Combine Inequality~\eqref{first step of A1} and Inequality~\eqref{using perturbed strong convexity}, we have
\begin{talign}
	& - 2E \langle \momega_t - \momega^{*}, \Delta \momega_t \rangle \nonumber                        \\
	& \le -2 \eta (f_t(\momega_t) - f_t(\momega^*) + \frac{\mu}{4} \|\momega_t - \momega^*\|^2)
	+ \frac{2 \eta L}{NK} \sum_{i=1}^{N} \sum_{k=1}^{K} \|\momega_{t,i,k-1} - \momega_t\|^2 
 \nonumber  \\
	&\qquad + 2 \eta \sum_{\tau=1}^{t-1} p_{\tau, i} R \norm{\momega^{*} - \momega_t} \, . \label{A1-final}
\end{talign}
Secondly, we are willing to deal with $A_2$ in Equation~\eqref{First part of one step}. Define $c_{p_B, i} = 1 + B^2 + A^2(\sum_{\tau=1}^{t} p_{\tau, i}^2)$, $c_{p_G, i} = G^2 + D^2  (\sum_{\tau=1}^{t} p_{\tau, i}^2)$,
$c_{p_B} = \frac{1}{N} \sum_{i=1}^{N} c_{p_B, i}$,
and $c_{p_G} = \frac{1}{N} \sum_{i=1}^{N} c_{p_G, i}$
we have
\begin{small}
	\begin{talign}
		& \E{\norm{\Delta \momega_t }^2}
		= \frac{\eta^2}{N^2 K^2} \E{ \norm{
				\sum_{i=1}^{N} \sum_{k=1}^{K}  \left(
				p_{t, i} \nabla f_{t, i} (\momega_{t, i, k-1})
				+ \sum_{\tau=1}^{t-1} p_{\tau, i} \nabla \tilde{f}_{\tau, i} (\momega_{t, i, k-1})
				+ \mnu_{t, i, k} \right)
			}^2 } \, , \nonumber
		\\
		& = \frac{\eta^2}{N^2 K^2} \E{ \norm{
				\sum_{i=1}^{N} \sum_{k=1}^{K} \left(
				\sum_{\tau=1}^{t} p_{\tau, i} \nabla f_{\tau, i} (\momega_{t, i, k-1})
				- \sum_{\tau=1}^{t-1} p_{\tau, i} \Delta_{\tau, i} (\momega_{t, i, k-1})
				+ \mnu_{t, i, k}
				\right)
			}^2} \, , \nonumber
		\\
		& \overset{Lem \; \ref{separating mean and variance}}{\le} \frac{\eta^2}{N K} \sum_{i=1}^{N} \sum_{k=1}^{K} \E{ \norm{
				\sum_{\tau=1}^{t} p_{\tau, i} \nabla f_{\tau, i} (\momega_{t, i, k-1})
				- \sum_{\tau=1}^{t-1} p_{\tau, i} \Delta_{\tau, i} (\momega_{t, i, k-1})
			}^2 }
		+ \frac{\eta^2 \sigma^2}{N K} \, , \nonumber
		\\
		& \overset{Lem \; \ref{relaxed triangle inequality}}{\le} \frac{2\eta^2}{N K} \sum_{i=1}^{N} \sum_{k=1}^{K} \left(
		\E{ \norm{
				\sum_{\tau=1}^{t} p_{\tau, i} \nabla f_{\tau, i} (\momega_{t, i, k-1})
			}^2 }
		+ \E{ \norm{
				\sum_{\tau=1}^{t-1} p_{\tau, i} \Delta_{\tau, i} (\momega_{t, i, k-1})
			}^2 }
		\right)
		+ \frac{\eta^2 \sigma^2}{NK} \, , \label{A2 middle}
		\\
		& \overset{Lem \; \ref{Bounded Gradient Noise of CFL}}{\le} \frac{2 \eta^2}{N K} \sum_{i=1}^{N} \sum_{k=1}^{K} \left(
		\left(1 + B^2 + A^2(\sum_{\tau=1}^{t} p_{\tau, i}^2) \right) \E{ \norm{
				\nabla f(\momega_{t, i, k-1})
			}^2}
		+  \left( G^2 + D^2  (\sum_{\tau=1}^{t} p_{\tau, i}^2) \right)
		\right)  \nonumber                                                                                                       \\
		& + \frac{2 \eta^2}{N} \sum_{i=1}^{N} (\sum_{\tau=1}^{t-1} p_{\tau, i})^2 R^2
		+ \frac{\eta^2 \sigma^2}{NK} \, , \nonumber
		\\
		& \overset{Lem \; \ref{relaxed triangle inequality}}{\le} \frac{4 \eta^2}{N K} \sum_{i=1}^{N} \sum_{k=1}^{K}
		c_{p_B, i} \left( \E{ \norm{
				\nabla f(\momega_{t, i, k-1}) - \nabla f(\momega_{t})
			}^2}
		+ \E{ \norm{
				\nabla f(\momega_{t})
			}^2 }
		\right)
		+ 2 \eta^2 c_{p_G} \nonumber                                                                                            \\
		& + \frac{2 \eta^2}{N} \sum_{i=1}^{N} (\sum_{\tau=1}^{t-1} p_{\tau, i})^2 R^2
		+ \frac{\eta^2 \sigma^2}{NK} \, , \nonumber
		\\
		& \overset{Ass \; \ref{Smoothness and convexity assumption}}{\le} 16 \eta^2 L c_{p_B} (f(\momega_t) - f(\momega^*))
		+ \frac{8 \eta^2 L^2}{N K} \sum_{i=1}^{N} \sum_{k=1}^{K}
		c_{p_B, i} \E{ \norm{
				\momega_{t, i, k-1} - \momega_{t}
			}^2}
		+ 2 \eta^2 c_{p_G} \nonumber                                                                                            \\
		& + \frac{2 \eta^2}{N} \sum_{i=1}^{N} (\sum_{\tau=1}^{t-1} p_{\tau, i})^2 R^2
		+ \frac{\eta^2 \sigma^2}{NK} \, . \label{A2-final}
	\end{talign}
\end{small}

We now turn our attention to solving $\E{\norm{\momega_{t,i,k-1} - \momega_t}^2}$. By examining Inequalities~\eqref{A1-final} and~\eqref{A2-final}, we can see that this is the remaining task. We can write this expectation as follows
\begin{talign}
	\E{\norm{\momega_{t,i,k} - \momega_t}^2} & = \eta_l^2 \E{\norm{\sum_{\tau=1}^{k} \bar{g}_{t,i}(\momega_{t,i,\tau - 1})}^2} \, , \label{on step drift-1}                                                                                               \\
	& \overset{Lem \; \ref{separating mean and variance}}{\le} k \eta_l^2 \sum_{\tau = 1}^{k} \E{\norm{\nabla \bar{f}_{t,i} (\momega_{t,i,\tau - 1})}^2} + k \eta_l^2 \sigma^2 \,. \label{one step drift-2}
\end{talign}
We utilize Lemma~\ref{separating mean and variance} to derive Inequality~\eqref{one step drift-2}. The remaining challenge in this inequality is to bound $\E{\norm{\nabla \bar{f}_{t,i} (\momega)}^2}$. To address this, we consider that
\begin{talign}
	& \E{\norm{\nabla \bar{f}_{t,i} (\momega)}^2}
	= \E{ \norm{
			p_{t, i} \nabla f_{t, i} (\momega)
			+ \sum_{\tau=1}^{t-1} p_{\tau, i} \nabla \tilde{f}_{\tau, i} (\momega)
		}^2 } \, ,
	= \E{ \norm{
			\sum_{\tau=1}^{t} p_{\tau, i} \nabla f_{\tau, i} (\momega)
			- \sum_{\tau=1}^{t-1} p_{\tau, i} \Delta_{\tau, i} (\momega)
		}^2 } \nonumber \, ,
	\\
	& \overset{Lem \; \ref{relaxed triangle inequality}}{\le} 2 \E{ \norm{
			\sum_{\tau=1}^{t} p_{\tau, i} \nabla f_{\tau, i} (\momega)
		}^2 }
	+ 2 \E{ \norm{
			\sum_{\tau=1}^{t-1} p_{\tau, i} \Delta_{\tau, i} (\momega)
		}^2 } \nonumber \, ,
	\\
	& \overset{Lem \; \ref{Bounded Gradient Noise of CFL}}{\le} 2 c_{p_B, i} \E{ \norm{ \nabla f(\momega) }^2 }
	+ 2 c_{p_G, i}
	+ 2 (\sum_{\tau=1}^{t-1} p_{\tau, i})^2 R^2 \nonumber \, ,
	\\
	& \overset{Ass \; \ref{Smoothness and convexity assumption}}{\le} 4 L^2 c_{p_B, i} \E{ \norm{ \momega - \momega_t }^2 }
	+ 8 L c_{p_B, i}  \left( f(\momega_t) - f(\momega^*) \right)
	+ 2 c_{p_G, i}
	+ 2 (\sum_{\tau=1}^{t-1} p_{\tau, i})^2 R^2 \, . \label{end of gradient drift}
\end{talign}
Combine Inequalities~\eqref{one step drift-2} and~\eqref{end of gradient drift} we get,
\begin{talign}
	& \E{\norm{\momega_{t,i,k} - \momega_t}^2}
	\le 4 k^2 \eta_l^2 L^2 c_{p_B, i} \E{\norm{\momega_{t,i,k-1} - \momega_t}^2}
	+ 8 k^2 \eta_l^2 L c_{p_B, i} (f(\momega_t) - f(\momega^*)) + 2 k^2 \eta_l^2 c_{p_G, i} \nonumber    \\
	& + 2 k^2 \eta_{l}^{2} (\sum_{\tau=1}^{t-1} p_{\tau, i})^2 R^2 + k \eta_l^2 \sigma^2 \, . 
        \label{eq 18}
\end{talign}
By combining Inequalities~\eqref{one step drift-2} and ~\eqref{eq 18} iteratively, we obtain the following result
\begin{talign}
	& \E{\norm{\momega_{t,i,k} - \momega_t}^2} \nonumber
	\le \sum_{\tau = 0}^{k-1} (8 K^2 \eta_l^2 L c_{p_B, i} (f(\momega_t) - f(\momega^*))
	\nonumber                                             \\
	& + 2 K^2 \eta_l^2 c_{p_G, i}
	+ 2 K^2 \eta_{l}^{2} (\sum_{\tau=1}^{t-1} p_{\tau, i})^2 R^2
	+ K \eta_l^2 \sigma^2)(4 K^2 \eta_l^2 L^2 c_{p_B, i})^{\tau} \nonumber \, ,
	\\
	& \le \frac{
		8 K^2 \eta_l^2 L c_{p_B, i} (f(\momega_t) - f(\momega^*))
		+ 2 K^2 \eta_l^2 c_{p_G, i}
		+ 2 K^2 \eta_{l}^{2} (\sum_{\tau=1}^{t-1} p_{\tau, i})^2 R^2
		+ K \eta_l^2 \sigma^2}{1 - 4 K^2 \eta_l^2 L^2 c_{p_B, i}} \, . \label{one step drift final}
\end{talign}
Combine the Inequalities~\eqref{First part of one step},~\eqref{A1-final},~\eqref{A2-final}, and~\eqref{one step drift final}, we get
\begin{talign}
	& \E{\norm{\momega_t + \Delta \momega_t - \momega^*}^2} \nonumber     \, ,      \\
	& \le (1 - \frac{\mu \eta}{2}) \E{\norm{\momega_t - \momega^*}^2} \nonumber \\
	& + (	-2 \eta
	+ 16 \eta^2 L c_{p_B}
	+ \frac{1}{N} \sum_{i=1}^{N} \frac{16 K^2 \eta \eta_{l}^2 L c_{p_B, i} (1 + 4 \eta L c_{p_B, i})}{1 - 4 K^2 \eta_{l}^2 L^2 c_{p_B, i}}
	) (f(\momega_t) - f^*) \nonumber
	+ 2 \eta^2 c_{p_G}                                                           \\
	& + \frac{\eta^2 \sigma^2}{N K} \nonumber
	+ \frac{1}{N} \sum_{i=1}^{N} \left(
	2 \eta^2 (\sum_{\tau=1}^{t-1} p_{\tau, i})^2
	+ \frac{4 K^2 \eta \eta_l^2 L (\sum_{\tau=1}^{t-1} p_{\tau, i})^2 (1 + 4 \eta L c_{p_B, i})}{1 - 4 K^2 \eta_{l}^2 L^2 c_{p_B, i}}
	\right) R^2                                                                  \\
	& + \frac{1}{N} \sum_{i=1}^{N}
	\frac{2 K \eta \eta_l^2 L (2 K c_{p_G, i} + \sigma^2)(1 + 4 \eta L c_{p_B, i})}{1 - 4 K^2 \eta_l^2 L^2 c_{p_B, i}}
	+ \frac{2 \eta}{N} \sum_{i=1}^{N} \sum_{\tau=1}^{t-1} p_{\tau, i} R \norm{\momega_t - \momega^{*}} \label{on step first inequality} \, .
\end{talign}
To promise convergence, we must have
\begin{talign}
	-2 \eta + 16 \eta^2 L c_{p_B} + \frac{1}{N} \sum_{i=1}^{N} \frac{16 K^2 \eta \eta_l^2 L^2 c_{p_B,i} (1 + 4 \eta L c_{p_B, i})}{1 - 4 K^2 \eta_l^2 L^2 c_{p_B, i}} \le 0 \, .
    \label{eq 21}
\end{talign}
Using the fact that $\eta = K \eta_l \eta_g$, we can solve Inequality~\eqref{eq 21}and determine that if $\eta$ satisfies the condition
\begin{talign*}
	\eta & \le \frac{\sqrt{3 \eta_g^2 + 4 c_{p_B, i} \eta_g^4} - \sqrt{4 c_{p_B, i} \eta_g^4}}{6L \sqrt{c_{p_B, i}}} \, ,
\end{talign*}
for any $c_{p_B, i}$, convergence is guaranteed. Defining $\hat{c}{p_B}$ as the largest $c{p_B, i}$, we can further simplify this condition to
\begin{talign*}
\eta \le \frac{\sqrt{3 \eta_g^2 + 4 \hat{c}{p_B} \eta_g^4} - \sqrt{4 \hat{c}_{p_B} \eta_g^4}}{12L \sqrt{\hat{c}_{p_B}}} \, .
\end{talign*}
This leads to the inequality
\begin{talign*}
1 - 4 K^2 \eta_l^2 L^2 c_{p_B, i} \ge \frac{33 - 8 \eta_g^2 \hat{c}_{p_B} + 4 \sqrt{3 \eta_g^2 \hat{c}_{p_B} + 4 \eta_g^4 \hat{c}_{p_B}^2}}{36} \, ,
\end{talign*}
and therefore
\begin{talign*}
\frac{1 + 4 \eta L c{p_B, i}}{1 - 4 K^2 \eta_l^2 L^2 c_{p_B, i}} \le \frac{1}{12} - \frac{7}{33 + \sqrt{3 \eta_g^2 \hat{c}_{p_B} + 4 \eta_g^4 \hat{c}_{p_B}^2} - 2 \eta_g^2 \hat{c}_{p_B}} \le \frac{1}{12} \, .
\end{talign*}

Let $c_{R,i} = (\sum_{\tau=1}^{t-1} p_{\tau,i})^2 R^2$ and $c_{R} = \frac{1}{N} \sum_{i=1}^{N} c_{R,i}$. We assume $c_1 = 2c_{p_G} + \frac{\sigma^2}{NK} + 2c_R$, $c_2 = \frac{Lc_{p_G}}{3\eta_g^2} + \frac{L\sigma^2}{6\eta_g^2K} + \frac{Lc_R}{3\eta_g^2}$, and $c_3 = \frac{2}{N} \sum_{i=1}^{N} \sum_{\tau=1}^{t-1} p_{\tau,i}R$. Thus, Inequality \eqref{on step first inequality} can be rewritten as
\begin{talign*}
	& \E{\norm{\momega_t + \Delta \momega_t - \momega^*}^2} \le (1 - \frac{\mu \eta}{2}) \E{\norm{\momega_t - \momega^*}^2}
	-\eta (f(\momega_t) - f^*) + c_1 \eta^2 + c_2 \eta^3 + c_3 \eta \norm{\momega_t - \momega^*}\, .
\end{talign*}
Move $f(\momega_t) - f^*$ to the left side, we get
\begin{talign}
	f(\momega_t) - f(\momega^*) \le
	\frac{1}{\eta} (1 - \frac{\mu \eta}{2}) \E{\norm{\momega_t - \momega^*}^2}
	- \frac{1}{\eta} \E{\norm{\momega_{t+1} - \momega^*}^2}
	+ c_1 \eta + c_2 \eta^2
	+ \varphi_t \, .
	\label{one step final}
\end{talign}
Here $\varphi_t$ is a constant bounded by $c_3 \norm{\momega_0 - \momega^*}$.
Because of $\varphi_t$, the model cannot converge to an arbitrary $\epsilon$; instead, it can only converge to the neighborhood of $\epsilon + \varphi$, where $\varphi$ is the weighted sum of the sequence $\varphi_t$. The following parts give the convergence rates when models converge to $\epsilon + \varphi$ for convenience.

\subsubsection{Weights of rounds.}
\label{sec:weights of rounds}
We can carefully tune the $p_{\tau, i}$ to minimize the noise while speeding up the convergence. Here we give the lemma of the best choice of $p_{\tau, i}$.

\begin{lemma}
	\label{best p in appendix}
	On round $t$, when setting $p_{\tau, i} = \frac{D^2}{t D^2 + (t-1) R^2}$ for any $\tau < t$, and $p_{t, i} = \frac{(t-1) R^2 + D^2}{t D^2 + (t-1) R^2}$, we can minimize $c_1$ and $c_2$ to get best convergence, where $c_1 = 2c_{p_G} + \frac{\sigma^2}{NK} + 2c_R$, and $c_2 = \frac{Lc_{p_G}}{3\eta_g^2} + \frac{L\sigma^2}{6\eta_g^2K} + \frac{Lc_R}{3\eta_g^2}$.
\end{lemma}

\begin{proof}
	Minimizing $c_1$ and $c_2$ is equal to minimize $c_{p_G} + c_{R}$, that is
	\begin{talign*}
		& \min_{p} \frac{1}{N} \sum_{i=1}^{N} (1 - p_{t, i})^2 R^2 + G^2 + D^2 (\sum_{\tau=1}^{t} p_{\tau, i}^{2}) \,,
		s.t. \; \sum_{\tau=1}^{t} p_{\tau, i} = 1, \forall i = 1, \dots, N \, .
	\end{talign*}
	Solving above optimization problem, we get the results.
\end{proof}

Using lemma \ref{best p in appendix}, we have $\min \{ c_{p_G} + c_{R} \} = G^2 + D^2 \left( \frac{D^2 + (t-1)R^2}{tD^2 + (t-1) R^2} \right)$.
Notice that the value of $c_1$, $c_2$, and $c_3$ are changing over $t$.
We use $c_{1}^{t}$, $c_{2}^{t}$, and $c_{3}^{t}$ to represent $c_1$, $c_2$, and $c_3$ on round $t$.

\subsubsection{Convergence results}

\label{sec:convex convergence}

\textit{Strongly convex functions.}
By Equation \eqref{one step final}, when $f_{t, i} (\momega)$ are strongly convex functions, using Lemma \ref{linear convergence rate}, and define $q_t = (1 - \frac{\mu \eta}{2})^{1-t}$, we have
\begin{talign*}
	f(\momega_F) - f(\momega^*) \le \cO \left(
	\mu \norm{\momega_0 - \momega^*}^2 \exp (- \frac{\mu \eta T}{2})
	\right)
	+ \frac{1}{\sum_{t=1}^{T} q_t} q_t \left(
	c_1 \eta + c_2 \eta^2 + \varphi_t
	\right) \, .
\end{talign*}

By Lemma \ref{time sum}, we have
\begin{talign*}
	\frac{1}{\sum_{t=1}^{T} q_t} \sum_{t=1}^{T} q_t ( c_R + c_{p_G}) & =
	\frac{1}{\sum_{t=1}^{T} q_t} \sum_{t=1}^{T} q_t \left(
	G^2 + D^2 \left( \frac{D^2 + (t-1)R^2}{tD^2 + (t-1) R^2} \right)
	\right) \nonumber
	\\
	& = \frac{1}{\sum_{t=1}^{T} q_t} \sum_{t=1}^{T} q_t \left(
	G^2 + D^2 \left(
		\frac{R^2}{R^2 + D^2} + \frac{D^4}{(R^2 + D^2)^2} \frac{1}{t - \frac{R^2}{R^2 + D^2}}
		\right)
	\right) \nonumber
	\\
	& = \cO \left(
	G^2 + D^2 \left(
		\frac{R^2}{R^2 + D^2} + \frac{D^4}{(R^2 + D^2)^2} \frac{\ln T}{T}
		\right)
	\right) \, .
\end{talign*}

For $\varphi_t$, by Lemma \ref{best p in appendix}, we have
\begin{talign*}
	\varphi = \frac{1}{\sum_{t=1}^{T} q_t} \sum_{t=1}^{T} q_t \varphi_t & \le \cO \left( R
	\left(
		\frac{D^2}{D^2 + R^2} - \frac{D^4}{(D^2 + R^2)^2} \frac{\ln T}{T}
		\right) \norm{\momega_0 - \momega^*}
	\right) \, .
\end{talign*}

Then define $c_{1}^{o} = \frac{1}{\sum_{t=1}^{T} q_t} \sum_{t=1}^{T} q_t c_1^t$,
$c_{2}^{o} = \frac{1}{\sum_{t=1}^{T} q_t} \sum_{t=1}^{T} q_t c_2^t$,
we have
\begin{talign*}
	f(\momega_F) - f(\momega^*) - \varphi & \le \cO \left(
	\mu \norm{\momega_0 - \momega^*}^2 \exp (- \frac{\mu \eta T}{2})
	+ c_1^o \eta + c_2^o \eta^2
	\right)
	\\
	& = \cO \left(
	\mu \norm{\momega_0 - \momega^*}^2 \exp (- \frac{\mu T}{L c_O})
	+ \frac{\sigma^2}{\mu N K T}
	+ \frac{c_{A}}{\mu T}
	+ \frac{c_{B}}{\mu T^2}
	+ \frac{c_C}{\mu^2 T^2}
	+ \frac{c_D}{\mu^2 T^3}
	\right)
	\, ,
\end{talign*}
where $c_O = 1 + A^2 + B^2$,
$c_{A} = G^2 + \frac{D^2 R^2}{R^2 + D^2}$,
$c_{B} = \frac{D^6}{(R^2 + D^2)^2}$,
$c_{C} = \frac{L \sigma^2}{\eta_g^2 K} + \frac{L c_A}{\eta_g^2}$,
$c_D = \frac{L c_B}{\eta_g^2}$.

\textit{General convex functions.}
When $f_{t, i} (\momega)$ are general convex functions ($\mu = 0$), directly use Lemma \ref{sub-linear convergence rate} we get
\begin{talign*}
	f(\momega_F) - f(\momega^*) - \varphi & \le \cO \left(
	\frac{c_O F}{T}
	+ \sqrt{ \frac{\sigma^2 F}{N K T} }
	+ \sqrt{ \frac{F c_A}{T} }
	+ \frac{\sqrt{c_B F}}{T}
	+ \sqrt[3]{ \frac{c_C F^2}{T^2} }
	+  \frac{\sqrt[3]{c_D F^2} }{T}
	\right) \, ,
\end{talign*}
where $c_O = 1 + A^2 + B^2$,
$c_{A} = G^2 + \frac{D^2 R^2}{R^2 + D^2}$,
$c_{B} = \frac{D^6}{(R^2 + D^2)^2}$,
$c_{C} = \frac{L \sigma^2}{\eta_g^2 K} + \frac{L c_A}{\eta_g^2}$,
$c_D = \frac{L c_B}{\eta_g^2}$,
and $F = \norm{\momega_0 - \momega^*}^2$.

\textit{The drift of optimal point.}
Because of information loss, we can't get the true optimal point when considering previous rounds' information.
Instead, the drift can be described as
\begin{talign*}
	\varphi = c_3^o \norm{\momega_0 - \momega^*} = \cO \left( R
	\left(
		\frac{D^2}{D^2 + R^2} - \frac{D^4}{(D^2 + R^2)^2} \frac{\ln T}{T}
		\right) \norm{\momega_0 - \momega^*}
	\right) \, .
\end{talign*}

\textit{Convergence of \fedavg.}
Notice that \fedavg is a special case of CFL by setting $p_{t, i} = 1$ on round $t$.
Having this, we derive the one round convergence of \fedavg under our formulation as
\begin{talign}
	f(\momega_t) - f(\momega^*) \le
	\frac{1}{\eta} (1 - \frac{\mu \eta}{2}) \E{\norm{\momega_t - \momega^*}^2}
	- \frac{1}{\eta} \E{\norm{\momega_{t+1} - \momega^*}^2}
	+ c_1 \eta + c_2 \eta^2  \,,
	\label{one step final}
\end{talign}
where $c_1 = 2 (G^2 + D^2) + \frac{\sigma^2}{NK}$,
$c_2 = \frac{L (G^2 + D^2)}{3 \eta_g^2} + \frac{L \sigma^2}{6 \eta_g^2 K}$.
Then using Lemma \ref{linear convergence rate}, we get the convergence rate when $f_{t, i}(\momega)$ are $\mu$ strongly convex
\begin{talign*}
	f(\momega_t) - f(\momega^*) \le \cO \left(
	\mu \norm{\momega_0 - \momega^*}^2 \exp (\frac{\mu T}{L c_O})
	+ \frac{\hat{c}_{A}}{\mu T}
	+ \frac{\hat{c}_C}{\mu^2 T^2}
	\right) \, ,
\end{talign*}
where $c_O = 1 + A^2 + B^2$,
$c_{A} = G^2 + D^2$,
$c_{C} = \frac{L \sigma^2}{\eta_g^2 K} + \frac{L c_A}{\eta_g^2}$.
Besides, when $f_{t, i} (\momega)$ are general convex, we have
\begin{small}
	\begin{talign*}
		f(\momega_F) - f(\momega^*) & \le \cO \left(
		\frac{c_O F}{T}
		+ \sqrt{ \frac{\sigma^2 F}{N K T} }
		+ \sqrt{ \frac{F c_A}{T} }
		+ \sqrt[3]{ \frac{c_C F^2}{T^2} }
		\right) \, ,
	\end{talign*}
\end{small}%
where $c_O = 1 + A^2 + B^2$,
$c_{A} = G^2 + D^2$,
$c_{C} = \frac{L \sigma^2}{\eta_g^2 K} + \frac{L c_A}{\eta_g^2}$,
and $F = \norm{\momega_0 - \momega^*}^2$.


\subsection{Proof of Convergence Rate for Non-convex Setting}
\label{sec:Non-convex}

In this section, we present the convergence results for the CFL and \fedavg algorithms under Assumptions~\ref{Smoothness and convexity assumption}--\ref{Bounded information loss assumption} when the local objective functions $f_{t,i}(\momega)$ are not convex. Firstly, based on the definition of an L-smooth function, we have that

\begin{talign}
	\Eb{f (\momega_{t+1})} \le \Eb{f (\momega_t)} + \underbrace{\Eb{\langle \nabla f(\momega_t), \Delta \momega_t \rangle}}_{A_1} +   \frac{L}{2} \underbrace{\Eb{\norm{ \Delta \momega_t }^2} }_{A_2} \, .
 \label{non convex first}
\end{talign}
For $A_1$ part of Inequality~\eqref{non convex first}, we have
\begin{talign}
	& \Eb{\langle \nabla f(\momega_t), \Delta \momega_t \rangle} \nonumber                                                                                                                                                                                                            \\
	& = \frac{- \eta}{N K} \sum_{i=1}^{N} \sum_{k=1}^{K} \Eb{\langle \nabla f(\momega_t), \nabla \bar{f}_{t, i}(\momega_{t, i, k})\rangle} \nonumber      \, ,                                                                                                                            \\
	& = \frac{- \eta}{N K} \sum_{i=1}^{N} \sum_{k=1}^{K} \Eb{\langle \nabla f(\momega_t), \sum_{\tau=1}^{t} p_{\tau, i} \nabla f_{\tau, i}(\momega_{t,i,k}) - \sum_{\tau=1}^{t-1} p_{\tau,i} \Delta_{\tau,i}(\momega_{t,i,k}) \rangle} \nonumber    \, ,                                  \\
	& \le  \frac{- \eta}{N K} \sum_{i=1}^{N} \sum_{k=1}^{K} \Eb{\langle \nabla f(\momega_t), \sum_{\tau=1}^{t} p_{\tau, i} \nabla f_{\tau, i}(\momega_{t,i,k}) \rangle} + \frac{\eta}{2N}\sum_{i=1}^{N}(1 - p_{t, i})\left( \Eb{\norm{\nabla f(\momega_t)}^2} + R^2 \right) \nonumber \, , \\
	& = \frac{\eta}{2NK} \sum_{i=1}^{N} \sum_{k=1}^{K}\left(\Eb{\norm{\nabla f(\momega_t) - \sum_{\tau=1}^{t} p_{\tau, i} \nabla f_{\tau, i}(\momega_{t,i,k})}^2} - \Eb{\norm{\sum_{\tau=1}^{t} p_{\tau,i} \nabla f_{\tau,i}(\momega_{t,i,k})}^2}\right) \nonumber      \, ,              \\
	& +\frac{\eta}{2N}\sum_{i=1}^{N}(1 - p_{t, i})\left( \Eb{\norm{\nabla f(\momega_t)}^2} + R^2 \right) - \frac{\eta}{2} \Eb{\norm{\nabla f(\momega_t)}^2} \, . \label{A1 nonconvex 1}
\end{talign}
From Inequality~\eqref{A2 middle}, we have
\begin{talign}
	\Eb{\norm{\Delta \momega_t}^2} \le \frac{2\eta^2}{N K} \sum_{i=1}^{N} \sum_{k=1}^{K} \left(
	\E{ \norm{
			\sum_{\tau=1}^{t} p_{\tau, i} \nabla f_{\tau, i} (\momega_{t, i, k})
		}^2 }
	+ \E{ \norm{
			\sum_{\tau=1}^{t-1} p_{\tau, i} \Delta_{\tau, i} (\momega_{t, i, k})
		}^2 }
	\right)
	+ \frac{\eta^2 \sigma^2}{NK} \, . \label{eq 27}
\end{talign}

Combine Inequalities~\eqref{A1 nonconvex 1} and ~\eqref{eq 27}, we can observe that when $\eta \le \frac{1}{2 L}$, the $\E{ \norm{\sum_{\tau=1}^{t} p_{\tau, i} \nabla f_{\tau, i} (\momega_{t, i, k})}^2 }$ term can be ignored, since the coefficient number will less than 0. This leaves us with the remaining term in $A_2$ as
$\frac{2\eta^2}{N} \sum_{i=1}^{N} (1 - p_{t,i})^2 R^2 + \frac{\eta^2 \sigma^2}{NK}$.
To address the remaining challenge posed by Inequality~\eqref{A1 nonconvex 1}, we consider the quantity $\norm{\nabla f(\momega_t) - \sum_{\tau=1}^{t} p_{\tau, i} \nabla f_{\tau, i}(\momega_{t,i,k})}^2$
\begin{talign}
	& \Eb{\norm{\nabla f(\momega_t) - \sum_{\tau=1}^{t} p_{\tau, i} \nabla f_{\tau, i}(\momega_{t,i,k})}^2} \nonumber                                                                                                               \\
	& = \Eb{\norm{\nabla f(\momega_t) - \nabla f(\momega_{t,i,k}) +  \nabla f(\momega_{t,i,k}) - \sum_{\tau=1}^{t} p_{\tau, i} \nabla f_{\tau, i}(\momega_{t,i,k})}^2} \nonumber    \, ,                                                \\
	& \le 2 \Eb{\norm{\nabla f(\momega_t) - \nabla f(\momega_{t,i,k})}^2} + 2 \Eb{\norm{\nabla f(\momega_{t,i,k}) - \sum_{\tau=1}^{t} p_{\tau, i} \nabla f_{\tau, i}(\momega_{t,i,k})}^2} \nonumber  \, ,                               \\
	& \le 2 L^2 \Eb{\norm{\momega_t - \momega_{t,i,k}}^2} +  \left( B^2 + A^2 \sum_{\tau=1}^{t} p_{\tau, i}^2 \right) \Eb{ \norm{\nabla f(\momega_{t,i,k})}^2} + \left( G^2 + D^2 \sum_{\tau=1}^{t} p_{\tau, i}^2 \right) \nonumber \, , \\
	& \le 2 L^2 \Eb{\norm{\momega_t - \momega_{t,i,k}}^2} +
	2 L^2 \left( B^2 + A^2 \sum_{\tau=1}^{t} p_{\tau, i}^2 \right) \Eb{ \norm{\momega_{t,i,k} - \momega_t}^2} \nonumber                                                                                                              \\
	& + 2 \left( B^2 + A^2 \sum_{\tau=1}^{t} p_{\tau, i}^2 \right) \Eb{\norm{\nabla f(\momega_t)}^2}
	+ \left( G^2 + D^2 \sum_{\tau=1}^{t} p_{\tau, i}^2 \right) \, . \label{A1 nonconvex 2}
\end{talign}
Combine Inequalities~\eqref{A1 nonconvex 1} and~\eqref{A1 nonconvex 2}, we have
\begin{talign}
	& \Eb{\langle \nabla f(\momega_t), \Delta \momega_t \rangle} \nonumber                                       \\
	& \le \frac{\eta L^2}{NK} \sum_{i=1}^{N} \sum_{k=1}^{K} c_{p_B, i} \Eb{\norm{\momega_{t,i,k} - \momega_t}^2}
	+ \eta (c_{p_B} - 1 - \frac{\sum_{i=1}^{N} p_{t, i}}{2N}) \Eb{ \norm{\nabla f(\momega_t)}^2} \nonumber        \\
	& + \frac{\eta}{2} \left( c_{p_G} + \frac{1}{N} \sum_{i=1}^{N}(1 - p_{t, i}) R^2 \right) \, . \label{eq 29}
\end{talign}
Combine Inequalities~\eqref{non convex first} and~\eqref{eq 29}, we have
\begin{talign}
	\Eb{f (\momega_{t+1})} & \le \Eb{f (\momega_t)} + \frac{\eta L^2 }{NK} \sum_{i=1}^{N} \sum_{k=1}^{K} c_{p_B, i} \Eb{\norm{\momega_{t,i,k} - \momega_t}^2} \nonumber \\
	& + \eta (c_{p_B} - 1 - \frac{\sum_{i=1}^{N} p_{t, i}}{2N}) \Eb{ \norm{\nabla f(\momega_t)}^2}
	+ \frac{\eta}{2} \left( c_{p_G} + \frac{1}{N} \sum_{i=1}^{N}(1 - p_{t, i}) R^2 \right)
	\nonumber                                                                                                                                                           \\
	& + \frac{L\eta^2}{N} \sum_{i=1}^{N} (1 - p_{t,i})^2 R^2 + \frac{L \eta^2 \sigma^2}{2NK} \, . \nonumber
\end{talign}
By Inequalities~\eqref{one step drift-2} and~\eqref{end of gradient drift}, we have
\begin{talign}
	& \E{\norm{\momega_{t,i,k} - \momega_t}^2} \nonumber                                                 \\
	& \le 4 k^2 \eta_l^2 L^2 c_{p_B, i} \E{\norm{\momega_{t,i,k-1} - \momega_t}^2}
	+ 4 k^2 \eta_l^2 c_{p_B, i} (\Eb{\norm{\nabla f(\momega_t)}^2}) + 2 k^2 \eta_l^2 c_{p_G, i} \nonumber \\
	& + 2 k^2 \eta_{l}^{2} (\sum_{\tau=1}^{t-1} p_{\tau, i})^2 R^2 + k \eta_l^2 \sigma^2 \, .
    \label{bouned local drift equation non convex}
\end{talign}
By iteratively applying Inequality~\eqref{bouned local drift equation non convex}, we obtain
\begin{talign*}
	& \E{\norm{\momega_{t,i,k} - \momega_t}^2} \nonumber                                                                                     \\
	& \le \sum_{r=0}^{k-1} \left( 4 k^2 \eta_l^2 c_{p_B, i} (\Eb{\norm{\nabla f(\momega_t)}^2}) + 2 k^2 \eta_l^2 c_{p_G, i}
	+ 2 k^2 \eta_{l}^{2} (\sum_{\tau=1}^{t-1} p_{\tau, i})^2 R^2 + k \eta_l^2 \sigma^2 \right) \left( 4 k^2 \eta_l^2 L^2 c_{p_B, i} \right)^r \, , \\
	& \le \frac{\left( 4 k^2 \eta_l^2 c_{p_B, i} (\Eb{\norm{\nabla f(\momega_t)}^2}) + 2 k^2 \eta_l^2 c_{p_G, i}
		+ 2 k^2 \eta_{l}^{2} (\sum_{\tau=1}^{t-1} p_{\tau, i})^2 R^2 + k \eta_l^2 \sigma^2 \right)}{1 - 4 k^2 \eta_l^2 L^2 c_{p_B, i}} \, .
\end{talign*}
Let $8 K^2 \eta_l^2 c_{p_B, i} \le 1$, we have
\begin{talign}
	& \frac{1}{NK} \sum_{i=1}^{N} \sum_{k=1}^{K} c_{p_B, i} \E{\norm{\momega_{t,i,k-1} - \momega_t}^2} \nonumber                                                                        \\
	& \le \frac{1}{N} \sum_{i=1}^{N} c_{p_B, i}\frac{\left( 4 K^2 \eta_l^2 c_{p_B, i} (\Eb{\norm{\nabla f(\momega_t)}^2}) + 2 K^2 \eta_l^2 c_{p_G, i}
		+ 2 K^2 \eta_{l}^{2} (\sum_{\tau=1}^{t-1} p_{\tau, i})^2 R^2 + K \eta_l^2 \sigma^2 \right)}{1 - 4 K^2 \eta_l^2 L^2 c_{p_B, i}} \nonumber       \, ,                                               \\
	& \le \frac{1}{N} \sum_{i=1}^{N} c_{p_B, i} \Eb{\norm{\nabla f(\momega_t)}^2} + \frac{1}{2} c_{p_G, i} + \frac{1}{2} (\sum_{\tau=1}^{t-1} p_{\tau, i})^2 R^2 + \frac{\sigma^2}{4 K} \, , \nonumber \\
	& = c_{p_B} \Eb{\norm{\nabla f(\momega_t)}^2} + \frac{1}{2} c_{p_G} + \frac{1}{2N} \sum_{i=1}^{N} (\sum_{\tau=1}^{t-1} p_{\tau, i})^2 R^2 + \frac{\sigma^2}{4 K} \label{bouned local drift equation non convex final} \, .
\end{talign}
Then combine Inequalities~\eqref{non convex first},~\eqref{eq 27},~\eqref{eq 29}, and~\eqref{bouned local drift equation non convex final}, we have
\begin{talign}
	\Eb{f (\momega_{t+1})} & \le \Eb{f (\momega_t)} + \eta \left( (L^2 + 1) c_{p_B} - 1 - \frac{\sum_{i=1}^{N} p_{t, i}}{2N} \right) \Eb{\norm{\nabla f(\momega_t)}^2} \nonumber \\
	& + \frac{\eta}{2} \left( c_{p_G} + \frac{1}{N} \sum_{i=1}^{N}(1 - p_{t, i}) R^2 \right)
	+ \frac{L\eta^2}{N} \sum_{i=1}^{N} (1 - p_{t,i})^2 R^2 + \frac{L \eta^2 \sigma^2}{2NK} \nonumber                                                                             \\
	& + \eta L^2 \left( \frac{1}{2} c_{p_G} + \frac{1}{2N} \sum_{i=1}^{N} (\sum_{\tau=1}^{t-1} p_{\tau, i})^2 R^2 + \frac{\sigma^2}{4 K} \right) \, .
\end{talign}

Then we must to choose the value of $p_{t, i}$ for better convergence. Follow the idea that we want to train a better model when convergent, we choose $p_{t, i}$ that can minimize the constant term, which is by solving
\begin{talign*}
	\min_{p} \; & \;  G^2 + D^2 \sum_{\tau=1}^{t} p_{\tau, i}^2 + (\sum_{\tau=1}^{t-1} p_{\tau, 1})^2 R^2 \, ,
	s.t. \;  \; \sum_{\tau=1}^{t} p_{\tau, 1} = 1 \, .
\end{talign*}

We get same results as when the objective function is convex, that is, $p_{\tau, i} = \frac{D^2}{t D^2 + (t-1) R^2}$ for any $\tau < t$, and $p_{t, i} = \frac{(t-1) R^2 + D^2}{t D^2 + (t-1) R^2}$, and $\min \{ G^2 + D^2 \sum_{\tau=1}^{t} p_{\tau, i}^2 + (\sum_{\tau=1}^{t-1} p_{\tau, 1})^2 R^2 \} = G^2 + D^2 \left( \frac{D^2 + (t-1)R^2}{tD^2 + (t-1) R^2} \right)$. Then we simplify the equation by using $c_0(t)$, $c_1(t)$, and $c_2(t)$ to denote the constant terms, we have
\begin{talign*}
	\Eb{f (\momega_{t+1})} & \le \Eb{f (\momega_t)} - \eta c_0(t) \Eb{\norm{\nabla f(\momega_t)}^2} + \eta c_1(t) + \eta^2 c_2(t)
\end{talign*}
Moving $\Eb{\norm{\nabla f(\momega_t)}^2}$ term to the left, we have
\begin{talign*}
	\frac{1}{T} \sum_{t=1}^{T} c_0(t) \Eb{\norm{\nabla f(\momega_t)}^2} \le \frac{\Eb{f (\momega_{0})} - \Eb{f (\momega^*)}}{\eta } + \eta \frac{1}{T} \sum_{t=1}^{T} c_2(t) + \varphi
\end{talign*}

Consider $c_0$, because the value of $B^2$ and $A$ can't be constrained, the algorithm can't converge for very large $A$ and $B$. Here we first consider when $c_0 < 0$, and use $c_{m}$ to denote the $\min c_0$. Then let $\eta = \frac{\sqrt{KN}}{\sqrt{T} L}$, we derive the convergence rate as
\begin{talign*}
	\frac{1}{T} \sum_{t=1}^{T} \Eb{\norm{\nabla f(\momega_t)}^2} & = O(\frac{L(f_0 - f_*)}{\sqrt{TNK} c_m}
	+ \frac{1}{\sqrt{T}} \left( \frac{\sqrt{KN}}{L}\left( \frac{R^2}{R^2 + D^2} \right)^2+ \frac{\sigma^2}{\sqrt{N K}}\right) \nonumber \\
	& + \frac{\sqrt{KN}}{TL} \frac{D^4}{c_m (R^2 + D^2)^2}
	+ \frac{\sqrt{KN}}{T \sqrt{T}L} \left( \frac{D^4}{(R^2 + D^2)^2} \right)^2 )
\end{talign*}

Then we want to analyse when the algorithm won't converge. When the algorithm won't converge, that means $c_0 \ge 0$. Because $c_0 = (L^2 + 1) c_{p_B} - 1 - \frac{\sum_{i=1}^{N} p_{t, i}}{2N} = \frac{1}{N} \sum_{i=1}^{N} (L^2 + 1) (1 + A^2 + B^2 \sum_{\tau=1}^{t} p_{\tau, i}^2) - 1 - \frac{p_{t, i}}{2}$. Then the value of $A^2, B^2, L, p$ decide if $c_0 \ge 0$.

Notice that in FedAvg, $p_{t, i} = 1$, and $c_{0, avg} = (L^2 + 1) (1 + A^2 + B^2) - \frac{3}{2}$, and in CFL, we can setting different $p$ to get better convergence. For example, when $B$ is large, setting $p_{\tau, i} = \frac{1}{t}$, we have $c_{0, } = (L^2 + 1) (1 + A^2 + \frac{B^2}{t}) - 1 - \frac{1}{2t}$. Thus, we draw the conclusion as

Then we have following observations:
\begin{itemize}
	\item \emph{CFL converge faster than FedAvg by reducing the variance terms.} For $c_0 = \frac{1}{N} \sum_{i=1}^{N} (L^2 + 1) (1 + A^2 + B^2 \sum_{\tau=1}^{t} p_{\tau, i}^2) - 1 - \frac{p_{t, i}}{2}$, in FedAvg, we have $p_{t, i} = 1$, then $c_{0, avg} = (L^2 + 1) (1 + A^2 + B^2) - \frac{3}{2}$. However, in CFL, we can adjust $p$, such as when setting $p_{\tau, i} = \frac{1}{t}$, we have $c_{0} = (L^2 + 1) (1 + A^2 + \frac{B^2}{t}) - 1 - \frac{1}{2t}$, then the variance term reduce from $B^2$ to $\frac{B^2}{t}$.
	\item \emph{The convergence rate of CFL become better for larger $t$, since $c_{p_B}$ in $c_0$ become smaller for larger $t$.}
	\item Improve $N, K$ will speed up the convergence by reducing the terms about $f_0 - f_*$ and $\sigma^2$. However, larger $N$ and $K$ will cause larger information loss and round drift (terms with $R$ and $D$), thus, it should be a trade off in practice.
\end{itemize}

\subsection{Theoretical Results}
\label{sec:Theoretical Results in appendix}

We give the full version of Convergence results as follows.
\begin{theorem}[Convergence rate of CFL methods]
	\label{Convergence rate of CFL full}
	Assume $\{ f_{t, i} (\momega) \}$ satisfy Assumption \ref{Smoothness and convexity assumption}--\ref{Bounded information loss assumption}, the output of Algorithm \ref{CFL Algorithm Framework} has expected error smaller than $\epsilon + \varphi$, for $\eta_g = 1$, $\eta_l \le \frac{\sqrt{3 + 4 (1 + B^2 + A^2)} - \sqrt{4 (1 + B^2 + A^2)}}{6KL\sqrt{1 + B^2 + A^2} }$,
	$p_{\tau, i} = \frac{D^2}{t D^2 + (t - 1) R^2}$ ($\tau < t$), and $p_{t, i} = \frac{(t-1)R^2 + D^2}{tD^2 + (t-1)R^2}$ on round $t$.

	When $\{ f_{t, i} (\momega) \}$ are $\mu$-strongly convex functions, we have
	\begin{small}
		\begin{talign*}
			T = \cO \left(
			\frac{L c_O}{\mu} + \frac{\sigma^2}{\mu N K \epsilon}
			+ \frac{c_A}{\mu \epsilon} + \sqrt{ \frac{c_B}{\mu \epsilon} }
			+ \sqrt{ \frac{c_C}{\mu^2 \epsilon} }
			+ \sqrt[3]{ \frac{c_D}{\mu^2 \epsilon} }
			\right) \, ,
		\end{talign*}
	\end{small}
	and when $\{ f_{t, i} (\momega) \}$ are general convex functions ($\mu = 0$), we have
	\begin{small}
		\begin{talign*}
			T = \cO \left(
			\frac{c_O \mathcal{H} + \sqrt{c_B \mathcal{H}} + \sqrt[3]{c_D \mathcal{H}^2}}{\epsilon}
			+ \frac{\sigma^2 \mathcal{H}}{N K \epsilon^2} + \frac{c_A \mathcal{H}}{\epsilon^2}
			+ \sqrt{ \frac{c_C \mathcal{H}^2}{\epsilon^3} }
			\right) \,,
		\end{talign*}
	\end{small}%
	and when $\{ f_{t, i} (\momega) \}$ are non-convex, setting $\eta = K\eta_g \eta_l = \frac{\sqrt{KN}}{\sqrt{T} L}$, when $\frac{1}{T} \sum_{t=1}^{T} \Eb{\norm{\nabla f(\momega_t)}^2}$ reach $\epsilon$ we have
	\begin{small}
		\begin{talign*}
			& T =
			\cO \left( \frac{L^2(f_0 - f_*)^2}{NK c_m^2 \epsilon^2}
			+ \frac{1}{\epsilon^2} ( \frac{\sqrt{KN} c_{R1}^2}{L}+ \frac{\sigma^2}{\sqrt{N K}} )^2 + \frac{\sqrt{KN} c_{R2}}{c_m \epsilon L} \right) \, ,
		\end{talign*}
	\end{small}%
	where $c_O = 1 + A^2 + B^2$,
	$c_{A} = G^2 + \frac{D^2 R^2}{R^2 + D^2}$,
	$c_{B} = \frac{D^6}{(R^2 + D^2)^2}$,
	$c_{C} = \frac{L \sigma^2}{\eta_g^2 K} + \frac{L c_A}{\eta_g^2}$,
	$c_D = \frac{L c_B}{\eta_g^2}$,
	$c_{R1} = \frac{R^2}{R^2 + D^2}$,
	$c_{R2} = \frac{D^4}{(R^2 + D^2)^2}$,
	$c_m$ is a constant related to $A$, $B$ and $R$,
	and $\mathcal{H} = \norm{\momega_0 - \momega^*}^2$.
	$N$ is the number of chosen clients in each round, and $K$ is the number of local iterations.
\end{theorem}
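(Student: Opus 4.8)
The plan is to reduce all three regimes (strongly convex, general convex, non-convex) to the one-round progress inequality of Theorem \ref{One round progress of CFL} and then feed the resulting recursion into the matching convergence lemma from the Techniques section. First I would establish the per-round descent bound. Starting from $\E{\norm{\momega_{t+1} - \momega^*}^2} = \norm{\momega_t - \momega^*}^2 - 2\E{\langle \momega_t - \momega^*, \Delta\momega_t\rangle} + \E{\norm{\Delta\momega_t}^2}$, I would bound the inner-product term using Lemma \ref{perturbed strong convexity} (perturbed strong convexity) applied to each $\nabla f_{\tau,i}$, and control the approximation gap $\Delta_{\tau,i}$ via Cauchy--Schwarz together with Assumption \ref{Bounded information loss assumption}; this produces a residual of order $\cO(\sum_\tau p_{\tau,i} R\norm{\momega_t - \momega^*})$, which is the source of the irreducible drift $\varphi_t$. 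For the second-moment term I would apply Lemma \ref{separating mean and variance} to split mean from variance, Lemma \ref{relaxed triangle inequality} to separate the true-gradient part from the information-loss part, and then Lemma \ref{Bounded Gradient Noise of CFL} to convert $\E{\norm{\sum_\tau p_{\tau,i}\nabla f_{\tau,i}}^2}$ into $c_{p_B}\E{\norm{\nabla f}^2} + c_{p_G}$.

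Next I would control the client-drift term $\E{\norm{\momega_{t,i,k} - \momega_t}^2}$ by unrolling the $K$ local steps and summing the resulting geometric series. This is where the step-size restriction enters: I need $4K^2\eta_l^2 L^2 c_{p_B,i} < 1$ so that the geometric sum converges, and, more stringently, the coefficient of $(f(\momega_t) - f^*)$ in the assembled recursion must be negative so it can be moved to the left-hand side. Solving the corresponding quadratic in $\eta$ yields the stated bound $\eta_l \le \frac{\sqrt{3 + 4(1+B^2+A^2)} - \sqrt{4(1+B^2+A^2)}}{6KL\sqrt{1+B^2+A^2}}$. After collecting terms I arrive at the clean recursion $f(\momega_t) - f^* \le \frac{1}{\eta}(1 - \frac{\mu\eta}{2})\E{\norm{\momega_t - \momega^*}^2} - \frac{1}{\eta}\E{\norm{\momega_{t+1}-\momega^*}^2} + c_1\eta + c_2\eta^2 + \varphi_t$, i.e.\ exactly Theorem \ref{One round progress of CFL}.

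The crux is the choice of round weights. I would minimize the combined variance $c_{p_G} + c_R = G^2 + D^2\sum_\tau p_{\tau,i}^2 + (\sum_{\tau<t} p_{\tau,i})^2 R^2$ subject to $\sum_\tau p_{\tau,i} = 1$ (Lemma \ref{best p in appendix}); the stationarity conditions give $p_{\tau,i} = \frac{D^2}{tD^2 + (t-1)R^2}$ for $\tau<t$ with the complementary weight at $\tau=t$, collapsing the variance to the key constant $c_A = G^2 + \frac{D^2R^2}{R^2+D^2}$, which is strictly smaller than the \fedavg value $G^2+D^2$. With the weights fixed, I would feed the recursion into Lemma \ref{linear convergence rate} for the strongly convex case and Lemma \ref{sub-linear convergence rate} for the general convex case, using Lemma \ref{time sum} to average the round-dependent constants $c_1^t,c_2^t$: the $\sum_t q_t/t$ structure produces the subdominant $\ln T/T$ factors that get absorbed into $c_B$ and $c_D$. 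Inverting the accuracy-versus-$T$ estimate then gives the two convex rates.

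For the non-convex regime I cannot telescope through function values, so I would instead start from the $L$-smoothness descent lemma $\Eb{f(\momega_{t+1})} \le \Eb{f(\momega_t)} + \Eb{\langle \nabla f(\momega_t), \Delta\momega_t\rangle} + \frac{L}{2}\Eb{\norm{\Delta\momega_t}^2}$, bound the two right-hand terms using the same drift and variance estimates but retaining $\Eb{\norm{\nabla f(\momega_t)}^2}$ as the progress measure, and again choose $p_{\tau,i}$ to minimize the identical constant. Summing over $t$, dividing by $T$, and setting $\eta = \sqrt{KN}/(\sqrt{T}L)$ yields the stated rate, with $c_m$ the sign-sensitive coefficient $(L^2+1)c_{p_B} - 1 - \tfrac{1}{2N}\sum_i p_{t,i}$. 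The main obstacle throughout is the bookkeeping of the coupling between information loss $R$, time drift $D$, and the weights $p_{\tau,i}$: every variance constant depends on all three, and I must verify that the single optimal-weight substitution simultaneously minimizes $c_1$, $c_2$, and the non-convex constant, while checking that the residual $\varphi$ remains controlled by $R\norm{\momega_0 - \momega^*}$ so that the iterates provably reach the $\epsilon+\varphi$ neighborhood rather than the exact optimum.
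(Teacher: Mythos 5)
Your proposal follows essentially the same route as the paper's proof: the identical $A_1$/$A_2$ decomposition of the one-round progress handled via the perturbed-strong-convexity, mean--variance separation, and bounded-drift lemmas, the same geometric-series unrolling of the local steps yielding the stated step-size restriction, the same minimization of $c_{p_G}+c_R$ over the simplex to fix $p_{\tau,i}$, the same invocation of the linear/sub-linear convergence lemmas together with the $\ln T/T$ averaging of the round-dependent constants, and the same smoothness-descent treatment of the non-convex case with $c_m$ as the minimal gradient coefficient. The argument is correct and matches the paper's proof step for step.
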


\begin{theorem}[Convergence rate of \fedavg under time-varying scenarios]
	\label{Convergence rate of FedAvg full}
	Assume $\{ f_{t, i} (\momega) \}$ satisfy Assumption \ref{Smoothness and convexity assumption}--\ref{Bounded information loss assumption}, the output of \fedavg has expected error smaller than $\epsilon$, for $\eta_g = 1$ and $\eta_l \le \frac{\sqrt{3 + 4 (1 + B^2 + A^2)} - \sqrt{4 (1 + B^2 + A^2)}}{6KL\sqrt{1 + B^2 + A^2} }$.
	When $\{ f_{t, i} (\momega) \}$ are $\mu$-strongly convex functions, we have
	\begin{small}
		\begin{talign*}
			T = \cO \left(
			\frac{L c_O}{\mu} + \frac{\sigma^2}{\mu N K \epsilon}
			+ \frac{c_A}{\mu \epsilon}
			+ \sqrt{ \frac{c_C}{\mu^2 \epsilon} }
			\right) \,,
		\end{talign*}
	\end{small}%
	and when $\{ f_{t, i} (\momega) \}$ are general convex functions ($\mu = 0$), we have
	\begin{small}
		\begin{talign*}
			T = \cO \left(
			\frac{c_O \mathcal{H} }{\epsilon}
			+ \frac{\sigma^2 \mathcal{H}}{N K \epsilon^2} + \frac{c_A \mathcal{H}}{\epsilon^2}
			+ \sqrt{ \frac{c_C \mathcal{H}^2}{\epsilon^3} }
			\right) \,,
		\end{talign*}
	\end{small}%
	and when $\{ f_{t, i} (\momega) \}$ are non-convex, setting $\eta = K\eta_g \eta_l = \frac{\sqrt{KN}}{\sqrt{T} L}$, when $\frac{1}{T} \sum_{t=1}^{T} \Eb{\norm{\nabla f(\momega_t)}^2}$ reach $\epsilon$ we have
	\begin{small}
		\begin{talign*}
			& T =
			\cO \left( \frac{L^2(f_0 - f_*)^2}{NK c_m^2 \epsilon^2}
			+ \frac{1}{\epsilon^2} ( \frac{\sqrt{KN}}{L}+ \frac{\sigma^2}{\sqrt{N K}} )^2 \right) \, ,
		\end{talign*}
	\end{small}%
	where $c_O = 1 + A^2 + B^2$,
	$c_{A} = G^2 + D^2$,
	$c_{C} = \frac{L \sigma^2}{\eta_g^2 K} + \frac{L c_A}{\eta_g^2}$,
	and $\mathcal{H} = \norm{\momega_0 - \momega^*}^2$.
	$N$ is the number of chosen clients in each round, and $K$ is the number of local iterations.
\end{theorem}

\subsection{Convergence Rate of CFL under Correlated Time Drifts}
\label{sec:Convergence rate of CFL under correlated time drifts}

In this section, we study the scenario where the time drifts $\mxi_{t_1, i}$ and $\mxi_{t_2, i}$ are correlated. We first provide some useful lemmas.

\begin{lemma}[Bounded Gradient Noise of CFL under correlated time drifts]
	\label{Bounded Gradient Noise of CFL under correlated time drifts}
	For Formulation \eqref{CFL Formulation},
	consider \algopt and FedAvg, when time drift $\mxi_{t, i}$ satisfy Assumption~\ref{Time drift with correlation assumption}, we can bound the gradient drift as
	\begin{talign*}
		\E{ \norm{ \sum_{\tau=1}^{t} p_{\tau, i} \nabla f_{\tau, i}(\momega) }^2 } & \le \left( 1 + B^2 + A^2 \sum_{\tau=1}^{t} p_{\tau, i}^{2} + C^2 \sum_{\tau_1 \not = \tau_2}^{t} p_{\tau_1, i} p_{\tau_2, i} \right) \E{ \norm{\nabla f(\momega)}^2 }
		\nonumber                                                                                                                                                                                                                                          \\                          & \quad + G^2 + D^2 \sum_{\tau = 1}^{t} p_{\tau_i}^{2} + min(F^2, D^2) \sum_{\tau_1 \not = \tau_2}^{t} p_{\tau_1, i} p_{\tau_2, i} \, .
	\end{talign*}
\end{lemma}

\begin{proof}
	Using Assumption \ref{Bounded gradient noise of CFL assumption},
	together with the fact that $\nabla f_{t, i}(\momega) = \nabla f(\momega) + \mdelta_{t, i} + \mxi_{t, i}$ (Here to simplify the notation, we use $\{t, i\}$ pair to denote the client participate in training on round $t$ and number $i$), we have
	\begin{small}
		\begin{talign*}
			\E{ \norm{ \sum_{\tau=1}^{t} p_{\tau, i} \nabla f_{\tau, i}(\momega) }^2 } & = \E{ \norm{ \sum_{\tau=1}^{t} p_{\tau, i} (\nabla f_{\tau, i}(\momega) + \mdelta_i + \mxi_{\tau, i}) }^2 } \, ,                                                           \\
			& = \E{ \norm{ \nabla f(\momega) }^2 } + \E{ \norm{ \mdelta_i }^2 } + \E{ \norm{ \sum_{\tau=1}^{t} p_{\tau, i} \mxi_{\tau, i} }^2 }     \, ,                                \\
			& \le \left( 1 + B^2 + A^2 \sum_{\tau=1}^{t} p_{\tau, i}^{2} + C^2 \sum_{\tau_1 \not = \tau_2}^{t} p_{\tau_1, i} p_{\tau_2, i} \right) \E{ \norm{\nabla f(\momega)}^2 }
			\nonumber                                                                                                                                                                                                                                          \\                          & + G^2 + D^2 \sum_{\tau = 1}^{t} p_{\tau_i}^{2} + min(F^2, D^2) \sum_{\tau_1 \not = \tau_2}^{t} p_{\tau_1, i} p_{\tau_2, i} \, .
		\end{talign*}
	\end{small}
\end{proof}

Based on the results of Inequalities~\eqref{A2-final} and~\eqref{one step drift final}, we define $c_{p_B, i} = 1 + B^2 + A^2 \sum_{\tau=1}^{t} p_{\tau, i}^{2} + C^2 \sum_{\tau_1 \not = \tau_2}^{t} p_{\tau_1, i} p_{\tau_2, i}$, and $c_{p_G, i} = G^2 + D^2 \sum_{\tau = 1}^{t} p_{\tau_i}^{2} + F^2 \sum_{\tau_1 \not = \tau_2}^{t} p_{\tau_1, i} p_{\tau_2, i}$. This leads to the same results as Inequality~\eqref{one step final}. The difference between using Lemma~\ref{Bounded gradient noise of CFL assumption} and Lemma~\ref{Bounded Gradient Noise of CFL under correlated time drifts} is only in the definitions of $c_{p_B, i}$ and $c_{p_G, i}$. As in Section~\ref{sec:weights of rounds}, the optimal weights can be found by solving the following optimization problem
\begin{small}
	\begin{talign}
		& \min_{p} \frac{1}{N} \sum_{i=1}^{N} (1 - p_{t, i})^2 R^2 + G^2 + D^2 (\sum_{\tau=1}^{t} p_{\tau, i}^{2}) + min(F^2, D^2) \sum_{\tau_1 \not = \tau_2}^{t} p_{\tau_1} p_{\tau_2} \, ,\nonumber \\
		& s.t. \; \sum_{\tau=1}^{t} p_{\tau, i} = 1, \forall i = 1, \dots, N \, .
		\label{optimal weight of related time drift}
	\end{talign}
\end{small}%
Then we can get,
\begin{talign*}
	p_{\tau, i} = \frac{max(D^2 - F^2, 0)}{t\;max(D^2 - F^2, 0) + (t-1) R^2}, \forall \tau < t \,,
	p_{t, i}    = \frac{(t-1) R^2 + max(D^2 - F^2, 0)}{t\;max(D^2 - F^2, 0) + (t-1) R^2} \,.
\end{talign*}
Compare with the results in Section~\ref{sec:weights of rounds}, we can found that the weights when using Lemma~\ref{Bounded Gradient Noise of CFL under correlated time drifts} is the same as the weights of using Lemma~\ref{Bounded gradient noise of CFL assumption}, and set $D^2$ to $max(D^2 - F^2, 0)$. Then substitute the $p_{\tau, i}$ and $p_{t, i}$ in Section~\ref{sec:convex convergence} and Section~\ref{sec:Non-convex}, we derive the convergence rate under Assumption~\ref{Time drift with correlation assumption}.
\begin{theorem}[Convergence rate of CFL methods under correlated time drifts] \label{Convergence rate of CFL with correlated time drifts appendix}
	Assume $\{ f_{t, i} (\momega) \}$ satisfy Assumption \ref{Smoothness and convexity assumption}--\ref{Time drift with correlation assumption}, the output of Algorithm \ref{CFL Algorithm Framework} has expected error smaller than $\epsilon + \varphi$, for $\eta_g = 1$, $\eta_l \le \frac{\sqrt{3 + 4 c_O} - \sqrt{4 c_O}}{6KL\sqrt{c_O} }$,
	$p_{\tau, i} = \frac{M^2}{t (M^2) + (t - 1) R^2}$ (for $\tau < t$), and $p_{t, i} = \frac{(t-1)R^2 + M^2}{t(M^2) + (t-1)R^2}$ on round $t$.

	When $\{ f_{t, i} (\momega) \}$ are $\mu$-strongly convex functions, we have,
	\begin{small}
		\begin{talign*}
			T = \cO \left(
			\frac{L c_O}{\mu} + \frac{\sigma^2}{\mu N K \epsilon}
			+ \frac{1}{\mu \epsilon} \left( G^2 + \frac{M^2 R^2}{R^2 + M^2} \right)
			\right) \, ,
		\end{talign*}
	\end{small}
	and when $\{ f_{t, i} (\momega) \}$ are general convex functions ($\mu = 0$), we have
	\begin{small}
		\begin{talign*}
			T = \cO \left(
			\frac{c_O \mathcal{H} }{\epsilon}
			+ \frac{\sigma^2 \mathcal{H} }{N K \epsilon^2} + \frac{\mathcal{H} }{\epsilon^2} \left( G^2 + \frac{M^2 R^2}{R^2 + M^2} \right)
			\right) \,,
		\end{talign*}
	\end{small}%
	and when $\{ f_{t, i} (\momega) \}$ are non-convex, by setting $\eta = K\eta_g \eta_l = \frac{\sqrt{KN}}{\sqrt{T} L}$, $\frac{1}{T} \sum_{t=1}^{T} \Eb{\norm{\nabla f(\momega_t)}^2}$ needs to take the following steps to reach $\epsilon$
	\begin{small}
		\begin{talign*}
			& T =
			\cO \left( \frac{L^2(f_0 - f_*)^2}{NK c_m^2 \epsilon^2}
			+ \frac{1}{\epsilon^2} \left( \frac{\sqrt{KN} c_{R1}^2}{L}+ \frac{\sigma^2}{\sqrt{N K}} \right)^2 \right) \, ,
		\end{talign*}
	\end{small}%
	where $M^2 = max(0, D^2 - F^2)$
	$c_O = 1 + A^2 + max(B^2, C^2)$,
	$c_{R1} = \frac{R^2}{R^2 + M^2}$,
	$c_m$ is a constant related to $A$, $B$, $C$ and $R$,
	and $\mathcal{H} = \norm{\momega_0 - \momega^*}^2$.
	$N$ is the number of chosen clients in each round, and $K$ is the number of local iterations.
\end{theorem}

\subsection{Proof of Theorem~\ref{Time drift with correlation assumption: a special case}}
\label{sec:proof of optimal weights of special case}

\begin{lemma} Given
	\begin{talign}
		Q = \begin{bmatrix}
			D^2 + R^2              & \alpha D^2 + R^2       & \cdots & \alpha^{t-2} D^2 + R^2 & \alpha^{t-1} D^2 \\
			\alpha D^2 + R^2       & D^2 + R^2              & \cdots & \alpha^{t-3} D^2 + R^2 & \alpha^{t-2} D^2 \\
			\vdots                 & \vdots                 &        & \vdots                 & \vdots           \\
			\alpha^{t-2} D^2 + R^2 & \alpha^{t-3} D^2 + R^2 & \cdots & \alpha^2 D^2 + R^2     & \alpha D^2       \\
			\alpha^{t-1} D^2       & \alpha^{t-2} D^2       & \cdots & \alpha D^2             & D^2
		\end{bmatrix} \, ,
	\end{talign}
	we have $Q$ is positive definite.
	\label{pd Q}
\end{lemma}

\begin{proof}
	We can decompose $Q$ into two matrix, $M_{R}$ and $M_{F}$, and $Q = M_R + M_F$.
	Firstly, we would like to show that $M_R$ is positive and semi-definite, and $M_R$ is defined by,
	\begin{small}
		\begin{talign*}
			M_{R} = \begin{bmatrix}
				R^2    & R^2    & \cdots & R^2    & 0      \\
				R^2    & R^2    & \cdots & R^2    & 0      \\
				\vdots & \vdots &        & \vdots & \vdots \\
				R^2    & R^2    & \cdots & R^2    & 0      \\
				0      & 0      & \cdots & 0      & 0
			\end{bmatrix} \, .
		\end{talign*}
	\end{small}%
	First, let us consider the matrix $M_R$. It has a rank of 1 and $t-1$ zero eigenvalues, with a single non-zero eigenvalue of $(t-1) R^2$. This means that $M_R$ is positive and semi-definite. Next, we will prove that $M_F$ is positive definite. This matrix is defined as follows
	\begin{small}
		\begin{talign*}
			M_{F} = \begin{bmatrix}
				D^2              & \alpha D^2       & \cdots & \alpha^{t-2} D^2 & \alpha^{t-1} D^2 \\
				\alpha D^2       & D^2              & \cdots & \alpha^{t-3} D^2 & \alpha^{t-2} D^2 \\
				\vdots           & \vdots           &        & \vdots           & \vdots           \\
				\alpha^{t-2} D^2 & \alpha^{t-3} D^2 & \cdots & \alpha^2 D^2     & \alpha D^2       \\
				\alpha^{t-1} D^2 & \alpha^{t-2} D^2 & \cdots & \alpha D^2       & D^2
			\end{bmatrix} \, .
		\end{talign*}
	\end{small}
By row operations, we can transform $M_F$ into
	\begin{small}
		\begin{talign*}
			\begin{bmatrix}
				D^2    & \alpha D^2         & \cdots & \alpha^{t-2} D^2                    & \alpha^{t-1} D^2              \\
				0      & D^2 (1 - \alpha^2) & \cdots & D^2 (\alpha^{t-3} - \alpha^{t - 1}) & D^2 (\alpha^{t-2} - \alpha^t) \\
				\vdots & \vdots             &        & \vdots                              & \vdots                        \\
				0      & 0                  & \cdots & D^2 (1 - \alpha^2)                  & D^2 (\alpha - \alpha^3)       \\
				0      & 0                  & \cdots & 0                                   & D^2 (1 - \alpha^2)
			\end{bmatrix} \, .
		\end{talign*}
	\end{small}
Because $0 \le \alpha < 1$, we have $M_F$ is positive definite. Then $Q = M_R + M_F$ is also positive definite.

\end{proof}

\begin{lemma}[Bounded Gradient Noise of CFL under correlated time drifts]
	\label{Bounded Gradient Noise of CFL under correlated time drifts general}
	For Formulation \eqref{CFL Formulation},
	consider \algopt and FedAvg, when time drift $\xi_{t, i}$ satisfy Assumption~\ref{Generalized time drift with correlation assumption}, we can bound the gradient drift as\\
	\begin{small}
		\begin{talign*}
			\E{ \norm{ \sum_{\tau=1}^{t} p_{\tau, i} \nabla f_{\tau, i}(\momega) }^2 } & \le \left( 1 + B^2 + A^2 \sum_{\tau=1}^{t} p_{\tau, i}^{2} + \sum_{\tau_1 \not = \tau_2}^{t} C_{\tau_1, \tau_2}^2 p_{\tau_1, i} p_{\tau_2, i} \right) \E{ \norm{\nabla f(\momega)}^2 }
			\nonumber                                                                                                                                                                                                                                                           \\                          & + G^2 + D^2 \sum_{\tau = 1}^{t} p_{\tau_i}^{2} + \sum_{\tau_1 \not = \tau_2}^{t} min(F_{\tau_1, \tau_2}^2, D^2) p_{\tau_1, i} p_{\tau_2, i} \, .
		\end{talign*}
	\end{small}
\end{lemma}

\begin{proof}
	Using Assumption \ref{Generalized time drift with correlation assumption},
	together with the fact that $\nabla f_{t, i}(\momega) = \nabla f(\momega) + \mdelta_{t, i} + \mxi_{t, i}$ (Here to simplify the notation, we use $\{t, i\}$ pair to denote the client participate in training on round $t$ and number $i$), we have
	\begin{small}
		\begin{talign*}
			\E{ \norm{ \sum_{\tau=1}^{t} p_{\tau, i} \nabla f_{\tau, i}(\momega) }^2 } & = \E{ \norm{ \sum_{\tau=1}^{t} p_{\tau, i} (\nabla f_{\tau, i}(\momega) + \mdelta_i + \mxi_{\tau, i}) }^2 } \, ,                                                                           \\
			& = \E{ \norm{ \nabla f(\momega) }^2 } + \E{ \norm{ \mdelta_i }^2 } + \E{ \norm{ \sum_{\tau=1}^{t} p_{\tau, i} \mxi_{\tau, i} }^2 }                 \, ,                                     \\
			& \le \left( 1 + B^2 + A^2 \sum_{\tau=1}^{t} p_{\tau, i}^{2} + \sum_{\tau_1 \not = \tau_2}^{t} C_{\tau_1, \tau_2}^2 p_{\tau_1, i} p_{\tau_2, i} \right) \E{ \norm{\nabla f(\momega)}^2 } 
			\nonumber                                                                                                                                                                                                                                                           \\                          & + G^2 + D^2 \sum_{\tau = 1}^{t} p_{\tau_i}^{2} + \sum_{\tau_1 \not = \tau_2}^{t} min(F_{\tau_1, \tau_2}^2, D^2) p_{\tau_1, i} p_{\tau_2, i} \, .
		\end{talign*}
	\end{small}
\end{proof}

Based on Lemma~\ref{Bounded Gradient Noise of CFL under correlated time drifts general}, using the same process in Section~\ref{sec:Convergence rate of CFL under correlated time drifts}, we can prove that the optimal $p_{\tau, i}$ is given by solving the following optimization problem.
\begin{talign*}
	& \min_{p} \frac{1}{N} \sum_{i=1}^{N} (1 - p_{t, i})^2 R^2 + G^2 + D^2 (\sum_{\tau=1}^{t} p_{\tau, i}^{2}) + \sum_{\tau_1 \not = \tau_2}^{t} min(F_{\tau_1, \tau_2}^2, D^2) p_{\tau_1} p_{\tau_2} \, , \\
	& s.t. \; \sum_{\tau=1}^{t} p_{\tau, i} = 1, \forall i = 1, \dots, N \, .
\end{talign*}
When under Assumption~\ref{Time drift with correlation assumption: a special case}, we can change this optimization problem to,
\begin{talign*}
	& \min_{p} \frac{1}{N} \sum_{i=1}^{N} (1 - p_{t, i})^2 R^2 + G^2 + D^2 \sum_{\tau_1, \tau_2}^{t} \alpha^{\| \tau_1 - \tau_2 \|} p_{\tau_1} p_{\tau_2} \, ,\\
	& s.t. \; \sum_{\tau=1}^{t} p_{\tau, i} = 1, \forall i = 1, \dots, N \, .
\end{talign*}
This optimization problem can be further transformed to,
\begin{talign*}
	& \min_{p} \frac{1}{N} \sum_{i=1}^{N} \sum_{\tau_1, \tau_2}^{t} \alpha^{| \tau_1 - \tau_2 |} D^2 p_{\tau_1} p_{\tau_2} + \sum_{\tau_1, \tau_2}^{t - 1} R^2 p_{\tau_1} p_{\tau_2} \, , \\
	& s.t. \; \sum_{\tau=1}^{t} p_{\tau, i} = 1, \forall i = 1, \dots, N \, .
\end{talign*}
Then we can construct a matrix $Q \in \mathbb{R}^{t \times t}$, where $Q_{ij} = \alpha^{|i - j| D^2 + R^2}$ for $i, j < t$, and $Q_{ij} = \alpha^{|i - j| D^2}$ for others, then the optimization problem become,
\begin{talign*}
	& \min_{\hat{p}} \hat{p}^{T} Q \hat{p} \, ,
	s.t. \; \sum_{\tau=1}^{t} p_{\tau, i} = 1, \forall i = 1, \dots, N \, ,
\end{talign*}
where $\hat{p} \in \mathbb{R}^{t} = [p_1, \cdots, p_t]^{T}$. Based on Lemma~\ref{pd Q}, we have $Q$ is positive definite, and based on the quadratic programming theory, we can transform the optimization problem to the following linear system,
\begin{talign*}
	\begin{bmatrix}
		Q              & \mathbf{1} \\
		\mathbf{1}^{T} & 0
	\end{bmatrix}
	\begin{bmatrix}
		\hat{p} \\
		\lambda
	\end{bmatrix}
	=
	\begin{bmatrix}
		\mathbf{0} \\
		1
	\end{bmatrix} \, ,
\end{talign*}
where $\lambda$ is a set of Lagrange multipliers which come out of the solution alongside $\hat{p}$.

\section{Experiment details}

\subsection{Realistic Data Sets}

\label{sec:Realistic data sets}

\subsubsection{Setup}

\label{sec:realistic setup appendix}

We consider federated learning an image classifier on split-CIFAR10 and split-CIFAR100 data sets with ResNet18, and split-Fashion-MNIST data set with a two-layer MLP.
The ``split'' follows the idea introduced in previous works~\citep{yurochkin2019bayesian,hsu2019measuring,reddi2021adaptive}, where we leverage the Latent Dirichlet Allocation (LDA) to control the distribution drift with parameter $\alpha$ (See Algorithm \ref{Data spliting algorithm}).
Larger $\alpha$ denotes smaller drifts here.\\
In our experiments, unless specifically mentioned otherwise all data sets are partitioned to 210 subsets for 7 different clients: all clients are selected and trained for 500 communication rounds, and each client samples one of the corresponding 30 subsets randomly for the local training.
Note that unless mentioned otherwise the training strategy here applies to all FL baselines and CFL methods, and we evaluate the performance of models on global test data sets.
We carefully tuned the hyper-parameters in all algorithms, and we report the results under the optimal settings after many trials.
For CFL-Regularization, we set the weight of regularization term for $\beta = 1$ on fc layer, and $\beta = 0.1$ for last block (layer). For FedProx, we set the weight of proximal term $\mu = 0.1$, and for MimeLite, we set the global momentum weight for $\gamma = 0.01$. Besides, for all experiments, we set fixed learning rate $lr = 0.01$.
We also naively examined warm-up tuning strategies but can not observe significant improvement on final results.

\textit{Construct local data sets with overlap.}
\begin{figure}[!t]
	\centering
	\includegraphics[width=.5\textwidth,]{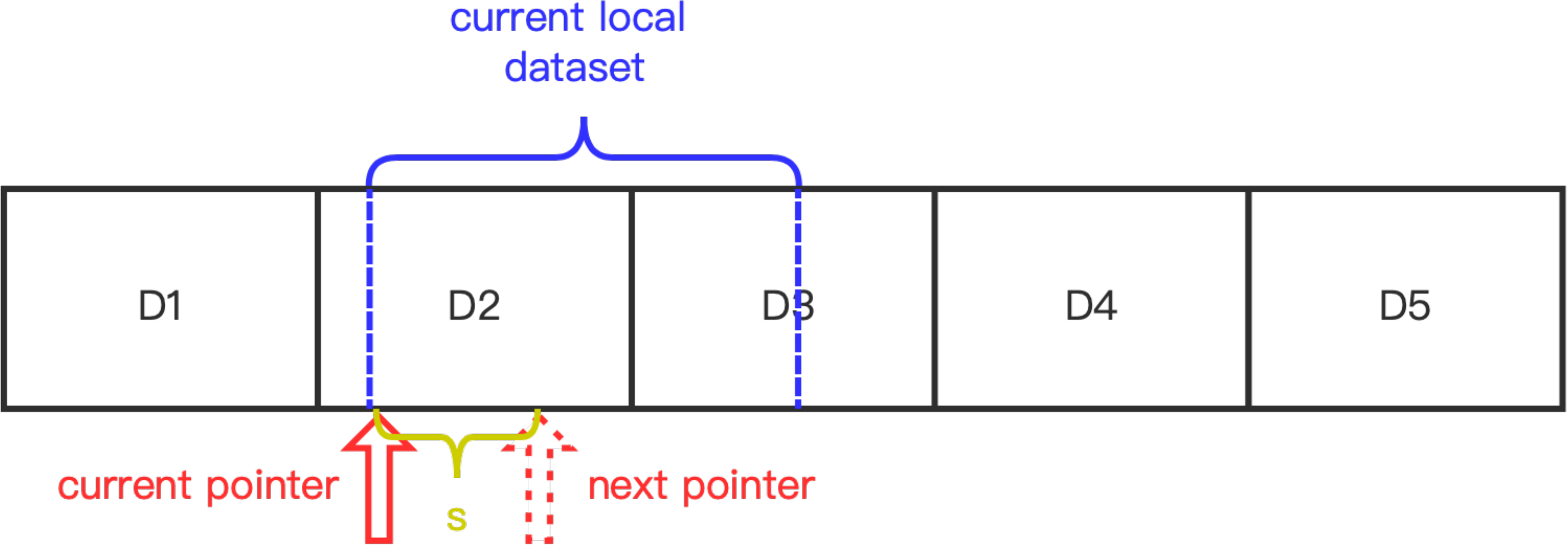}
	\caption{Process of constructing local data sets with overlap. Blue lines bound the current local data sets. Red pointer points to the start of current local data set.}
	\label{Process of constructing local data sets with overlap}
\end{figure}
Follow the partition methods of disjoint local data sets; we first split the whole data set to $M \times S$ subsets for $S$ different clients: each client has $M$ disjoint local subsets, and we put these $M$ subsets in sequence. For each client, we use a pointer to point to the start position of the current local data set. Figure ~\ref{Process of constructing local data sets with overlap} show the case when $M = 5$: $D_1 - D_5$ are $5$ disjoint local subsets of client $i$.
The pointer point to the start point of a local data set of the current round and the blue lines bound current subsets. At the beginning of training, pointers belong to each client will point to the beginning of the sequence. At the end of each round, the pointer moves $s$ steps, and if it moves to the end of all the sequence, it will come back to the start point.

In practice, the Cifar10 data set is partitioned to 210 subsets for 7 clients, and set the size of local data set $S_l = 285$. Then when we set $s = 285$, the local data sets are disjoint, which means the overlap is $0 \%$. Otherwise, when set $s = 213$, the overlap is $25 \%$, and when set $s = 142$, the overlap is $50\%$.

\subsubsection{Approximation Methods} \label{sec:Approximation Methods Appendix}

\textit{Regularization Methods.} We use Taylor Extension to approximate the objective functions of previous rounds, that is
\begin{talign*}
	\tilde{f}(\momega) = f(\hat{\momega}) + \nabla f(\hat{\momega})^T (\momega - \hat{\momega}) + \frac{1}{2} (\momega - \hat{\momega})^{T} \nabla^2 f(\hat{\momega}) (\momega - \hat{\momega}) \, ,
\end{talign*}
where $\hat{\momega}$ are the parameters of previous rounds. Then the gradient of $\tilde{f}(\momega)$ is
\begin{talign*}
	\nabla \tilde{f} (\momega) = \nabla f(\hat{\momega}) + \nabla^2 f(\hat{\momega}) (\momega - \hat{\momega}) \,.
\end{talign*}
Having this, we can approximate the gradients of objective functions of previous rounds.
The problem is how to approximate $\nabla^2 f(\hat{\momega})$.
Here we use two kinds of methods to calculate the Hessian matrices. The first is to use the Fisher Information Matrix as introduced in EWC~\citep{kirkpatrick2017overcoming}. The Fisher Information Matrix is equal to the diagonal Hessian matrix when using cross entropy loss functions. The Fisher Information Matrix can be calculated by
\begin{talign*}
	F_{ij} = [\nabla f(\hat{\momega})]_{ij}^{2} \,,
\end{talign*}
where $F_{ij}$ are the entries of Fisher Information Matrix.
Another method is to calculate the diagonal Hessian matrix. We use the PyHessian package~\citep{Yao2020pyhessian} in this experiment.

In each round, the server will collect Hessian matrices from clients, and store them in a buffer. Then at the beginning of next round, server combine send latest 40 Hessian matrices, gradients, and parameters, and send them to chosen clients. Each clients use these to calculate regularization terms.

In practice, we only add regularization terms to the top layers (last block and fc layer of ResNet18). This is based on the assumption that top layers contain more personal information while bottom layers contain more general information.
We verified that this strategy perform better than adding regularization terms to all layers in experiments.

\textit{Generation Methods.} We use MCMC to generate samples in each round. In practice, we initialize $\xx$ from uniform distribution, and update $\xx$ by
\begin{talign*}
	\xx^k = \xx^{k-1} - \eta \nabla_\xx E(\xx^{k-1}) + \momega \, ,
\end{talign*}
where $E(\xx^k-1)$ is the function that can measure the distance between $\xx^{k-1}$ and the real local distribution. $\momega \sim \cN (0, \sigma)$.

In practice, we generate 50-100 samples of each local data set and add them to the current local data sets for training.
However, because of the low quality of the generated data, the improvement is limited.
Besides, the generated data will pollute the batch normalization(BN) layer, and we should use real data to refresh BN layer at the end of each round.

\textit{Core Set Methods.}
Another simple yet effective treatment in CL lines in the category of Exemplar Replay~\citep{rebuffi2017icarl,castro2018end}.
This approach stores past core-set samples (a.k.a.\ exemplars) selectively and periodically, and replays them together with the current local data sets.
We tried two sample methods. First is so-called Naive method, in which the samples are uniformly chosen from local data sets.
Except of naive select core sets, we also tried another core set sampling method introduced in iCaRL~\citep{rebuffi2017icarl}. See Algorithm \ref{iCaRL Core Set} for details.

\begin{algorithm}[!t]
	\begin{algorithmic}[1]
		\Require{Image set $\mX = \{ \xx_1, \xx_2, \xx_3, ..., \xx_n \}$ of class $y$, $m$: target number of samples, $\phi: \cX \to \cR^d$: current feature function}
		\myState{$\mu \gets \frac{1}{n} \sum_{\xx \in \mX} \phi(\xx)$}
		\For{$k = 1, ..., m$}
		\myState{$p_k \gets \argmin_{\xx \in \mX} \norm{\mu - \frac{1}{k} \left( \phi(\xx) + \sum_{j=1}^{k-1} \phi(p_i) \right)}$}
		\EndFor
		\myState{$P \gets (p_1, p_2, ..., p_m)$}
	\end{algorithmic}
	\mycaptionof{algorithm}{\small iCaRL Construct Core Set}
	\label{iCaRL Core Set}
\end{algorithm}

In practice, we save 100 figures of each local data set and combine them with the current local data sets. Saving core sets perform best compared with these three approximation methods; however, only valid when the number of clients is limited.

\subsubsection{Additional Experiments} \label{sec:Additional Experiments}

\textit{Applicability of different algorithms.}
We list the applicability of different algorithms under various time-varying scenarios in Table~\ref{Applicable Scenarios for Different Algorithms}.

\begin{table*}[!t]
	\small
	\centering
	\resizebox{1.\textwidth}{!}{%
		\begin{tabular}{l c c c c c c}
			\toprule
			\multirow{2}{*}{Scenarios} & \multicolumn{4}{c}{FL baselines} & \multicolumn{2}{c}{CFL methods}                                                           \\
			\cmidrule(lr){2-5} \cmidrule(lr){6-7}

			                           & FedAvg                           & FedProx                         & SCAFFOLD & MimeLite & CFL-Regularization & CFL-Core-Set \\

			\midrule

			Stateful clients           & $\surd$                          & $\surd$                         & $\times$ & $\surd$  & $\surd$            & $\surd$      \\

			Stateless clients          & $\surd$                          & $\surd$                         & $\times$ & $\surd$  & $\surd$            & $\times$     \\

			\bottomrule
		\end{tabular}%
	}
	\caption{\small
		\textit{The applicability of various algorithms under different time-varying scenarios.}
	}
	\label{Applicable Scenarios for Different Algorithms}
\end{table*}

\textit{Overlapping local data sets.}
We further relax the difficulty of federated continual learning, from challenging non-overlapping time-evolving heterogeneous data (e.g.\ in Table~\ref{Ablation Study Table} and Table~\ref{Performance on Different data sets}) to a moderate time-evolving case (i.e.\ the local data evolves with the overlapping, while the size of local data sets stay unchanged).
Table \ref{Performance on overlap local data sets} illustrates the performance of FL baselines and CFL methods, under different degrees of overlapping (the construction details refers to Appendix~\ref{sec:realistic setup appendix}): \emph{the improvement of CFL methods is consistent to our previous results, while the overlap parameter has no obvious connection with the final global test performance.}
We believe that both the overlap degree and the new arriving data influence final performance, and we leave future work on realistic time-evolving FL data sets to gain a better understanding.\looseness=-1

\begin{table*}[!t]
	\small
	\centering
	\resizebox{.6\textwidth}{3.3em}{%
		\begin{tabular}{l c c c c c c}
			\toprule
			\multirow{2}{*}{Overlap} & \multicolumn{2}{c}{FL baselines} & \multicolumn{2}{c}{CFL methods}                                              \\
			\cmidrule(lr){2-3} \cmidrule(lr){4-5}

			                         & FedAvg                           & MimeLite                        & CFL-Core-Set          & CFL-Regularization \\

			\midrule
			$0\%$                    & $80.66 \pm 0.40$                 & $80.78 \pm 0.07$                & \bm{$84.87 \pm 0.11$} & $81.27 \pm 0.38$   \\

			$25\%$                   & $80.16 \pm 0.01$                 & $80.17 \pm 0.11$                & \bm{$84.66 \pm 0.11$} & $80.67 \pm 0.27$   \\

			$50\%$                   & $79.97 \pm 0.23$                 & $80.35 \pm 0.39$                & \bm{$84.59 \pm 0.07$} & $80.91 \pm 0.25$   \\

			\bottomrule
		\end{tabular}%
	}
	\caption{\small
		{Benchmarking FL baselines and CFL methods on different degrees of local data set overlapping}, for training ResNet18 on split-CIFAR10 data set.
		The overlap reduces the degree of non-iid-ness.
		In order to observe a noticeable performance difference, we use a larger data distribution gap between rounds (i.e.\ $\alpha = 0.1$).\looseness=-1
	}
	\label{Performance on overlap local data sets}
\end{table*}

\textit{Convergence curves for different settings.} Figure \ref{Convergence Curves on Different data sets} show convergence curves of CFL and \fedavg on different data sets. The settings are the same as results in Table \ref{Performance on Different data sets}.
Models are trained on partitioned data sets with $\alpha = 0.1$, and all data sets are partitioned to 210 subsets for 7 clients.
To show the difference between different algorithms more clearly, all curves are smoothed by a 1D-Mean-Filter. Results show CFL can converge to a better optimum compare with \fedavg.

\begin{figure*}[!t]
	\centering
	\subfigure[Performance on Fashion-MNIST]{ \includegraphics[width=.3\textwidth,]{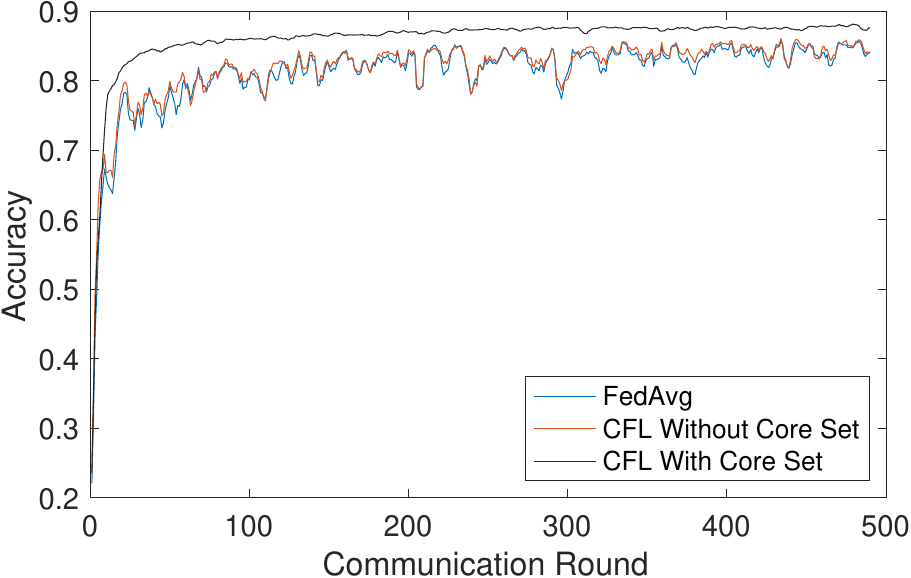}}
	\subfigure[Performance on Cifar10]{ \includegraphics[width=.3\textwidth,]{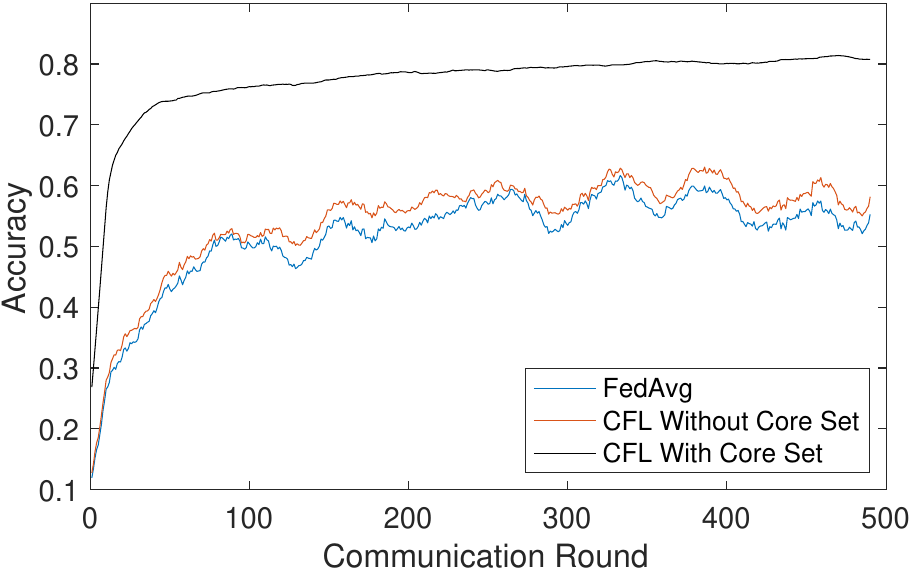}}
	\subfigure[Performance on Cifar100]{ \includegraphics[width=.3\textwidth,]{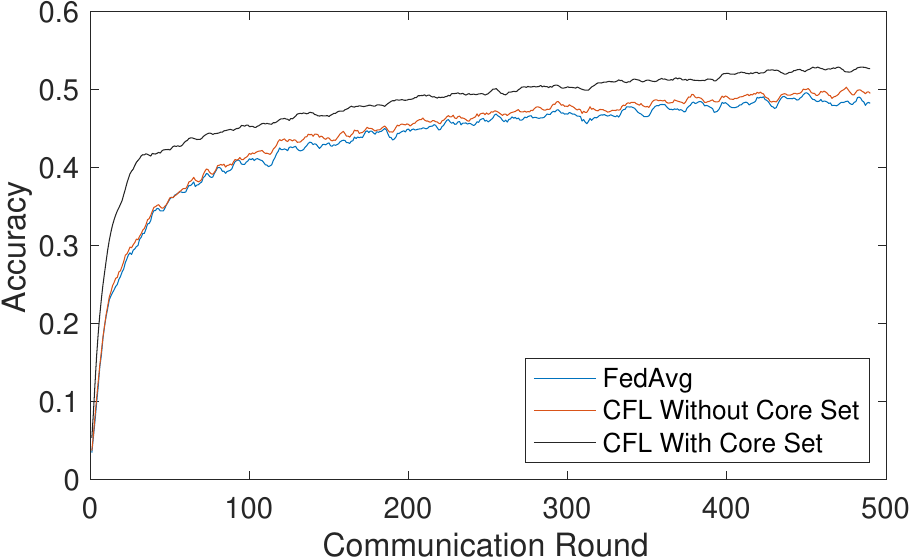}}
	\caption{Models are trained on various data sets with $\alpha = 0.1$. CFL Without Core Set method use regularization methods, and CFL With Core Set methods use core set methods. All these two CFL algorithms use \fedavg as backbone.}
	\label{Convergence Curves on Different data sets}
\end{figure*}

\textit{Investigating resistance of CFL to time-evolving scenarios.}
Figure \ref{Loss on Different data sets} shows the loss curve of CFL-Regularization and \fedavg on different data sets. Models are trained on partitioned data sets with $\alpha = 0.1$, and all data sets are partitioned to 210 subsets for 7 clients.
The first column shows the loss curve of the first 300 rounds, and the second column shows the loss of some chosen details. Because of the non-iidness of local data sets, the loss will suddenly rise. Notice that \emph{CFL-Regularization has an apparent mitigation effect on this situation}.

\textit{Difference between training and test loss.} Figure \ref{Difference between train and test loss} show the loss on global test data and past appeared training data. We evaluate the model with stateful clients, set $\alpha = 0.2$, and use \fedavg algorithm. We show that \emph{there is no significant difference between loss value on global test data and past appeared training data.}

\begin{figure*}[!t]
	\centering
	\subfigure[Loss on global test data and past appeared train data]{ \includegraphics[width=.3\textwidth,]{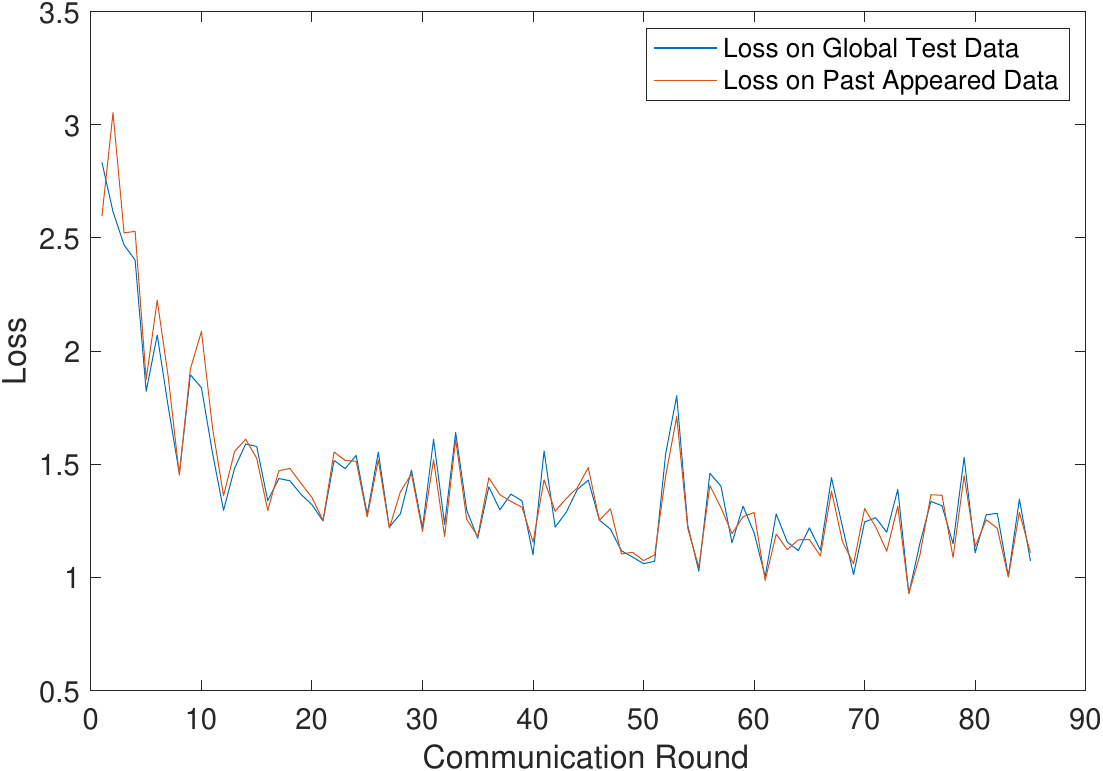}}
	\subfigure[Accuracy on global test data and past appeared train data]{ \includegraphics[width=.3\textwidth,]{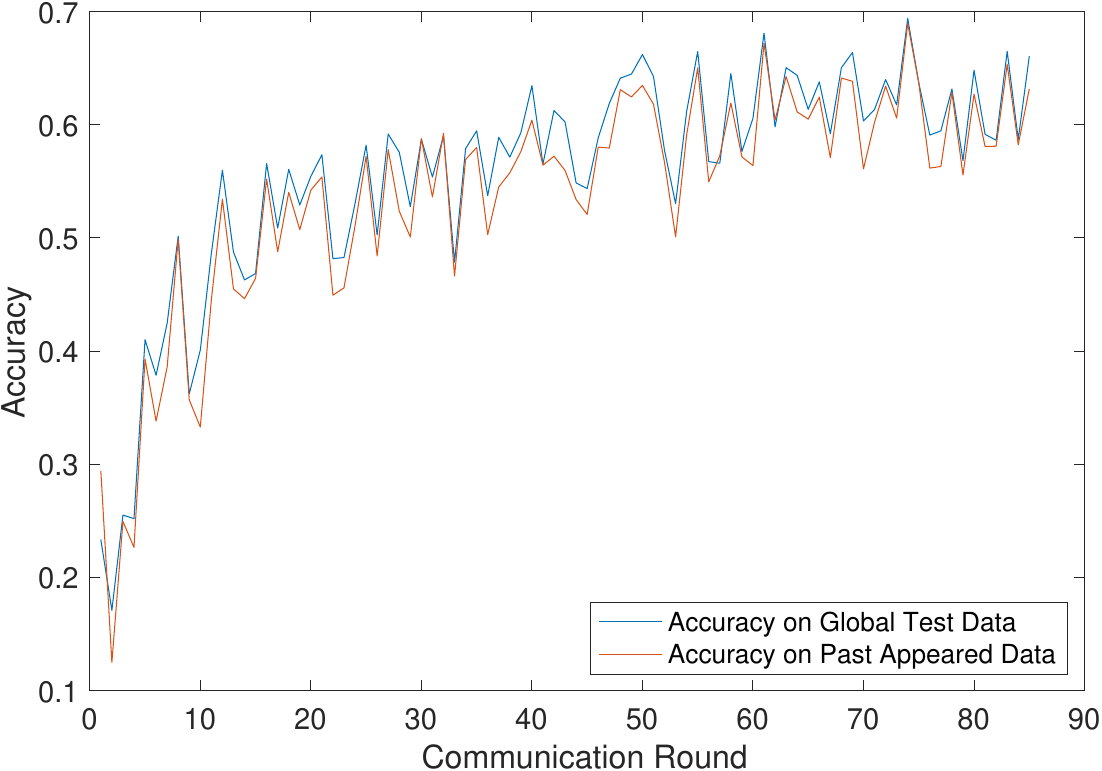}}
	\caption{Evaluation on global test data and past appeared training data. We trained ResNet18 on split-Cifar10 data set with $\alpha = 0.2$ for 85 rounds.}
	\label{Difference between train and test loss}
\end{figure*}

\begin{figure*}[!t]
	\centering
	\subfigure[Loss on Fashion-MNIST]{ \includegraphics[width=.23\textwidth,]{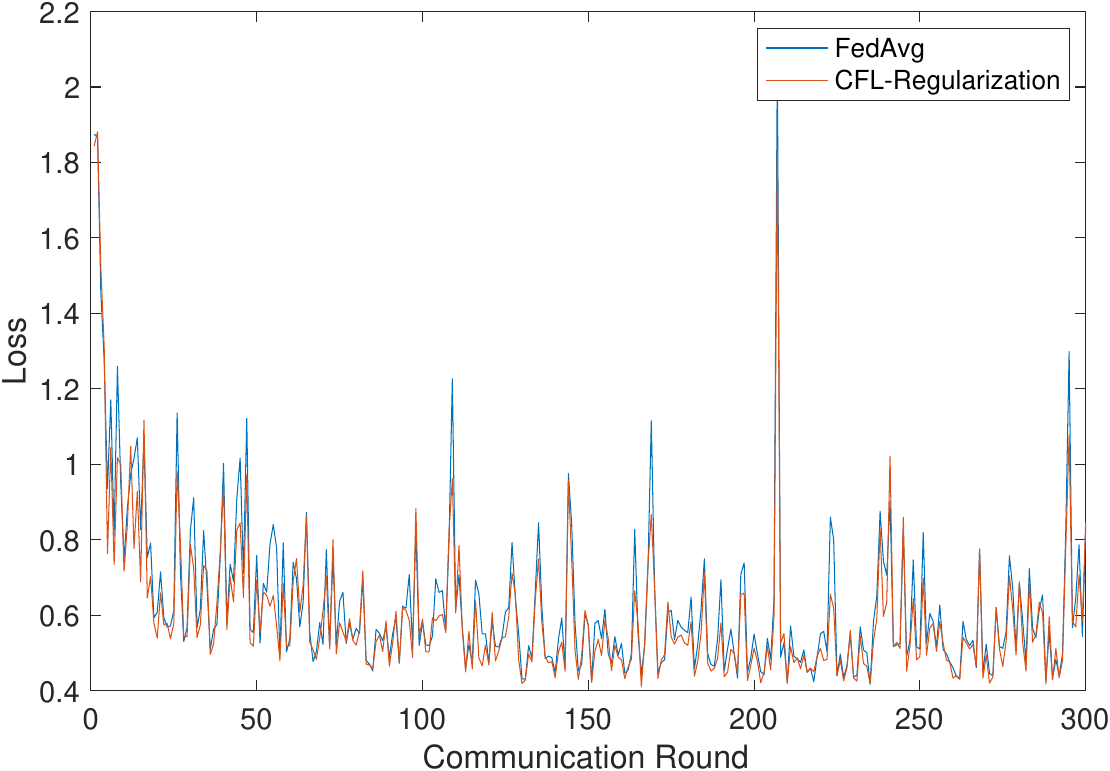}}
	\subfigure[Loss on Fashion-MNIST (Local)]{ \includegraphics[width=.23\textwidth,]{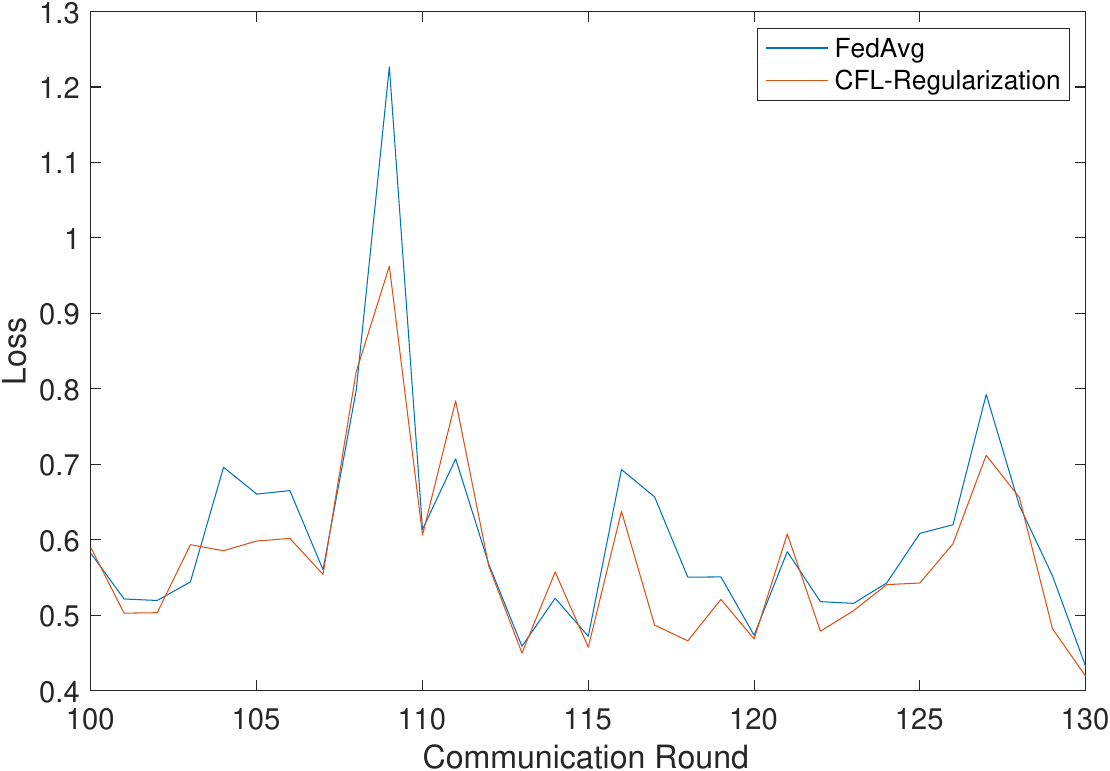}}
	\subfigure[Loss on Cifar10]{ \includegraphics[width=.23\textwidth,]{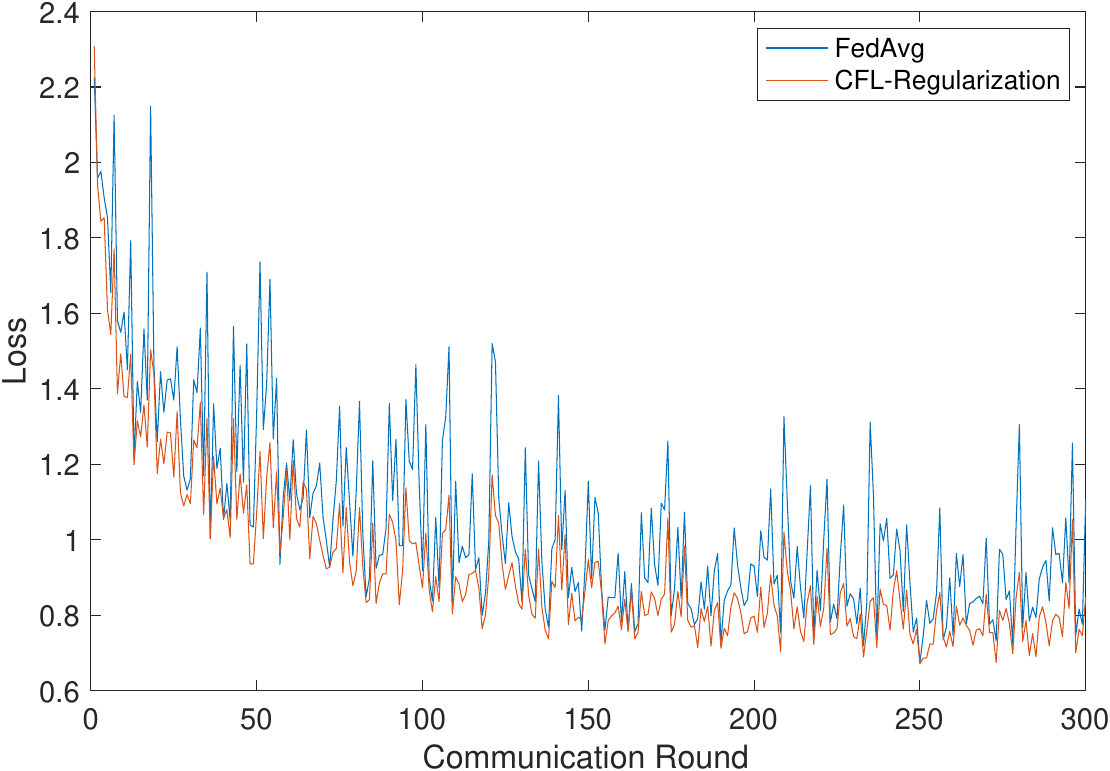}}
	\subfigure[Loss on Cifar10 (Local)]{ \includegraphics[width=.23\textwidth,]{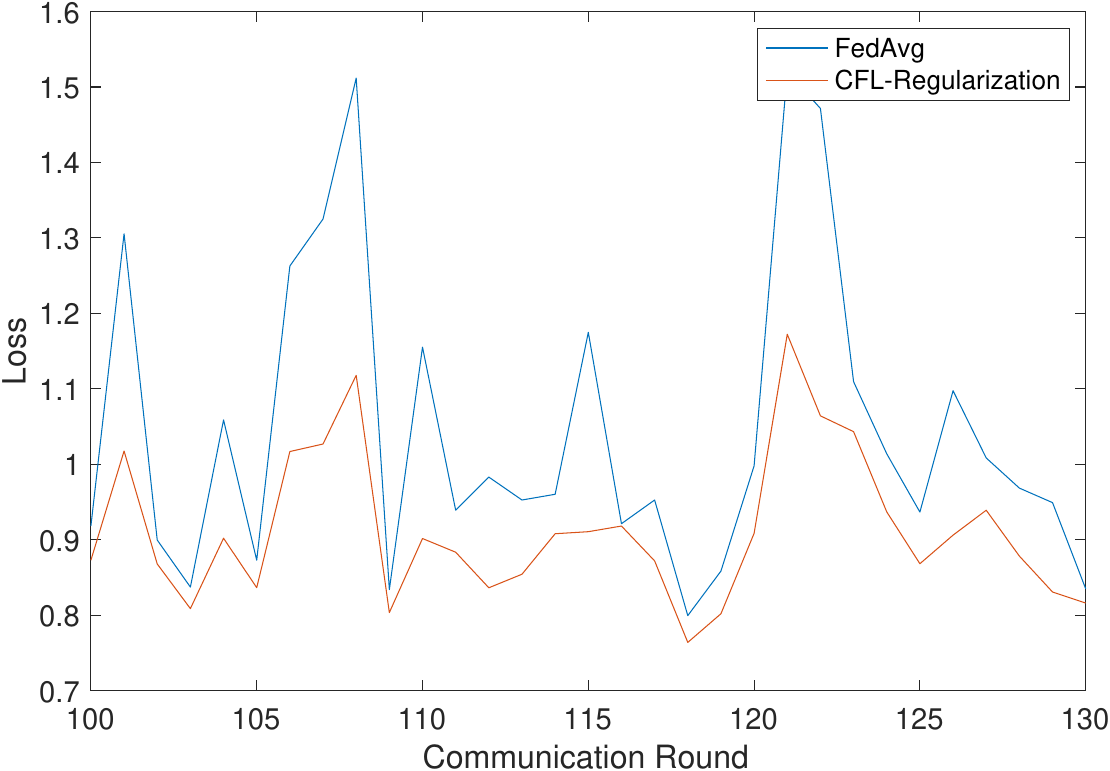}}
	\caption{Models are trained on split-Fashion-MNIST and split-Cifar10 data sets with $\alpha = 0.1$. The loss is evaluated on global test data sets. Left column is the full curve of 300 rounds, and figures in right column are partially enlarged curves.}
	\label{Loss on Different data sets}
\end{figure*}

\subsection{Algorithms}
\label{sec:algorithms}

\begin{algorithm}[!t]
	\begin{algorithmic}[1]
		\Require{$S$: total data set split by class, $T$: rounds, $K$, $N$, $\alpha$, $\beta$}
		\Ensure{$D_{final}$: split data set}

		\myState{$D \gets splitData(S, K, N, \alpha)$}
		\myState{$D_{final} \gets []$}
		\For{$i = 1, 2, ..., K$}
		\myState{$S_i \gets splitByClass(D[i])$}
		\myState{$N_{local} \gets len(D_m) / T$}
		\myState{$D_{i} \gets splitData(S_i, cluster\_num, N_{local}, \beta)$}
		\myState{$D_{final} \gets D_{final} \cup D_{i}$}
		\EndFor
	\end{algorithmic}
	\mycaptionof{algorithm}{\small SplitdataMain}
	\label{SplitdataMain}
\end{algorithm}

\begin{algorithm}[!t]
	\begin{algorithmic}[1]
		\Require{$\theta$: weights for different classes, $i$: the class that should be deleted}
		\Ensure{$\theta$: renormalized weights}

		\For{$j = 1,2,...,len(\theta), j \not = i$}
		\myState{$\theta[j] \gets \theta[j] / sum(theta / theta[i])$}
		\EndFor
	\end{algorithmic}
	\mycaptionof{algorithm}{\small renormalize}
	\label{renormalize}
\end{algorithm}

\begin{algorithm}[!t]
	\begin{algorithmic}[1]
		\Require{$S$: a list of data sets split by labels, $M$: the number of clients, $N$: the size of local data sets, $\alpha$: the Dirichlet distribution parameter}
		\Ensure{$D$: split data sets}

		\myState{$D \gets []$}
		\myState{$G \gets [0, 1, 2, ..., len(S)]$}
		\For{$m = 1, 2, ... , M$}
		\myState{$p = [p_1, p_2, ..., p_{len(S)}]$, where $p_{t, i}$ denotes the fraction of class $i$ in total data set.}
		\myState{$\theta \gets Dirichlet(\alpha, p)$}
		\myState{$D_m \gets \emptyset$}
		\While{$len(D_m) < N$}
		\myState{$i \gets multinomial(\theta, 1)$}
		\myState{$y \gets G[i]$}
		\myState{$data \gets uniform(S[y])$}
		\myState{$D_m \gets D_m \cup \{data\}$}
		\myState{$S[y] \gets S[y] / data$}
		\If{len(S[y]) == 0}
		\myState{$G \gets G / y$}
		\myState{$\theta \gets renormalize(\theta, i)$}
		\EndIf
		\EndWhile
		\myState{$D.append(D_m)$}
		\EndFor
	\end{algorithmic}
	\mycaptionof{algorithm}{\small Data splitting}
	\label{Data spliting algorithm}
\end{algorithm}




\end{document}